\newtheorem{theorem}{Theorem}
\newtheorem{definition}{Definition}
\newtheorem{claim}{Claim}
\newtheorem{lemma}{Lemma}
\newtheorem{example}{Example}
\newcommand{\poly}{\text{poly}}
\newcommand{\abs}[1]{|#1|}
\newcommand{\card}[1]{|#1|}
\newcommand{\oracle}{\mathcal{O}}
\newcommand{\E}{\mathbf{E}}
\newcommand{\R}{\mathbb{R}}
\newcommand{\N}{\mathbb{N}}
\newcommand{\Dn}{D^\text{n}}
\newcommand{\Dc}{D^\text{c}}
\newcommand{\stt}{\textnormal{s.t.}}
\newenvironment{cpf}
{\begin{trivlist} \item[] {\em Proof of Claim. }}
	{$\hfill\diamond$ \end{trivlist}}
\title{Clustering with Queries under Semi-Random Noise}
\author{Alberto Del Pia
	\thanks{Department of Industrial and Systems Engineering \& Wisconsin Institute for Discovery,
		University of Wisconsin-Madison, Madison, WI, USA.
		E-mail: {\tt delpia@wisc.edu}.}
	\and
	Mingchen Ma
	\thanks{Department of Computer Sciences,
		University of Wisconsin-Madison, Madison, WI, USA.
		E-mail: {\tt mma54@wisc.edu}.}
	\and 
	Christos Tzamos
	\thanks{Department of Computer Sciences,
		University of Wisconsin-Madison, Madison, WI, USA.
		E-mail: {\tt tzamos@wisc.edu}.}
}
\begin{document}
	
	\maketitle
	\begin{abstract}
	The seminal paper by Mazumdar and Saha \cite{mazumdar2017clustering} introduced an extensive line of work on clustering with noisy queries. Yet, despite significant progress on the problem, the proposed methods depend crucially on knowing the exact probabilities of errors of the underlying fully-random oracle. 
	In this work, we develop robust learning methods that tolerate general semi-random noise obtaining qualitatively the same guarantees as the best possible methods in the fully-random model.
	
	More specifically, given a set of $n$ points with an unknown underlying partition, we are allowed to query pairs of points $u,v$ to check if they are in the same cluster, but with probability $p$, the answer may be adversarially chosen. We show that information theoretically $O\left(\frac{nk \log n} {(1-2p)^2}\right)$ queries suffice to learn any cluster of sufficiently large size. Our main result is a computationally efficient algorithm that can identify large clusters with $O\left(\frac{nk \log n} {(1-2p)^2}\right) + \text{poly}\left(\log n, k, \frac{1}{1-2p} \right)$ queries, matching the guarantees of the best known algorithms in the fully-random model. As a corollary of our approach, we develop the first parameter-free algorithm for the fully-random model, answering an open question in \cite{mazumdar2017clustering}.
	\end{abstract}

	\section{Introduction}
	
In a typical graph clustering problem, we are given a graph $G=(V,E)$ and we want to partition the vertices $V$ into several clusters that satisfy certain properties. Clustering is ubiquitous in machine learning, theoretical computer science and statistics as this simple formulation has many applications in both theory and practice. Many theoretical problems can be formulated as clustering and it is a common NP-complete problem. Moreover, many practical problems where we want to put data or people that are similar together can be viewed as clustering, like record linkage and entity resolution \cite{fellegi1969theory} in databases, or community detection in social networks. 

However, in many applications, one may not have direct access to the full graph, and it may be costly to query the relationship between two nodes. For example, in entity resolution where the goal is to find records in a database that refer to the same underlying entity, it is common to use crowdsourcing to develop human in the loop systems for labeling the edges \cite{green2020clustering,karger2011iterative,wang2012crowder,dalvi2013aggregating,gokhale2014corleone,vesdapunt2014crowdsourcing,mazumdar2017theoretical}. Asking humans requires effort, time, and money, so one would want to cluster the items efficiently without asking workers to compare every pair of items. 

%This means we don't have data in advance but we will collect data from workers.
%The workers are asked if two items are represented the same entity. In this application, to save time and money, we want to cluster the items efficiently without asking workers to compare every pair of items. 

Motivated by these applications, the work of \cite{mazumdar2017clustering}
introduced a theoretical model of clustering with queries. 
In this model, we don't have access to the edges of the graph in advance but may query a similarity oracle that may not always give the correct answer. The problem is defined as follows.

\begin{definition}\label{model 1}(Clustering with a faulty oracle)
	We are given a set of points $V = [n]$, which contains $k$ latent clusters $V^*_i \subseteq V$ for $i \in [k]$ such that $\bigcup_{i \in [k]} V^*_i = V$ and for every $1 \le i<j \le k$, $V^*_i \cap V^*_j = \emptyset$. For every pair of points $u,v \in V$, the edge $(u,v)$ is labeled by $1$, if $u,v$ are in the same cluster, and is labeled by $0$, if $u,v$ are in different clusters.
	The number $k$ and the clusters $V^*_i$ for $ i \in [k]$ are unknown. We are given an oracle $\oracle: V \times V \to \{0,1\}$ to query point pairs of $V$. Every time the oracle $\oracle$ is invoked it takes a pair of points $u,v$ as an input and outputs a label for the edge $(u,v)$ which may be erroneous. Our goal is to recover the latent clustering with high probability, minimizing the queries to the oracle $\oracle$.   
\end{definition}

While the above formulation of Definition~\ref{model 1} does not specify how the errors are introduced by the oracle, the work of \cite{mazumdar2017clustering} focused specifically on a fully-random faulty oracle that gives incorrect answers with a fixed probability of error $p \in [0,1/2)$ known in advance.

\begin{definition}%(Clustering with a Fully-Random Faulty Oracle)
	(Fully-random model of clustering with a faulty oracle)
	Under the fully-random model, the oracle $\oracle$ of Definition~\ref{model 1} behaves as follows. There is a known error parameter $p \in [0,1/2)$ such that, for every point pair $u,v$, $\oracle(u,v)$ outputs the true label of $(u,v)$ with probability $1-p$ and outputs the wrong label of $(u,v)$ with probability $p$.
\end{definition}

\cite{mazumdar2017clustering} showed that in this model $\Omega\left(\frac{nk}{(1-2p)^2}\right)$ queries are necessarily needed to recover the underlying clustering while $O\left(\frac{nk\log n}{(1-2p)^2}\right)$ queries suffice to learn any large enough cluster. They also designed a computationally-efficient algorithm with query complexity $O\left(\frac{nk^2\log n}{(1-2p)^4}\right)$ to recover large clusters. Since then, follow-up papers \cite{green2020clustering,pmlr-v134-peng21a} extended their results and obtained efficient algorithms with lower query complexity $O\left(\frac{nk \log n} {(1-2p)^2}\right) + \text{poly}\left(\log n, k, \frac{1}{1-2p} \right)$. 
%We will present these results in more detail.

While these works pin down the query complexity of the problem in the fully-random model, they crucially rely both on the fully-random model and the exact knowledge of the error probability parameter $p$. In fact, \cite{mazumdar2017clustering} posed as an open problem whether one can design a parameter-free algorithm with the same guarantees.

Motivated by these shortcomings of the fully-random model, our goal in this work is to obtain more robust algorithms that work beyond the fully-random model and do not rely on the knowledge of the error probabilities. Removing these crucial assumptions will enable broader applicability of the algorithms as in practice, the amount of noise may depend on the particular nodes being compared and may vary significantly from query to query making it impossible to know it or predict it in advance. 

Our work focuses on a significantly more general semi-random model that allows the oracle answers to be given adversarially with some probability.

%However, the fully-random model is not realistic in practice. There are some reasons. First, in most cases the probability that we get a wrong answer from the oracle can be different in different point pairs. Second, it is impossible to obtain the exact parameter $p$ in practice. In particular, in \cite{mazumdar2017clustering},  if we can design an efficient algorithm for the fully-random model even if we do not know the parameter $p$. 

\begin{definition}\label{def semi}(Semi-random model of clustering with a faulty oracle)
	Under Definition~\ref{model 1}, the oracle $\oracle$ is defined in the following way. There is a known error parameter $p \in (0,1/2)$ such that, for every point pair $u,v$, with probability $1-p$, $\oracle(u,v)$ outputs the true label of $(u,v)$ and with probability $p$, $(u,v)$ is corrupted and $\oracle(u,v)$ outputs an arbitrary label given by an adversary, depending on the sequential output of $\oracle$ and the latent clustering.
	%who has access to the previous output of $\oracle$ and the latent clustering.
\end{definition}

An important special case of the semi-random model corresponds to the case where 
every edge $(u,v)$ has a fixed and unknown probability of error $p_{(u,v)}$ chosen in advance that is upper bounded by the known bound $p$. We refer to this case as \emph{non-adaptive semi-random noise} and note that the more general Definition~\ref{def semi} can handle more adaptive instantiations of noise where the answers of the oracle may depend on the answers given in advance.

The main contribution of our work is the design of novel algorithms that can tolerate semi-random noise essentially matching the guarantees obtained for the fully-random model. Before formally presenting our results, we give an overview of the existing methods and guarantees for the fully-random model.

%	\note{Here we define the non-adaptive semi-random model.}

% A special case of the semi-random model is the following non-adaptive semi-random model.
% \begin{definition}(Non-adaptive semi-random model of clustering with the faulty oracle)
	% 	Under definition~\ref{model 1}, the oracle $\oracle$ is defined in the following way. There is a set of unknown parameter $\{p_{uv} \ge 0 \mid u,v \in V\}$ and
	% 	a known error parameter $p \in (0,1/2)$ such that, for every point pair $u,v$, with probability $1-p_{uv}$, $\oracle(u,v)$ outputs the true label of $(u,v)$ and with probability $p_{uv}$, $\oracle(u,v)$ outputs the wrong label of $(u,v)$, where $0 \le p_{uv} \le p$. 
	% \end{definition}
% Throughout the paper, the semi-random model always refers to Definition~\ref{def semi} if we do not emphasize it is non-adaptive.

% Although it seems that our model is very close to the semi-random model, the slight difference can make almost all existing algorithms for fully-random model fail in the semi-random model. In particular, we will show later some common techniques for fully-random model will fail even for $k=3$. Thus, there some natural questions.

% \emph{ How many queries do we need to recover the underlying clustering under the semi-random model?}

% \emph{How many queries do we need to design an efficient algorithm to recover the underlying clustering under the semi-random model?}

\subsection{Prior work on the fully-random model}

In Table~\ref{table 1}, we summarize previous results, as well as our main results, for the fully-random model and the semi-random model.
\begin{table}[H]\centering
	\begin{tabular}{|c|c|c|c|}
		\hline   \textbf{Model} & \textbf{Query Complexity} & \textbf{Reference} & \textbf{Remark} \\   
		\hline Fully-random & $\Omega\left(\frac{nk}{\left(1-2p\right)^2}\right)$ &  \cite{mazumdar2017clustering} & Lower bound  \\ 
		& $O\left(\frac{nk\log n}{\left(1-2p\right)^2}\right)$ & \cite{mazumdar2017clustering}   & Time inefficient\\
		& $O\left(\frac{nk^2\log n}{\left(1-2p\right)^4} \right)$ &  \cite{mazumdar2017clustering} & \\  
		& $O\left(\frac{n\log n}{\left(1-2p\right)^2}+ \frac{\log^2n}{\left(1-2p\right)^6}  \right)$ &  \cite{green2020clustering}   & $k=2$\\
		& $O\left(\frac{nk\log n}{\left(1-2p\right)^2}+ \frac{k^4 \log^2n}{\left(1-2p\right)^4}  \right)$ &   \cite{pmlr-v134-peng21a} & Nearly balanced    \\
		& $O\left(\frac{nk\log n}{\left(1-2p\right)^2}+ \frac{k^{10} \log^2n}{\left(1-2p\right)^4}  \right)$ &  \cite{pmlr-v134-peng21a} &  $k$ is known\\
		& $O\left(\frac{nk\log n}{\left(1-2p\right)^2}+ \frac{k^9\log k \log^2 n}{\left(1-2p\right)^{12}} \right)$ & Theorem~\ref{th fully} & Parameter-free \\
		\hline   Semi-random & $O \left( \frac{nk\log n}{\left(1-2p\right)^2} \right)$ & Theorem~\ref{th estimation semi} & Time inefficient\\
		& $O\left(\frac{nk\log n}{\left(1-2p\right)^2}+ \frac{k^9\log k \log^2 n}{\left(1-2p\right)^{12}} \right)$ & Theorem~\ref{th final} & \\
		& $O\left(\frac{n\log n}{\left(1-2p\right)^2}+ \frac{\log^2n}{\left(1-2p\right)^6}\right)$& Theorem~\ref{th k=2} & $k=2$  \\
		\hline
	\end{tabular}
	\caption{Query complexity of algorithms under the fully-random and the semi-random model}
	\label{table 1}   
\end{table}

Previous work that studied the query complexity of the clustering problem focused on the fully-random model. \cite{mazumdar2017clustering} gave an information theoretical algorithm that can recover all clusters of size $\Omega\left(\frac{\log n}{(1-2p)^2}\right)$ with query complexity $O\left(\frac{nk\log n}{(1-2p)^2}\right)$, which matches the information theoretical lower bound of $\Omega\left(\frac{nk}{(1-2p)^2}\right)$ in the same paper within a $O(\log n)$ factor. They also designed an efficient algorithm that can recover all clusters of size at least $\Omega(\frac{k\log n}{(1-2p)^4})$ with query complexity $O(\frac{nk^2\log n}{(1-2p)^4})$. Followed by their work, \cite{green2020clustering} gave an efficient algorithm with an improved query complexity $O\left(\frac{n\log n}{\left(1-2p\right)^2}+ \frac{\log^2n}{\left(1-2p\right)^6}  \right)$.
More recently, \cite{pmlr-v134-peng21a} designed an efficient algorithm that recovers all clusters of size $\Omega\left(\frac{k^4\log n}{(1-2p)^2}\right)$ with query complexity $O\left(\frac{nk\log n}{\left(1-2p\right)^2}+ \frac{k^{10} \log^2n}{\left(1-2p\right)^4}  \right)$ for known $k$. For every constant $k$, their query complexity matches the information lower bound within a $O(\log n)$ factor. Their algorithm can even exactly recover the underlying clustering with query complexity $O\left(\frac{nk\log n}{\left(1-2p\right)^2}+ \frac{k^4 \log^2n}{\left(1-2p\right)^4}  \right)$ if each underlying cluster has size $\Omega\left(\frac{n}{k}\right)$.

\subsection{Our contributions}
We now present our contributions for the semi-random model in more detail.

\paragraph{An information-theoretically tight algorithm} We first give an information theoretical algorithm for the problem presented in Section~\ref{sec information}.

\begin{theorem}\label{th estimation semi}
	There is an algorithm \textsc{Estimation}$(V,p)$ such that	under the semi-random model, \textsc{Estimation}$(V,p)$ has query complexity $O\left(\frac{nk\log n}{(1-2p)^2}\right)$ and recovers all clusters of size at least $\Omega\left(\frac{\log n}{(1-2p)^2}\right)$ with probability at least $1-1/\poly(n)$.
\end{theorem}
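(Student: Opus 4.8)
The plan is to reduce the semi-random instance to a robust majority-voting scheme that certifies co-clustering of pairs of points, and then to grow clusters greedily from high-degree "anchor" vertices. First I would fix a target cluster size threshold $s = \Theta\!\left(\frac{\log n}{(1-2p)^2}\right)$ and argue that any latent cluster $V^*_i$ with $|V^*_i| \ge s$ can be recovered. The core primitive is the following: for two points $u,v$, sample a common reference set $S$ of size $m = \Theta\!\left(\frac{\log n}{(1-2p)^2}\right)$ uniformly from $V$, query $\oracle(u,w)$ and $\oracle(v,w)$ for all $w \in S$, and declare $u \sim v$ if the fraction of $w\in S$ on which the two answer-vectors agree exceeds the threshold $\frac12 + \frac{(1-2p)}{4}$ (or a similar constant-gap threshold). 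In the fully-random model the agreement probability on a random $w$ is $\ge 1 - 2p(1-p) > \tfrac12 + \tfrac{(1-2p)^2}{2}$ when $u,v$ are co-clustered and is bounded away from this when they are not, provided both clusters meet the size threshold so that $S$ lands in them with non-negligible probability; a Chernoff bound over the $m$ samples then gives correctness with probability $1 - 1/\poly(n)$ per pair.

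The key extra work relative to the fully-random setting is handling the adversarial $p$-fraction of corrupted edges. Here I would use the fact that corruptions are, per edge, an independent event of probability $p$: even though the adversary may choose the corrupted label adaptively based on the whole transcript and the latent clustering, the \emph{location} of each corruption is a fresh coin flip. Thus the indicator "$(u,w)$ is corrupted" is Bernoulli$(p)$ independently across $w$, and conditioning on the non-corrupted coordinates, the agreement statistic is a sum of independent bounded terms whose mean is shifted by at most an additive $O(p)$ from the fully-random value. Since we only lose an $O(p)$ term where the fully-random gap is $\Omega((1-2p)^2)$ — wait, that is not automatically enough, so in fact the right move is to compare $u$ and $v$ through the \emph{oracle's agreement} rather than the raw labels, for which the corruption on edge $(u,w)$ flips agreement with probability at most $p$, so the expected agreement is within $[q - 2p,\, q]$ where $q \ge 1 - 2p(1-p)$ in the same-cluster case; one checks $1 - 2p(1-p) - 2p$ still exceeds the cross-cluster upper bound $2p(1-p) + 2p$ by a $\Theta((1-2p)^2)$ margin for $p$ bounded away from $1/2$, and for the general bound one absorbs constants. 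The main obstacle I anticipate is precisely this margin computation: making the same-cluster vs.\ different-cluster agreement intervals provably disjoint with a gap that is $\Omega((1-2p)^2)$ uniformly in $p \in (0,1/2)$, so that $m = \Theta\!\left(\frac{\log n}{(1-2p)^2}\right)$ reference points suffice.

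Given the co-clustering primitive, the algorithm \textsc{Estimation}$(V,p)$ proceeds in phases: repeatedly pick an arbitrary remaining point $u$, test $u$ against every other remaining point via the primitive, and collect all points declared $u \sim u$ as a candidate cluster $C$; if $|C| \ge s$ output $C$ and remove it, otherwise remove $u$ alone (it belongs to a small cluster we are not required to recover, or it is an outlier absorbed incorrectly, which we handle by noting small clusters contribute negligibly). By a union bound over all $O(n^2)$ pairs tested across all phases, all primitive calls are simultaneously correct with probability $1 - 1/\poly(n)$, on which event every recovered $C$ equals exactly the latent cluster containing its anchor. The query complexity is $O(n)$ primitive calls per phase $\times$ $O(m) = O\!\left(\frac{\log n}{(1-2p)^2}\right)$ queries per call $\times$ $O(k)$ phases (each large phase removes a full cluster, and there are at most $k$ clusters, while small-cluster/outlier phases can be bounded similarly), giving $O\!\left(\frac{nk\log n}{(1-2p)^2}\right)$ total, as claimed. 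I would close by noting this is information-theoretic only in query count, not running time, since the per-phase all-pairs testing is the bottleneck — which is exactly why the later efficient algorithm is needed.
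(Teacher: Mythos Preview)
Your proposal has a genuine gap: the pairwise agreement primitive you rely on is \emph{not} robust to the adaptive semi-random adversary of Definition~\ref{def semi}. Your margin computation claims the same-cluster agreement lower bound $1-2p(1-p)-2p$ exceeds the cross-cluster upper bound $2p(1-p)+2p$, but this inequality reduces to $4p^{2}-8p+1>0$, which already fails for all $p > 1-\sqrt{3}/2 \approx 0.134$; ``absorbing constants'' cannot repair a negative gap. More decisively, the paper itself constructs (Example~\ref{ex general} in Appendix~\ref{apendix sec3}) an adaptive adversary that, for any $p \ge 1-\sqrt{2}/2 \approx 0.293$, makes $\Pr(\oracle(u,w)=\oracle(v,w))=\tfrac12$ for \emph{every} triple $u,v,w$, so no threshold on the agreement statistic can distinguish co-clustered from separated pairs. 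The flaw in your reasoning is that ``the location of each corruption is a fresh coin flip'' stops helping once you look at a \emph{product} of two oracle answers: the adversary may condition the label it places on a corrupted edge $(v,w)$ on the already-revealed value $\oracle(u,w)$, correlating the two so as to erase the signal. Disagreement counting is salvageable only for $k=2$ under the weaker non-adaptive model (Theorem~\ref{th k=2}), and Example~\ref{ex nonadaptive} shows it fails even non-adaptively once $k\ge 3$.

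The paper's proof avoids local pairwise statistics entirely. Its \textsc{Estimation} procedure maintains a growing sample set $T$, queries all pairs inside $T$, and uses \textsc{FindBigClusters}$(T)$ to extract the \emph{largest} subset $S\subseteq T$ with no negative cut, i.e.\ $\min_{A\subseteq S}\sum_{u\in A,\,v\in S\setminus A}(2\oracle(u,v)-1)>0$. Lemmas~\ref{lm clique} and~\ref{lm cut} show that, with high probability, (i) every true subcluster $T\cap V^*_i$ of size $\Omega(\log n/(1-2p)^2)$ has no negative cut, while (ii) every set of that size that is \emph{not} a subcluster does have one; hence the procedure returns exactly the large subclusters of $T$ (Theorem~\ref{th Process}). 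The robustness comes from the one-sidedness of the semi-random noise on a \emph{single} edge weight $w_{uv}=2\oracle(u,v)-1$: on uncorrupted edges the weight equals the truth, so cut values can only move toward the truth relative to the worst case, and a union bound over all $2^{|S|}$ cuts (rather than over pairs) is what pins the adversary down. This step is the source of the computational inefficiency you allude to at the end---finding the largest no-negative-cut subset is not polynomial---whereas your all-pairs testing would in fact run in polynomial time if it were correct. Once a pure subcluster is in hand, \textsc{DegreeTest} (Lemma~\ref{lm test}) recovers the full cluster, and the $O\!\left(\frac{nk\log n}{(1-2p)^2}\right)$ query accounting proceeds roughly as you sketched.
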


Theorem~\ref{th estimation semi} shows even under the semi-random model, $O\left(\frac{nk\log n}{(1-2p)^2}\right)$ queries suffice to learn all clusters of size $\Omega\left(\frac{\log n}{(1-2p)^2}\right)$. This matches the performance of the information theoretical algorithm proposed in \cite{mazumdar2017clustering}. Furthermore, since the fully-random model is a special case of our semi-random model and the information theoretical lower bound for the fully-random model is $\Omega\left(\frac{nk}{(1-2p)^2}\right)$, our query complexity matches the information theoretical lower bound within a $O(\log n)$ factor.

While Theorem~\ref{th estimation semi} gives a nearly-tight information theoretical bound for the problem, the underlying algorithm is not computationally efficient. This is expected as there is a conjectured computational-statistical gap even in the case of fully-random noise \cite{pmlr-v134-peng21a}.

\paragraph{A computationally efficient algorithm}  We next turn to the question of what can be achieved using a computationally efficient algorithm. We obtain the following performance guarantee.

\begin{theorem} \label{th final}
	There is an algorithm \textsc{Clustering}$(V,p)$, such that under the semi-random model, with probability at least $1-1/\poly(n)$, \textsc{Clustering}$(V,p)$ recovers all $V^*_i$, such that $\card{V^*_i} = \Omega\left(\frac{k^4\log n}{\left(1-2p\right)^{6}}\right) $ in polynomial time. Furthermore, the query complexity of \textsc{Clustering}$(V,p)$ is $O\left(\frac{nk\log n}{\left(1-2p\right)^2}+ \frac{k^9\log k \log^2 n}{\left(1-2p\right)^{12}} \right)$.
\end{theorem}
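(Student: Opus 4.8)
The plan is to recover the large clusters by an iterative ``peeling'' scheme run for $O(\log n)$ rounds: in each round we extract all clusters that currently occupy at least a $\Omega(1/k)$ fraction of the set of not-yet-classified points. The point of organizing the algorithm this way is the query budget. In a single round we only query \emph{all} pairs inside a small randomly sampled ``core,'' together with a few queries from every remaining point to representatives of the clusters found in that round; so the costly all-pairs work happens only on cores of size $\poly(k,\log n,1/(1-2p))$ and stays within the additive term, while the per-point work, summed over the geometrically shrinking rounds, telescopes to $O(nk\log n/(1-2p)^2)$.

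The single structural fact about the semi-random model that drives everything is that, no matter how the adversary behaves, for any pair $u,v$ we have $\Pr[\oracle(u,v)=1]\ge 1-p$ if $u,v$ are in the same cluster and $\Pr[\oracle(u,v)=1]\le p$ if they are in different clusters, and these events are independent across distinct pairs. Hence on any sampled set the expected query matrix is always a planted block model whose intra-/inter-cluster gap is at least $1-2p$, perturbed by an adversarial matrix that is entrywise bounded by $1$ and supported on a random $p$-fraction of the entries. A round on an active set $V'$ then consists of: (i) sampling a core $P\subseteq V'$, $\card P=m$ with $m$ a suitable polynomial in $k,\log n,1/(1-2p)$, so that w.h.p.\ every cluster of relative size $\ge 1/(2k)$ in $V'$ has $\ge \tau:=\Omega(m/k)$ representatives in $P$, and querying all $\binom{m}{2}$ pairs inside $P$; (ii) running a polynomial-time robust clustering subroutine on this $m\times m$ matrix to recover the partition of $P$ into clusters up to a small per-block error --- by the structural fact this is recovery of a planted block model under a constant fraction of adversarially chosen, randomly located corruptions, and it succeeds because every block we need has size $\ge\tau=\poly(k,\log n,1/(1-2p))$, large enough that its genuine signal, of order $(1-2p)\tau$, dominates the adversarial perturbation --- and keeping only the ``heavy'' output blocks of size $\ge\tau$; (iii) querying each point of $V'$ (pivots included) against $\Theta(\log n/(1-2p)^2)$ distinct members of each heavy block and, via the structural fact and a Chernoff bound, assigning it to the unique heavy block with which it agrees on more than half of those queries, or leaving it unassigned; (iv) outputting, for each heavy block, the set of points of $V'$ assigned to it as a recovered cluster, deleting those points, and recursing, for a total of $\lceil\log_2 n\rceil$ rounds.

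The bookkeeping splits into correctness and query count. For correctness, the classification in (iii) is clean: with high probability it never misassigns a point, so each recovered cluster is exactly some true cluster restricted to $V'$, and in particular no cluster is ever removed partially --- this is what prevents errors from compounding over the $O(\log n)$ rounds. Moreover every cluster $V^*_i$ with $\card{V^*_i}=\Omega(k^4\log n/(1-2p)^6)$ is eventually recovered: the un-recovered clusters in a round are precisely those of current relative size $<1/(2k)$, of which there are fewer than $k$, so together they are less than half the active set and $\card{V'}$ at least halves each round; after $O(\log n)$ rounds $\card{V'}<2k\card{V^*_i}$, at which point $V^*_i$ is heavy and is output exactly. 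A union bound over the $O(\log n)$ rounds, the $O(k)$ heavy blocks per round and the $n$ points gives success probability $1-1/\poly(n)$. For the query count, step (i) costs $O(m^2)$ per round and step (iii) costs $O(\card{V'}\cdot k\log n/(1-2p)^2)$ per round; summing over rounds using $\card{V'}$ halving, and choosing $m$ to just meet the requirement of the robust subroutine in step (ii), yields $O\!\big(\tfrac{nk\log n}{(1-2p)^2}+\tfrac{k^9\log k\log^2 n}{(1-2p)^{12}}\big)$. Since $k$ is not given, I would wrap the whole procedure in a doubling guess $\hat k=1,2,4,\dots$ and keep the first run that makes steady progress; as the cost of a run is dominated by its core work and is polynomial in $\hat k$, the guesses form a geometric series and contribute only a constant factor.

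The hard part is step (ii): producing a polynomial-time subroutine that recovers the planted block structure of $P$ from a \emph{single} query per pair when a constant $p$-fraction of the observed labels are set adversarially, and determining exactly how large the blocks must be for this to go through. This block-size threshold is what pins the exponents in the statement: it propagates to the recoverability bound $\Omega(k^4\log n/(1-2p)^6)$ and, through $m^2$, to the additive $\frac{k^9\log k\log^2 n}{(1-2p)^{12}}$ queries. A secondary obstacle, already flagged above, is making sure the peeling does not accumulate error across its $O(\log n)$ rounds, which is exactly why step (iii) is a high-confidence re-classification of all of $V'$ rather than a one-shot comparison, and why we keep only blocks that are heavy enough to be unambiguous.
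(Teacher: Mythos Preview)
Your high-level architecture---sample a core, robustly cluster it, then extend each large block to a full cluster by majority vote---is the paper's as well, and your step (iii) is exactly Lemma~\ref{lm test}. Your outer organization differs: the paper does not peel over $O(\log n)$ halving rounds but instead, within each outer pass, doubles a parameter $t$ (a running guess at the number of remaining clusters) and samples a core of size $t s_t$ with $s_t=\Theta(t^3\log n/(1-2p)^6)$, stopping the doubling once the approximate clustering of the core produces a block of size $>s_t/2$; this takes $O(\log k)$ doublings, there are at most $k$ outer passes, and the $k^9\log k$ in the additive term comes from $O(k\log k)\cdot|T|^2$ with $|T|=O(ks_k)$. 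Either scheme is fine for the bookkeeping.

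The genuine gap is step (ii), which you explicitly call ``the hard part'' but then leave as a black box. This is precisely the technical heart of the paper, and your one-line justification (``signal of order $(1-2p)\tau$ dominates the adversarial perturbation'') does not carry the argument: the paper's Appendix~\ref{apendix sec3} shows that the natural local statistics one would reach for---degree counting, disagreement counting---provably \emph{fail} under semi-random noise (for $k=2$ adaptive, and already for $k=3$ non-adaptive), so an unspecified ``robust SBM subroutine'' cannot be assumed to exist. What the paper actually does is run the SDP relaxation of correlation clustering plus the randomized rounding of Mathieu--Schudy on the core and prove (Theorem~\ref{th app}) that under semi-random noise the output has cost at most $d(T^*,\bar T)+O(|T|^{3/2}/(1-2p))$; the crucial step is a Grothendieck-type spectral bound on the centered corruption matrix, which is the piece that survives the adversary. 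Theorem~\ref{th one round} (via the union bound over all clusterings in Theorem~\ref{th tool}) then shows that any clustering within this additive error has every block of size $>s_t/2$ being $(\eta,V^*_j)$-biased for some $j$, with $\eta=\tfrac14+\tfrac p2>p$, and distinct large blocks biased toward distinct $V^*_j$. It is these two theorems that pin down the thresholds $s_t$ and hence the exponents you are trying to hit; without supplying them the proposal is only a template. (A related minor point: your claim that the events $\{\oracle(u,v)=1\}$ are ``independent across distinct pairs'' is false in the adaptive model of Definition~\ref{def semi}; what is independent is the corruption indicator of each pair, and the paper's proofs always pass to those indicators $x_{uv}$ before applying concentration.)
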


Our algorithm, presented in Section~\ref{sec efficient}, can recover all large clusters under the semi-random model with a query complexity of $O\left(\frac{nk \log n} {(1-2p)^2}\right) + \text{poly}(k, 1/(1-2p), \log n )$. This bound qualitatively matches the best known bound from \cite{pmlr-v134-peng21a} for the fully-random model, and even achieves a slightly better dependence on $k$. We note that a bound of $\Omega\left(\frac{nk} {(1-2p)^2}\right) + \text{poly}( k, 1/(1-2p) )$ is conjectured by \cite{pmlr-v134-peng21a} to be necessary for computationally efficient estimation even in the fully-random model.

%Under the semi-random model, we design an algorithm that runs in $\poly(n,k,1/(1-2p)^2)$ time whose query complexity is $O\left(\frac{nk\log n}{\left(1-2p\right)^2}+ \frac{k^9\log k \log^2 n}{\left(1-2p\right)^{12}} \right)$ and recovers all clusters of size at least $\Omega\left(\frac{k^4\log n}{(1-2p)^6} \right)$. Comparing to the algorithm in \cite{pmlr-v134-peng21a}, our algorithm succeeds in a much stronger model and even has a slightly better performance on the dependence on $k$. 
%	For non-adaptive semi-random model with $k=2$, we design a simple algorithm whose query complexity is $O\left(\frac{n\log n}{\left(1-2p\right)^2}+ \frac{\log^2n}{\left(1-2p\right)^6}\right))$ that runs in $O\left(\frac{n\log n}{\left(1-2p\right)^2}+ \frac{\log^2n}{\left(1-2p\right)^6}\right)$ time and recovers all clusters with high probability. Comparing to the work in \cite{green2020clustering}, our algorithm has the same query complexity but can succeed in a stronger model.

\paragraph{A parameter-free algorithm for the fully-random model}  As a corollary of our approach, we design the first efficient parameter-free algorithm under the fully-random model whose performance is given by the following theorem and solves the open question given by \cite{mazumdar2017clustering}.
\begin{theorem}\label{th fully}
	Under the fully-random model, there is a parameter-free algorithm such that with probability at least $1-1/\poly(n)$,  recovers all clusters of size at least $\Omega\left(
	\frac{k^4\log n }{\left(1-2p\right)^6}  \right)$. Furthermore, the query complexity of the algorithm is $O\left(\frac{nk\log n}{\left(1-2p\right)^2} + \frac{k^9\log k\log^2n}{\left(1-2p\right)^{12}} \right)$.
\end{theorem}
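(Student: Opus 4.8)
The plan is to obtain Theorem~\ref{th fully} as a direct reduction to Theorem~\ref{th final}. The key observation is that the fully-random model with error probability $p$ is a special case of the semi-random model with any parameter $\hat p \in [p,1/2)$: given a fully-random-$p$ oracle, one can, pair by pair, flip an independent ``corrupted'' coin of bias $\hat p$ and, on the corrupted branch, output the true label with conditional probability $(\hat p - p)/\hat p$ and the false label otherwise, which exactly reproduces the fully-random-$p$ distribution while being a legitimate (indeed non-adaptive) semi-random-$\hat p$ adversary. Hence it suffices to (i) compute, using few queries, an estimate $\hat p$ with $p \le \hat p < 1/2$ and $1-2\hat p = \Theta(1-2p)$, and then (ii) run \textsc{Clustering}$(V,\hat p)$ from Theorem~\ref{th final}. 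Since every quantity in Theorem~\ref{th final} is polynomial in $1/(1-2p)$, passing $\hat p$ instead of $p$ only perturbs the recoverable size threshold and the query complexity by constant factors.

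For step (i) I would fix an arbitrary pair $(u,v) \in V\times V$ and repeatedly call $\oracle(u,v)$. In the fully-random model these calls are i.i.d.\ $\mathrm{Bernoulli}(\beta)$ with $\beta = 1-p$ or $\beta = p$ depending on whether $u,v$ are co-clustered, so in either case $|\beta - 1/2| = (1-2p)/2$ and the latent clustering is irrelevant. Because the right number of repetitions is not known in advance, the procedure would use a doubling schedule: in round $j=1,2,\dots$ issue $m_j \asymp 4^j\log n$ queries to $(u,v)$, let $f_j$ be the empirical frequency of the label $1$, and stop at the first round in which $|f_j - 1/2| \ge C\sqrt{(\log n)/m_j}$ for a suitable absolute constant $C$; then output a conservatively rescaled estimate defined through $1-2\hat p := |f_j - 1/2|/C'$ for a large enough constant $C'$ (the rescaling guarantees $\hat p \ge p$).

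Establishing correctness of this self-calibrating estimator with probability $1-1/\poly(n)$ is the part I expect to require the most care, though it is still routine: by Hoeffding and a union bound over the $O(\log n)$ rounds, with high probability $|f_j - \beta| \le \sqrt{c(\log n)/m_j}$ in every round. Choosing $C$ large relative to $c$ then forces, on this event, that the search does not stop while $m_j \ll (\log n)/(1-2p)^2$ --- there both the fluctuation $|f_j - \beta|$ and the signal $(1-2p)/2$ are below the threshold --- and does stop as soon as $m_j \gtrsim (\log n)/(1-2p)^2$, because then the signal exceeds the threshold. Consequently the stopping round satisfies $m_j = \Theta((\log n)/(1-2p)^2)$, the total estimation cost is $\sum_{i\le j} m_i = O((\log n)/(1-2p)^2)$, and combining the stopping inequality with the deviation bound yields $|f_j - 1/2| = \Theta(1-2p)$, hence $1-2\hat p = \Theta(1-2p)$ after rescaling, with $\hat p \ge p$.

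Finally I would feed $\hat p$ into \textsc{Clustering}$(V,\hat p)$ and invoke Theorem~\ref{th final} for the semi-random-$\hat p$ oracle realized by our fully-random-$p$ oracle. This recovers, with probability $1-1/\poly(n)$, all clusters of size $\Omega\!\left(\frac{k^4\log n}{(1-2\hat p)^6}\right) = \Omega\!\left(\frac{k^4\log n}{(1-2p)^6}\right)$, using $O\!\left(\frac{nk\log n}{(1-2\hat p)^2} + \frac{k^9\log k\log^2 n}{(1-2\hat p)^{12}}\right) = O\!\left(\frac{nk\log n}{(1-2p)^2} + \frac{k^9\log k\log^2 n}{(1-2p)^{12}}\right)$ queries, into which the $O((\log n)/(1-2p)^2)$ estimation queries are absorbed. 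A union bound over the estimation event and the success event of \textsc{Clustering} finishes the proof. The only genuine obstacle is tuning the stopping rule of the estimator so that, with no knowledge of $1-2p$, it neither triggers prematurely (which would output a meaningless bias) nor overshoots the correct scale; the rest is immediate from Theorem~\ref{th final} together with the fact that fully-random noise is a special case of semi-random noise.
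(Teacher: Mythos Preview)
Your reduction strategy --- estimate an upper bound $\hat p\ge p$ with $1-2\hat p=\Theta(1-2p)$, then feed $\hat p$ into \textsc{Clustering}$(V,\hat p)$ via the observation that fully-random-$p$ is a special case of semi-random-$\hat p$ --- is exactly the paper's approach. The divergence is in how you estimate $p$, and there your argument has a genuine gap.

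Your estimator relies on querying the \emph{same} pair $(u,v)$ many times and treating the answers as i.i.d.\ $\mathrm{Bernoulli}(\beta)$ samples. But in this model the oracle is a (random) \emph{function} $\oracle:V\times V\to\{0,1\}$: the paper explicitly fixes ``$\bar V$, the binary function over $V\times V$ corresponding to a realization of $\oracle$ over all point pairs,'' and every algorithm in the paper issues at most one query per pair. Repeated calls to $\oracle(u,v)$ therefore return the same bit, not fresh samples; indeed, if your assumption held, one could amplify every edge with $O(\log n/(1-2p)^2)$ repeats and the entire noisy-clustering problem would be trivial. Consequently your doubling search never acquires any information about $p$ beyond a single $\{0,1\}$ bit, and the stopping rule cannot certify $|f_j-1/2|=\Theta(1-2p)$.

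The paper circumvents this by extracting $\Theta(n)$ genuinely independent bits from \emph{distinct} pairs: it samples a constant-size set $A$ (nine points), queries all of $A\times V$, and for each $u,v\in A$ computes $\mathrm{count}_{uv}=|\{w:\oracle(u,w)\neq\oracle(v,w)\}|$. A pigeonhole argument guarantees that some pair $u,v\in A$ is either co-clustered or lies in two small clusters, and for such a ``good'' pair $\E[\mathrm{count}_{uv}]\in\bigl[(\tfrac12-\tfrac{\delta^2}{2})n,\,(\tfrac12-\tfrac{\delta^2}{4})n\bigr]$ with $\delta=1-2p$; Hoeffding then turns $M=\min_{u,v}\mathrm{count}_{uv}$ into an estimate $\bar p$ satisfying $\bar p\ge p$ and $1-2\bar p=\Theta(1-2p)$ with probability $1-1/\poly(n)$. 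This costs $O(n)$ queries and is absorbed into the final bound. If you replace your repeated-query estimator by a disagreement-count estimator of this kind, the remainder of your argument goes through unchanged.
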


\subsection{Technical overview}
%\note{We will state our high level idea here later.}

The main approach in developing algorithms with low query complexity is to first identify a small, but large enough, subset of vertices $B$ that mostly come from the same cluster $V^*_i$ and then compare all vertices in the graph to the vertices of $B$ to fully identify the whole cluster $V^*_i$ with high probability. Such a set $B$ is called biased, and is computed by first subsampling a subgraph $T$ of the whole graph and solving a clustering problem in the subgraph. Then, once we identify a cluster $V^*_i$, we can repeat the process to recover the remaining clusters as well.
This is a common technique of the prior work \cite{mazumdar2017clustering,green2020clustering,pmlr-v134-peng21a} as well as our work. The main challenge which leads to the difference between the methods is how one can arrive at such a biased set.

To get an information theoretical algorithm, \cite{mazumdar2017clustering} found the largest subcluster of $T$ by computing the heaviest subgraph of $T$. However, as we show in Appendix~\ref{apendix ex3}, this method fails under the semi-random model even if we have two clusters. 
To get an efficient algorithm,
\cite{pmlr-v134-peng21a} did this by filtering small subclusters of $T$ via counting degree of each vertex and running an algorithm proposed by \cite{vu2018simple} for a community detection problem under the Stochastic Block Model, which highly depends on the fully-random noise. On the other hand, \cite{mazumdar2017clustering,green2020clustering} used a simple disagreement counting method to cluster the subgraph $T$. While this simple technique is again very tailored to the fully-random model, we can extend this to the semi-random model but only in a very special case. We obtain an algorithm for semi-random noise where there are  $k=2$ clusters and the noise is non-adaptive (see Theorem~\ref{th k=2} in Appendix~\ref{apendix bicluster}). As we show, this technique breaks down completely once any of these two restricting assumptions are removed. 
In general, previous efficient algorithms on fully random models can fail easily under semi-random models, because they
all use techniques such as counting disagreements or counting degrees locally to obtain information from a single vertex or a pair of vertices. These statistics highly depend on the exact knowledge of the noise rate and thus under the semi-random model, an adversary can easily make the algorithms fail. A detailed discussion can be found in Appendix~\ref{apendix sec3}. To obtain more robust efficient algorithms under the semi-random model, a key challenge is to design a statistic that can obtain information from a larger neighborhood of vertices and can be computed efficiently.

\paragraph{Our Approach}

To obtain robust algorithms for clustering under more than $2$ clusters and more general semi-random noise we require a more involved clustering procedure for the subsampled graph $T$ which we carefully choose. 

For our information theoretical algorithm, our method computes the largest subset of $T$ that has no negative cut (assuming edges that are labeled 0 contribute as -1). As we show, such a set must correspond a set of vertices all coming from the same cluster in the underlying partition, provided that $T$ is large enough. 

As this step is computationally intractable, to obtain a computationally-efficient algorithm, our method relies on efficiently computing an (approximate) correlation clustering of $T$. 
Our key observation is that when $T$ is large enough, every clustering that has a small cost must be close to the underlying clustering and must have a special structure. To make this more specific, such a clustering function must contain some very large cluster and each of these large clusters must be biased to contain a majority of points from the same true cluster. This implies if we can compute a correlation clustering $\tilde{T}$ of $T$ then we can use those large clusters in $\tilde{T}$ to recover the corresponding underlying clusters.

%The framework of our algorithm is similar to the previous work but involves new ideas to tolerate the semi-random noise. We will first show recovering $V^*_i$ from an $(\eta,V^*_i)$-biased set is still possible under the semi-random model if the set satisfies some additional conditions. Then we obtain these $(\eta,V^*_i)$-biased sets by clustering a carefully sampled set $T$. To get an information theoretical algorithm, our method is to compute the largest subset of $T$ that has no negative cut. As we will see, such a set must be the largest underlying cluster of $T$, provided $T$ is large enough. To get an efficient algorithm, our method is to find an (approximate) correlation clustering of $T$. 
%The key observation is that when $T$ is large enough, every clustering that has a small cost must be close to the underlying clustering and must have a special structure. To make this more specific, such a clustering function must contain some very large cluster and each of these large clusters is an $(\eta,V^*_i)$-biased set. This implies if we can compute a correlation clustering $\tilde{T}$ of $T$ then we can use those large clusters in $\tilde{T}$ to recover the corresponding underlying clusters.

To obtain an approximation to the correlation clustering of the sample set $T$, we rely on an approximation algorithm developed by \cite{mathieu2010correlation,ailon2008aggregating} that obtains an SDP relaxation of the clustering problem and then performs a rounding step. We show that the resulting clustering that the algorithm obtains has a sufficiently small an additive error $O\left( \frac{\card{T}^{3/2}}{\left(1-2p\right)} \right)$ that enables us to identify heavily biased clusters efficiently. By carefully choosing the size of $T$, we show that with high probability, the clustering we obtain must contain at least one big cluster, which is a biased set for a true cluster $V^*_i$ for $i \in [k]$.

\subsection{Further related work}
There has been a lot of work in developing algorithms for clustering.
A lot of research has focused specifically on clustering under random graphs. Typical problems include community detection under stochastic block models (SBM) \cite{abbe2017community} and clique detection under planted clique models \cite{alon1998finding}. In these problems, a hidden structure such as a clustering or a clique is planted in advance, a random graph is generated according to some distribution and we are asked to recover the hidden structure efficiently using the given random graph. 

Another popular clustering problem is correlation clustering, which was proposed in \cite{bansal2004correlation}. In this problem, we are given an undirected graph $G$ and our goal is to partition the vertices into clusters so that we minimize the number of disagreements or maximize the number of agreements. As the correlation clustering problem is NP-hard and many works develop efficient approximation algorithms \cite{bansal2004correlation,demaine2003correlation,giotis2006correlation,swamy2004correlation,charikar2005clustering,ailon2012note,makarychev2015correlation,mathieu2010correlation} for worst case instances, while others  \cite{shamir2007improved,joachims2005error} focus on the average case complexity of clustering when the graph is generated according to some underlying distribution.

A popular application of clustering is the signed edge prediction problem \cite{leskovec2010predicting,burke2008mopping,brzozowski2008friends,chen2014clustering}. In this problem, we are given a social network, where each edge is labeled by `+' or `-' to indicate if two nodes have positive relations or negative relations. The goal here is to use a small amount of information to recover the sign of the edges, which implies we want to reconstruct the network by partial information.

Besides the large body of work on clustering problems with access to the full graph, recently other papers studied clustering problems with queries under different settings.  \cite{ashtiani2016clustering,gamlath2018semi} study the k-means problem with same-cluster queries. \cite{saha2019correlation,ailon2018approximate} study the correlation clustering problem with same-cluster queries. Some other recent works on clustering with queries include \cite{huleihel2019same,li2021learning,bressan2020exact}.

Beyond clustering, there are also other settings in learning theory where semi-random noise makes the problem significantly more challenging and requires more sophisticated algorithms than the corresponding fully-random case.
Semi-random noise corresponds to the popular Massart noise model~\cite{massart2006risk} in the context of robust classification. While classification under fully-random noise was known for many years \cite{blum1998polynomial}, robust learning methods that can tolerate Massart noise were only recently discovered \cite{diakonikolas2019distribution, chen2020classification}.

	\section{Preliminaries and Notation}
	Let $V = [n]$ be a set of points, which contains $k$ \emph{underlying clusters} $V^*_i \subseteq V$, for $i \in [k]$, such that $\bigcup_{i \in [k]} V^*_i = V$ and $V^*_i \cap V^*_j = \emptyset$, for every $1 \le i<j \le k$. 
	We say a set $S \subseteq V$ is a \emph{subcluster} if $S \subseteq V^*_i$ for some $i \in [k]$.
	We say $\tilde{V}: V \times V \to \{0,1\}$ is a \emph{clustering function} over $V$ based on $\{\tilde{V_1},\dots,\tilde{V_t}\}$, if $\{\tilde{V_1},\dots,\tilde{V_t}\}$ is a partition of $V$ and, for every $(u,v) \in V \times V$,
	\begin{align*}
		\tilde{V}(u,v) =\begin{cases}
			& 1 \ \text{if }u\in \tilde{V_i}, v \in \tilde{V_j}, i=j,  \\
			& 0 \ \text{if }u\in \tilde{V_i}, v \in \tilde{V_j}, i\neq j.
		\end{cases} 
	\end{align*}
	%In this paper, we sometimes call $\tilde{V}$ a clustering function, if we do not emphasis the number of clusters.
	In particular, throughout the paper, we denote by $V^*$ the clustering function over $V$ based on the underlying clusters and we denote by $\bar V$ the binary function over $V \times V$ corresponding to a realization of $\oracle$ over all point pairs of $V$.
	Given a binary function $F:V \times V \to \{0,1\}$, the \emph{adjacency matrix} of $F$ is the matrix $M(F) \in \{0,1\}^{|V|\times|V|}$, such that $M(F)_{uv}=F(u,v)$ for every $u,v \in V$.
	For convenience, when it does not create confusion, we use the same notation for a clustering function, the set of clusters it is based on, and its adjacency matrix.

	Given $A,B \in \R^{n \times n}$, we define the \emph{distance} between $A,B$ to be $d(A,B) : = \sum_{1 \le i \le j \le n}\abs{A_{ij}-B_{ij}}$.
	Let $F,H$ be two binary functions over $V \times V$. We define the \emph{distance} between $F,H$ to be 
	\begin{align*}
		d(F,H):= d(M(F),M(H)) = \sum_{1\le u \le v \le |V|} \abs{F(u,v)-H(u,v)}.
	\end{align*}
	Given a binary function $E$ over $V \times V$, a \emph{correlation clustering} $\tilde{V}$ of $E$ is a clustering of $V$ that minimizes $d(V',E)$ among all clustering $V'$ of $V$.
	
	Next, we introduce two definitions that will be heavily used throughout the paper.
	
	\begin{definition}\label{def eta}
		Let $\eta \in (0,1/2]$ and $C \subseteq V$. A subset $B$ of $V$ is called an \emph{$(\eta,C)$-biased set} if 
		\begin{align*}
			\card{B \cap C} \ge (\frac{1}{2}+\eta)\card{B}.
		\end{align*} 
	\end{definition}
	
	Intuitively, an $(\eta,C)$-biased set is a set whose majority of points come from $C$. On the other hand, if a set does not contain a significant fraction of points that come from an underlying cluster, we call it an $\eta$-bad set. Formally, we have the following definition.
	
	\begin{definition}
		Let $\eta \in (0,1/2]$. A subset $B$ of $V$ is called an \emph{$\eta$-bad set} if $B$ is not an $(\eta,V^*_i)$-biased set for every $i \in [k]$.
	\end{definition}
	
	The importance of Definition~\ref{def eta} is that, under the semi-random model, we can recover an underlying cluster $V^*_i$ from an $(\eta,V^*_i)$-biased set using the following simple procedure, which has been proposed in \cite{ben1999clustering,mazumdar2017clustering,green2020clustering,pmlr-v134-peng21a}.
	\begin{algorithm}[H]
		\caption{\textsc{DegreeTest}$(v,B)$ (Test if $v \in V^*_i$ using an $(\eta,V^*_i)$-biased set $B$)}\label{alg test}
		\begin{algorithmic}
			\If{$S = \sum_{u \in B}\oracle(u,v) \ge \card{B}/2$}
			\Return ``Yes" \textbf{else}
			\Return ``No"
			\EndIf
		\end{algorithmic}
	\end{algorithm}
	The intuition behind Algorithm~\ref{alg test} is that if more than half of the points of $B$ come from $V^*_i$, then we can use $B$ to distinguish if a point $v$ is in $V^*_i$ or not, by looking at the query results.
	According to \cite{pmlr-v134-peng21a}, for every constant $\eta \in (0,1/2]$, we can use an $(\eta,V^*_i)$-biased set $B$ of size $\Omega(\frac{\log n}{\eta^2(1-2p)^2})$ to recover $V^*_i$ via \textsc{DegreeTest}$(v,B)$ with high probability under the fully-random model. However, under the semi-random model, to recover $V^*_i$ using $B$, $B$ needs to satisfy some additional conditions. To state this formally, we have the following Lemma~\ref{lm test}. We leave the proof to Appendix~\ref{apendix lm1}. We remark that the additional condition $\eta>p$ in Lemma~\ref{lm test} is necessary for \textsc{DegreeTest}$(v,B)$ to succeed.
	
	\begin{lemma}\label{lm test}
		Under the semi-random model, let $\eta \in (p,1/2]$ and let $B$ be an $(\eta,V^*_i)$-biased set for some $i \in [k]$. If $\card{B} \ge \max \{\frac{80\log n}{\eta^2(1-2p)^2} ,\frac{5\log n}{\left(\eta-p\right)^2}\}$, then with probability $1 - 1/\poly(n)$, for every $v \in V$, \textsc{DegreeTest}$(v,B)$ returns ``Yes" if $v \in V^*_i$, and it returns ``No" if $v \not \in V^*_i$.
	\end{lemma}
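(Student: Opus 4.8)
The plan is to fix an arbitrary vertex $v\in V$, treat $B$ as a fixed $(\eta,V^*_i)$-biased set, and analyze the statistic $S=\sum_{u\in B}\oracle(u,v)$ computed inside \textsc{DegreeTest}$(v,B)$. Write $B_1:=B\cap V^*_i$ and $B_0:=B\setminus V^*_i$, so that $\card{B_1}\ge(\tfrac12+\eta)\card{B}$ and $\card{B_0}\le(\tfrac12-\eta)\card{B}$. The only randomness in one call is the i.i.d.\ $\mathrm{Ber}(p)$ indicators of which of the $\card{B}$ edges $(u,v)$ are corrupted; conditioned on these indicators, $S$ is a deterministic quantity plus a term the (possibly adaptive) adversary controls that is coordinatewise in $\{0,1\}$. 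This lets me sandwich $S$ between two binomials, uniformly over all adversary strategies.

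First I would handle $v\in V^*_i$, where we want the output ``Yes'', i.e.\ $S\ge\card{B}/2$. Here $(u,v)$ has true label $1$ precisely for $u\in B_1$, so every uncorrupted $B_1$-edge forces a contribution of $1$ and all other contributions are nonnegative; hence $S$ stochastically dominates $\mathrm{Bin}(\card{B_1},1-p)$, whose mean is $(1-p)\card{B_1}\ge(1-p)(\tfrac12+\eta)\card{B}\ge\card{B}/4$. Writing the threshold as $\card{B}/2=(1-\delta)(1-p)\card{B_1}$, a short calculation using $\card{B_1}\ge(\tfrac12+\eta)\card{B}$ and then $\eta>p$ gives $\delta\ge\tfrac{\eta(1-2p)}{2}$. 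A multiplicative Chernoff bound then yields $\Pr[S<\card{B}/2]\le\exp(-\Omega(\eta^2(1-2p)^2\card{B}))$, which is at most $n^{-5/2}$ once $\card{B}\ge\tfrac{80\log n}{\eta^2(1-2p)^2}$.

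Next I would handle $v\notin V^*_i$, where we want ``No'', i.e.\ $S<\card{B}/2$. Now $(u,v)$ has true label $0$ for all $u\in B_1$, while for $u\in B_0$ it may be $0$ or $1$ (worst case: all of $B_0$ lies in the cluster of $v$). Bounding every $B_0$-edge by $1$, every uncorrupted $B_1$-edge by $0$, and every corrupted $B_1$-edge by $1$, we get that $S$ is stochastically at most $\card{B_0}+\mathrm{Bin}(\card{B_1},p)$. Since $\card{B_0}\le(\tfrac12-\eta)\card{B}$, it suffices that $\mathrm{Bin}(\card{B_1},p)<\eta\card{B}$; as $\E[\mathrm{Bin}(\card{B_1},p)]=p\card{B_1}\le p\card{B}<\eta\card{B}$, Hoeffding's inequality gives a required deviation of at least $(\eta-p)\card{B}$, so $\Pr[S\ge\card{B}/2]\le\exp(-2(\eta-p)^2\card{B})\le n^{-10}$ once $\card{B}\ge\tfrac{5\log n}{(\eta-p)^2}$. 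Combining the two cases and union bounding over the $n$ choices of $v$ yields the claimed $1-1/\poly(n)$ guarantee.

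The step I expect to be the crux is making the two stochastic-domination claims airtight against an adversary that may choose corrupted answers adaptively depending on the sequential output of $\oracle$: one argues that, since the corruption coins on the edges $\{(u,v):u\in B\}$ are independent of $B$ and of the latent clustering, conditioning on them reduces each case to a fixed deterministic floor (resp.\ ceiling) plus an adversarial $\{0,1\}$-vector, after which the binomial comparison is immediate — so adaptivity buys the adversary nothing beyond the crude worst case. The remaining delicate point is purely quantitative: the positivity (with slack) of $\delta$ in Case~1 and of $\eta\card{B}-p\card{B_1}$ in Case~2 hold only because $\eta>p$, which is exactly where that hypothesis is used and, as the construction referenced in the text shows, why it cannot be dropped.
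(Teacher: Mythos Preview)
Your proposal is correct and follows essentially the same strategy as the paper's proof: bound $S$ below (resp.\ above) by a binomial in the i.i.d.\ corruption indicators, apply a concentration inequality, then union bound over $v\in V$. The only cosmetic differences are that the paper uses Hoeffding in both cases (yielding $n^{-10}$ rather than your $n^{-5/2}$ in Case~1), and in Case~2 the paper partitions $B$ along $V^*_j$ (the cluster of $v$) rather than along $V^*_i$; both partitions lead to the same $\exp(-2(\eta-p)^2|B|)$ bound.
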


	\section{Information theoretical algorithm}
	\label{sec information}
	Before designing efficient algorithms, we first need to figure out how many queries are needed in order to recover the underlying clusters under the semi-random model.
	In this section, we answer this question formally and we propose an information theoretical algorithm.
	Our algorithm has a similar structure to the information theoretical algorithm in \cite{mazumdar2017clustering}, but we use a different statistic to overcome the semi-random noise. 
	In particular, our algorithm can achieve query complexity $O\left(\frac{nk\log n}{\left(1-2p\right)^2}\right)$ under the semi-random model, which matches the information theoretical lower bound  $\Omega\left(\frac{nk}{\left(1-2p\right)^2}\right)$ within a $O(\log n)$ factor. 
	%This also implies that from an information theoretic point of view, the semi-random model is not harder than the fully-random model. 
	Our main algorithm is Algorithm~\ref{alg information}. The theoretical guarantee of Algorithm~\ref{alg information} is stated in Theorem~\ref{th estimation semi} presented in the introduction. 
	The proof of Theorem~\ref{th estimation semi} is in  Appendix~\ref{apendix information}.

	%The main result of this section is the following theorem.

	\begin{algorithm}
		\caption{\textsc{Estimation}$(V,p)$ (Recover all large clusters of $V$)}\label{alg information}
		\begin{algorithmic}
			\State Let $C = \emptyset$	
			\State Randomly select $T \subseteq V$ with $\card{T} = \frac{c\log n}{\left(1-2p\right)^2}$, $V \gets V \setminus T$ \Comment{$c$ is a large enough constant}
			\While{$V \neq \emptyset$}
			\While{$\textsc{FindBigClusters}(T) = \emptyset$} \Comment{Find subsets $T \cap V^*_i, i \in [k]$ of size $\Omega\left(\frac{\log n}{(1-2p)^2}\right)$.}
			\State Randomly select $v$ from $V$, $T \gets T \cup \{v\}, V \gets V \setminus T$
			\EndWhile
			\For{$A \in \textsc{FindBigClusters}(T)$}
			\State Randomly select $B \subseteq A$, such that $\card{B} = \frac{320\log n}{\left(1-2p\right)^2}$
			\State $A \gets A \cup \{v \in V \mid \textsc{DegreeTest}(v,b) = \text{``Yes"}\}$
			\State $C \gets C \cup \{A\}$, $V \gets V \setminus A$
			\EndFor

			\EndWhile

			\State \Return $C$	
		\end{algorithmic}
	\end{algorithm}

	\textsc{Estimation}$(V,p)$ outputs a set of clusters $C$. Each element in $C$ is an underlying cluster. Each point $v \in V$ is a point that we cannot assign to a cluster in $C$. 
	In the algorithm, we maintain a set of points $T$ as a sample set. If we can find all sets of the form $T^*_i=T \cap V^*_i$ such that $\card{T^*_i} = \Omega\left(\frac{\log n}{(1-2p)^2}\right)$, then we can use $T^*_i$ to recover $V^*_i$ with high probability, according to Lemma~\ref{lm test}. 
	If such $T_i$ does not exist, we enlarge $T$ until there is such a set. 
	In this way, we can recover all large underlying clusters.
	To find these sets $T^*_i$, we an use the following Algorithm~\ref{alg Process} with unlimited computational power.
	
	\begin{algorithm}[H]%\label{alg process}
		\caption{\textsc{FindBigClusters}$(T)$ (Extract all subsets $T \cap V^*_i, i \in [k]$, of large size)}\label{alg Process}
		\begin{algorithmic}
			\State Query every point pair in $T$ and assign weight $w_{uv}=2\oracle(u,v)-1$ to each point pair
			\State Let $C=\emptyset$
			
			\While{$\card{T} \ge \frac{320\log n}{\left(1-2p\right)^2}$}
			\State Find the largest subset $S\subseteq T$ such that $val_S := \min_{A \subseteq S} \sum_{u \in A}\sum_{v \in S\setminus A}w_{uv} > 0$
			\If{ $\card{S} < \frac{320\log n}{\left(1-2p\right)^2}$ }
			\Return $C$
			\EndIf
			
			%\State Let $T' = \{v \in T \mid \text{Test}(v,S) = 'Yes'\}$
			\State $T \gets T \setminus S, C \gets C \cup \{S\}$

			\EndWhile
			
			\State \Return $C$
		\end{algorithmic}
	\end{algorithm}

	%\note{I modify the proof of Theorem~\ref{th estimation semi} to make the proof shorter.}
	
	\textsc{FindBigClusters}$(T)$ assigns a weight $w_{uv}=2\oracle(u,v)-1$ to each point pair and extracts the largest subset $S \subseteq T$ such that $S$ has no negative cut. We summarize the theoretical guarantee of Algorithm~\ref{alg Process} via the following Theorem~\ref{th Process}, which plays a key role in the proof of Theorem~\ref{th estimation semi}.

	\begin{theorem}\label{th Process}
		Let $T \subseteq V$ be a set of points. Under the semi-random model, with probability at least $1-1/\poly(n)$,  $\textsc{FindBigClusters}(T)=\{T \cap V^*_i \mid \card{T \cap V^*_i}  \ge \frac{320 \log n}{(1-2p)^2}\}$.
	\end{theorem}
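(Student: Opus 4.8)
The plan is to establish two structural properties, valid with probability $1-1/\poly(n)$ simultaneously over the oracle's randomness and against any semi-random adversary, and then read off the theorem from the fact that each iteration of \textsc{FindBigClusters}$(T)$ removes the \emph{largest} currently-present set $S$ with $val_S>0$ and $|S|\ge N$, where $N:=320\log n/(1-2p)^2$. For $S\subseteq T$ write $S_\ell:=S\cap V^*_\ell$, call an edge of $S$ \emph{crossing} if its endpoints lie in different true clusters, and recall that a crossing edge gets weight $+1$ only if it was corrupted and the adversary chose label $1$, while a non-crossing edge gets weight $-1$ only if it was corrupted and relabelled $0$. \textbf{Property A (subclusters are preserved):} for every $\ell$ with $|T\cap V^*_\ell|\ge N$ we have $val_{T\cap V^*_\ell}>0$, and the same holds for $T\cap V^*_\ell$ viewed inside any subset of $T$. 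Indeed, for a cut $(A,W\setminus A)$ of $W:=T\cap V^*_\ell$ with $a:=|A|\le|W|/2$, the $a(|W|-a)$ edges across the cut all have true label $1$, so the cut fails to be positive only if at least half of them were corrupted; by a Chernoff bound this has probability at most $\exp\big(-a(|W|-a)(1-2p)^2/2\big)\le\exp\big(-a|W|(1-2p)^2/4\big)\le n^{-80a}$. Summing over $A$ grouped by $a$ (at most $\binom{|W|}{a}\le n^a$ choices) and over $\ell\in[k]$ gives total failure probability $\sum_\ell\sum_{a\ge1}n^a n^{-80a}=1/\poly(n)$; the per-cut bound beats the $n^a$ count precisely because $|W|\ge N$.

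\textbf{Property B (spurious sets are strictly dominated):} every $S\subseteq T$ with $|S|\ge N$, $val_S>0$, and $S$ not a subcluster admits a $j$ with $|T\cap V^*_j|\ge N$ and $|T\cap V^*_j|>|S|$. Since $val_S>0$, every part-cut $(S_\ell,S\setminus S_\ell)$ is strictly positive and consists only of crossing edges, so more than $\tfrac12|S_\ell|\,|S\setminus S_\ell|$ of them are corrupted; summing over $\ell$ (each crossing edge counted twice on each side) shows that more than half the crossing edges of $S$ are corrupted. Fix constants $\gamma=\tfrac1{20}$ and $C_1=10$, let $i^*$ be a plurality cluster of $S$, and split into cases. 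In \emph{Case 1}, $|S_{i^*}|\le(1-\gamma)|S|$: then $\sum_\ell|S_\ell|^2\le|S_{i^*}|\,|S|\le(1-\gamma)|S|^2$, so $S$ has at least $\tfrac\gamma2|S|^2$ crossing edges, and the event above has probability at most $\exp\big(-\tfrac\gamma4|S|^2(1-2p)^2\big)\le n^{-80\gamma|S|}$; as there are at most $\binom n{|S|}\le n^{|S|}$ such sets of each size, summing over sizes $\ge N$ rules out Case 1. In \emph{Case 2}, $|S_{i^*}|>(1-\gamma)|S|$, so the minority $M:=S\setminus V^*_{i^*}$ satisfies $1\le|M|<\gamma|S|$ and $|S_{i^*}|>(1-\gamma)N$. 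If $S$ omits at most $C_1|M|$ vertices of $V^*_{i^*}\cap T$ (Sub-case 2a), such an $S$ is determined by $(i^*,R,M)$ with $R=(V^*_{i^*}\cap T)\setminus S_{i^*}$, $|R|\le C_1|M|$, so there are at most $k\,n^{(C_1+1)|M|+1}$ of each given $|M|$; for each, the part-cut $(M,S_{i^*})$ being positive forces more than $\tfrac12|M|\,|S_{i^*}|$ corruptions there, with probability at most $\exp\big(-|M|\,|S_{i^*}|(1-2p)^2/2\big)\le n^{-160(1-\gamma)|M|}$, and since $C_1+1\ll160(1-\gamma)$ the sum over $|M|\ge1$ is $1/\poly(n)$, ruling out Sub-case 2a. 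Otherwise (Sub-case 2b), $|T\cap V^*_{i^*}|>|S_{i^*}|+C_1|M|=|S|+(C_1-1)|M|>|S|\ge N$, so $j:=i^*$ works.

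\textbf{Conclusion.} Condition on Properties A and B. In each iteration, on the current set $T'\subseteq T$, every $T'\cap V^*_\ell$ of size $\ge N$ is a valid candidate (Property A), so the loop does not terminate while one is present, and the picked set $S$ is at least as large as the biggest such subcluster; were $S$ not a subcluster, Property B would exhibit a present subcluster of size $\ge N$ strictly larger than $S$, contradicting maximality. Hence $S$ is the largest remaining subcluster of size $\ge N$, and iterating shows that \textsc{FindBigClusters}$(T)$ extracts exactly $\{T\cap V^*_\ell:|T\cap V^*_\ell|\ge N\}$.

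\textbf{Main obstacle.} Property B is the delicate part: a spurious set genuinely can have $val_S>0$ (e.g.\ an outlier joined by corrupted edges to a sub-subcluster), so one cannot hope to forbid all spurious sets, and a blind union bound over the $2^{|T|}$ subsets of $T$ is hopeless. The fix — which is where the specific constant $320$ is needed — is to kill the ``unbiased'' and ``large-minority'' spurious sets outright with Chernoff bounds strong enough to beat the subset count, and to handle the remaining ``almost-subcluster'' sets either by exhibiting a strictly larger genuine subcluster (Sub-case 2b) or by shrinking the union bound to the polynomially many ways of deleting a bounded number of vertices from a fixed subcluster (Sub-case 2a).
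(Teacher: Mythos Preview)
Your proof is correct and follows essentially the same route as the paper: your Property~A is exactly the paper's Lemma~\ref{lm clique}, and your Property~B is the contrapositive of the paper's Lemma~\ref{lm cut} (stated there as ``every non-subcluster $S\subseteq T$ with $|S|\ge\max\{t,N\}$ has $val_S\le0$'', where $t=\max_i|T\cap V^*_i|$). Both arguments split on whether the plurality subcluster occupies a large share of $S$ and then combine Hoeffding with a subset-count union bound; your Sub-case~2b is the only extra wrinkle, standing in for the paper's direct use of the hypothesis $|S|\ge t$ to control the enumeration in its unbalanced case.
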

	
	We sketch the proof of Theorem~\ref{th Process} here.
	We will show that if $T$ contains a large subcluster, then with high probability, the largest subcluster will not contain a negative cut. 
	On the other hand, with high probability, any large subset of $T$ that is not a subcluster must contain a negative cut. 
	Therefore, every time we find a large subset that contains no negative cut, we must find the largest subcluster contained in $T$. 
	We summarize the above argument in Lemma~\ref{lm clique} and Lemma~\ref{lm cut} in Appendix~\ref{apendix th2}. 
	A complete proof of Theorem~\ref{th Process} can also be found in Appendix~\ref{apendix th2}.

	%As a corollary, we have the following result for Algorithm~\ref{alg Process}.

	%\note{I wrote an example formally in the appendix.}
	
	We remark that this information theoretical result is nontrivial. In our algorithm we process the sampled set $T$ by finding the largest subset that has no negative cut, while in \cite{mazumdar2017clustering}, the authors did this by computing the heaviest subgraph. 
	A simple example with $k=2$ can be used to show that their algorithm fails to recover the underlying clusters under the semi-random model.
	Suppose we have two underlying clusters with the same size. 
	We run the algorithm in \cite{mazumdar2017clustering} to recover the two clusters. 
	Every time we sample a set $T$ of $\Omega(\log n)$ size, the adversary always outputs the true label for $(u,v)$ if $u,v$ are in the same underlying cluster, but outputs a wrong label if $u,v$ are in different underlying clusters. 
	When the noise level is high, in expectation, the heaviest subgraph of $T$ is $T$ itself and we have failed to recover the underlying clusters.
	We present this example in detail in Appendix~\ref{apendix ex3}.

	\section{Computationally efficient algorithm }\label{sec efficient}

	In this section, we develop a computationally efficient algorithm for our clustering problem under semi-random noise, presented in Algorithm~\ref{alg clustering}. 
	We analyze the performance of Algorithm~\ref{alg clustering} in Theorem~\ref{th final} presented in the introduction. The full proof of Theorem~\ref{th final} is in Appendix~\ref{apendix th4}.
	
	\begin{algorithm}[H]
		\caption{\textsc{Clustering}$(V,p)$ (Recover all large clusters in $V$ efficiently)}\label{alg clustering}
		\begin{algorithmic}
			\State Let $C=\emptyset$, $s_t = \frac{c't^3 \log n}{\left(1-2p\right)^{6}}$  \Comment{$c'$ is a large enough constant}
			
			\While{$V \neq \emptyset$}
			\State $t=1, h=0$
			
			\While{$h = 0$ and  $\card{V} \ge ts_t$} 
			\State Randomly select $T \subseteq V$ of size $ts_t$
			\State Let $\tilde{T}=\textsc{ApproxCorrelationCluster}(T,1/\poly(n))$ \Comment{Compute an approximation of the correlation clustering of $T$, w.p. $1-1/\poly(n)$}
			\State Let $\{\hat{T_1},\dots,\hat{T_h}\} : = \{\tilde{T_i} \in \textsc{ApproxCorrelationCluster}(T) \mid \card{\tilde{T_i}} >s_t/2\} $,  $t \gets 2t$
			\EndWhile
			\If{$\card{V} < ts_t$ and $h=0$} 
			\Return $C$ \Comment{Stop when $V$ only contains small clusters}
			\EndIf

			\For{$i \in [h]$} \Comment{Recover underlying clusters via $\eta$-biased sets}
			%\State Let $\tilde{V_i} = \emptyset$
			\State Randomly select $B_i \subseteq \hat{T_i}$ of size $\frac{720 \log n}{\left(1-2p\right)^2}$
			\State Let $\tilde{V_i} = \{v \in V \mid \textsc{DegreeTest}(v,B_i) = \text{``Yes"}\}$,
			$C \gets C \cup \{\tilde{V_i}\}$, $V \gets V \setminus \tilde{V_i}$
			\EndFor

			\EndWhile
			\State \Return $C$

		\end{algorithmic}
	\end{algorithm}

	The output of \textsc{Clustering}$(V,p)$ is a set of underlying clusters $C$. The set $V$ contains points that we have not assigned to a cluster in $C$. In the algorithm, we maintain a variable $t$ to estimate the number of underlying clusters in $V$. 
	In each round, we sample a set of points $T$, whose size depends on $t$, and we compute a clustering $\tilde{T}$ of $T$ to approximate the correlation clustering of $T$ via \textsc{ApproxCorrelationCluster}$(T,1/\poly(n))$. 
	As we will see, when $\card{T}$ is large enough, with high probability, we can find $(\eta,V^*_i)$-biased sets from this approximate correlation clustering. Thus, we can use these biased sets to recover the corresponding underlying clusters. In this way, we can recover all large underlying clusters until $V$ contains a small number of points.

	Next, we present the outline of the remainder of this section. 
	In Section~\ref{sec SDP}, we give \textsc{ApproxCorrelationCluster} and show how well it can approximate the correlation clustering of $T$. 
	In Section~\ref{sec oneround}, we present the structure of the approximate correlation clustering. Finally,  we sketch the proof of Theorem~\ref{th final} in Section~\ref{sec proof}.

	\subsection{Approximate correlation clustering}\label{sec SDP}
	%In this section, we present an algorithm to compute an approximation of a correlation clustering.
	
	Let $T$ be a set of points and let $F$ be a binary function over $T \times T$. We consider the following natural SDP relaxation of the correlation clustering problem, which has been used for designing the approximate algorithm in \cite{mathieu2010correlation}.

	\begin{align}
		\label{pr SDP}
		\tag{SDP(F)}
		\begin{split}
			\min \ & d(X,F)=\sum_{1 \le u \le v \le \card{T}} \abs{X_{uv}-F(u,v)} \\
			\stt \ & X_{uv}+X_{vw}-X_{uw} \le 1 \ \forall u,v,w \in T \\
			& X_{uu}=1 \; \forall u \in T, \ X_{uv} \ge 0 \; \forall u,v \in T \\
			& X \succeq 0.
		\end{split}
	\end{align}
	
	\begin{algorithm}[H]
		\caption{\textsc{ApproxCorrelationCluster}$(T,\delta)$ (Approximate correlation clustering of $T$)}\label{alg one round}
		\begin{algorithmic}
			
			\State Query all point pairs of $T \subseteq V$ and construct the corresponding binary function $\bar{T}$

			\State Compute $X^*$, a near optimal solution to SDP$(\bar{T})$, with additive error at most $1/\poly(n)$
			
			\State Use $X^*$ to do rounding $O\left(\log\frac{1}{\delta} \right)$ times and return the clustering $\tilde{T}$ that minimizes $d(\tilde{T},X^*)$
			
			\State (Compute a clustering $\tilde{T}$ by rounding around $X^*$): Start with $\Tilde{T}=\emptyset$

			%\State Let $\tilde{T}=\emptyset$
			
			\While{$T \neq \emptyset$} 
			
			\State Randomly select a point $v$ from $T$ and let $U = \{v\}$
			
			\For{$u \in T \setminus \{v\}$}  add $u$ to $U$ with probability $X^*_{uv}$
			
			\EndFor
			
			\State $\tilde{T} \gets \tilde{T} \cup \{U\}$,  $T \gets T \setminus \{U\}$
			
			\EndWhile
			
			\State \Return $\tilde{T}$

		\end{algorithmic}
	\end{algorithm}

	\begin{theorem}\label{th app}
		Let $T \subseteq V$ and $\delta \in (0,1)$. Then $ \tilde{T}= \textsc{ApproxCorrelationCluster}(T,\delta)$ can be computed in $\poly\left(\card{T},\log\frac{1}{\delta} \right)$ time. Furthermore,
		under the semi-random model, there is a constant $c_1>0$ such that with probability at least $1-O(\delta+\exp\left(-\card{T}\right)$,
		\begin{align*}
			d(\tilde{T},\bar{T}) \le d(T^*,\bar{T}) + \frac{c_1\card{T}^{\frac{3}{2}}}{1-2p},
		\end{align*}
		where $T^*$ is the underlying clustering of $T$ and $\bar{T}$ is the query result over $T \times T$.
	\end{theorem}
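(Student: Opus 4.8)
The plan is to establish three things — polynomial running time, the fact that SDP$(\bar{T})$ lower-bounds $d(T^*,\bar{T})$, and a distortion bound for the rounding — and then glue them together with the triangle inequality for $d$ and a Markov-plus-repetition amplification. The running-time claim is routine: after the $\binom{\card{T}}{2}$ queries that build $\bar{T}$, the program SDP$(\bar{T})$ has $O(\card{T}^2)$ variables and $O(\card{T}^3)$ constraints, so a solution $X^*$ with additive error $1/\poly(n)$ is found in $\poly(\card{T})$ time by the ellipsoid (or an interior-point) method; one pass of the rounding loop runs in $\poly(\card{T})$ time, it is repeated $O(\log\frac1\delta)$ times, and selecting the pass minimizing $d(\tilde{T},X^*)$ costs $\poly(\card{T})$, which gives $\poly(\card{T},\log\frac1\delta)$ overall.

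Next, one uses that the $0/1$ matrix $M(T^*)$ of the underlying clustering of $T$ is feasible for SDP$(\bar{T})$: it has unit diagonal, is nonnegative, equals $\sum_{C}\mathbf{1}_{C}\mathbf{1}_{C}^{\top}$ (the sum over the clusters $C$ of $T^*$, with $\mathbf{1}_C$ the indicator vector of $C$) and is hence $\succeq 0$, and it satisfies $X_{uv}+X_{vw}-X_{uw}\le 1$ for all $u,v,w$ — the only case that is not immediate is $X_{uv}=X_{vw}=1$, which forces $u,v,w$ into a single cluster, so $X_{uw}=1$. Consequently $d(X^*,\bar{T})\le d(T^*,\bar{T})+1/\poly(n)$. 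Since $d(\cdot,\cdot)$ is the $\ell_1$ distance between the upper triangles of its matrix arguments it obeys the triangle inequality, so for the rounded clustering $\tilde{T}$,
\[
 d(\tilde{T},\bar{T})\le d(\tilde{T},X^*)+d(X^*,\bar{T})\le d(\tilde{T},X^*)+d(T^*,\bar{T})+1/\poly(n).
\]
Hence it is enough to show that the returned $\tilde{T}$ satisfies $d(\tilde{T},X^*)\le \tfrac{c_1}{2}\cdot\dfrac{\card{T}^{3/2}}{1-2p}$ with probability $1-O(\delta+\exp(-\card{T}))$. When $\card{T}=O(1/(1-2p)^2)$ this, and the whole theorem, is vacuous, since then $\tfrac{c_1\card{T}^{3/2}}{1-2p}\ge\binom{\card{T}}{2}\ge d(\tilde{T},\bar{T})$, so we may assume $\card{T}$ is large.

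The main work will be the per-pass analysis of the pivot rounding. Fixing the near-optimal $X^*$ and viewing it through its Gram vectors $x_u$ on the unit sphere, with $\langle x_u,x_v\rangle=X^*_{uv}$, together with the triangle constraints, we invoke the rounding analysis of Mathieu and Schudy~\cite{mathieu2010correlation} — a strengthening of the KwikCluster-style rounding of Ailon et al.~\cite{ailon2008aggregating} that relies crucially on the PSD constraint (the LP relaxation alone being too weak) — to bound $\E[d(\tilde{T},X^*)]$ for a single pass in terms of $\card{T}$ and the SDP value $d(X^*,\bar{T})\le d(T^*,\bar{T})\le\binom{\card{T}}{2}$; this gives $\E[d(\tilde{T},X^*)]=O(\card{T}^{3/2})\le\tfrac{c_1}{4}\cdot\dfrac{\card{T}^{3/2}}{1-2p}$. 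Carrying this out with explicit constants needs the regularity properties of the semi-random instance $\bar{T}$ that the analysis requires — for instance, that the number of corrupted pairs inside $T$ is $O(p\card{T}^2)$ — which hold with probability $1-\exp(-\Omega(\card{T}))$ and supply the $\exp(-\card{T})$ term; this is the only genuinely nonroutine step, the rest being bookkeeping. Given a per-pass bound $\E[d(\tilde{T},X^*)]\le M$, Markov gives $\Pr[d(\tilde{T},X^*)\le 2M]\ge\tfrac12$ on each independent pass, so over $\Theta(\log\frac1\delta)$ passes the one minimizing $d(\cdot,X^*)$ has $d(\tilde{T},X^*)\le 2M$ with probability at least $1-\delta$; together with $2M\le\tfrac{c_1}{2}\cdot\dfrac{\card{T}^{3/2}}{1-2p}$ and the displayed inequality, this completes the argument.
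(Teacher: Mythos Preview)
Your plan has a real gap at exactly the step you flag as ``the only genuinely nonroutine step.'' The Mathieu--Schudy rounding guarantee (their Theorem~5) is \emph{not} a bound of the form $\E[d(\tilde T,X^*)]=O(\card{T}^{3/2})$; it says only that $\E[d(\tilde T,X^*)]\le 3\,d(C',X^*)$ for every clustering $C'$, in particular for $C'=T^*$. So after Markov-plus-repetition you get $d(\tilde T,X^*)\le 4\,d(T^*,X^*)$ with probability $1-\delta$, and the entire burden is to show $d(T^*,X^*)=O\!\left(\card{T}^{3/2}/(1-2p)\right)$. This is precisely the part you wave away as bookkeeping, and neither the SDP value $d(X^*,\bar T)$ nor a crude ``number of corrupted pairs is $O(p\card{T}^2)$'' suffices: the SDP value can be $\Theta(\card{T}^2)$ when $p$ is bounded away from $0$, so no function of it and $\card{T}$ alone yields $\card{T}^{3/2}$.

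The paper obtains $d(T^*,X^*)\le O\!\left(\card{T}^{3/2}/(1-2p)\right)$ by a spectral argument, not a counting one. One introduces the ``fully adversarial'' realization $E$ (the adversary always flips) and its $\pm1$ version $\hat E$, notes $\E\hat E=(1-2p)\hat T^*$, and rewrites $d(T^*,X^*)=\tfrac{1}{2(1-2p)}\big(\E\hat E\cdot X^*-\E\hat E\cdot T^*\big)$; this is where the $1/(1-2p)$ factor actually originates, not from the trivial inequality $1-2p\le1$. The $\exp(-\card{T})$ term is the failure probability of the matrix concentration bound $\lvert\hat E\cdot Y-\E\hat E\cdot Y\rvert\le c\,\card{T}^{3/2}$ simultaneously for all near-PSD $Y$ with trace $O(\card{T})$ (Lemma~23 in \cite{mathieu2010correlation}, via a spectral-norm bound on $\hat E-\E\hat E$), applied to $Y=X^*$ and $Y=T^*$. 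Finally, because the adversary in the semi-random model may choose \emph{not} to flip, one needs the monotonicity observation $d(X^*,E)-d(T^*,E)\le d(X^*,\bar T)-d(T^*,\bar T)\le 1/\poly(n)$ to close the chain; your proposal does not mention this step at all. In short: the rounding yields a \emph{relative} guarantee against $d(T^*,X^*)$, and the $\card{T}^{3/2}/(1-2p)$ additive error together with the $\exp(-\card{T})$ probability both come from the spectral analysis of $d(T^*,X^*)$, not from the rounding or from edge-count regularity.
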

	We remark that Theorem~\ref{th app} is implicit in the proof of Theorem 1 in \cite{mathieu2010correlation}. 
	Here, we list the differences between the two results. 
	First, the goal of \cite{mathieu2010correlation} is to design a $(1+o_n(1))$-approximate algorithm, while here we focus on the additive error. Second, in \cite{mathieu2010correlation}, Mathieu and Schudy used the optimal solution to the SDP to do rounding.
	However, to the best of our knowledge, it is unknown if such solution can be obtained in polynomial time.
	This is why, in this paper, we consider a near optimal solution and we show that it is sufficient to achieve the same theoretical guarantee.
	Finally, in \cite{mathieu2010correlation}, the authors studied the performance of the algorithm in expectation, while here we give an exact bound for the probability that Algorithm~\ref{alg one round} succeeds. The proof of Theorem~\ref{th app} is given in to Appendix~\ref{apendix th5}.

	\subsection{Structure of the approximate correlation clustering}\label{sec oneround}
	
	In Section~\ref{sec SDP}, we have seen that, from Theorem~\ref{th app}, with high probability, the output of Algorithm~\ref{alg one round} is close to the underlying clustering function. 
	In this section, we study the structure of the output of Algorithm~\ref{alg one round}. 
	We will see that, if we run Algorithm~\ref{alg one round} over $T$ with a large enough size, then the clusters in the output must contain some big clusters and all such big clusters are distinct $\left(\eta,V^*_i\right)$-biased sets. 
	We summarize the main result of this section in the following theorem.

	\begin{theorem}\label{th one round}
		Let $T \subseteq V$ be a set of points such that $\card{T}=ts_t,$ where $s_t = \frac{c't^3 \log n}{\left(1-2p\right)^{6}}$, $c'$ is a large enough constant and $t \in \N^+$. Let $\tilde{T}= \textsc{ApproxCorrelationCluster}(T)$ and $\{\hat{T_1},\dots,\hat{T_h}\} : = \{\tilde{T_i} \in \tilde{T} \mid \card{\tilde{T_i}} >s_t/2\}$. Let $\eta = \frac{1}{4}+\frac{p}{2}.$ Under the semi-random model, with probability at least $1-1/\poly(n)$ the following events happen.
		\begin{itemize}
			\item If there is some $i \in [k]$ such that $\card{T_i^*}>s_t$, where $T^*_i=T \cap V^*_i$, then $h>0$.
			\item For every $i \in [h]$, $\hat{T_i}$ is an $\left(\eta,V^*_j\right)$-biased set for some $j \in [k]$. 
			\item For every $i,j \in [h], i \neq j$, there is no $\ell \in [k]$ such that $\hat{T_i},\hat{T_j}$ are both  $\left(\eta,V^*_\ell\right)$-biased sets.
		\end{itemize}
	\end{theorem}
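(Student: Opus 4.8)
\emph{Proof proposal.} The plan is to reduce all three bullets to a single global estimate: with probability $1-1/\poly(n)$,
\[
d(\tilde T,T^*)\ \le\ D_{\max}\ :=\ O\!\left(\frac{\card{T}^{3/2}}{(1-2p)^2}\right).
\]
Since $\card{T}=ts_t$ with $s_t=\frac{c't^3\log n}{(1-2p)^6}$, a direct computation gives $D_{\max}=O\!\big(\frac{1}{\sqrt{c'\log n}}\big)\cdot(1-2p)s_t^2$, so for a large enough constant $c'$ we may assume $D_{\max}<\tfrac{(1-2p)s_t^2}{32}$. Granted this, each bullet follows by observing that its failure forces more than $D_{\max}$ disagreements between $\tilde T$ and $T^*$, a contradiction. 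Thus the whole argument rests on the global estimate plus three short counting steps.

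To prove the estimate, let $A$ be the set of point pairs of $T$ on which $\bar T\neq T^*$; every such pair is corrupted, so $A$ is contained in the random set $C$ in which each pair of $T$ lies independently with probability $p$. For a clustering $V'$ of $T$, write $D_{V'}=\{(u,v):V'(u,v)\neq T^*(u,v)\}$, so $\card{D_{V'}}=d(V',T^*)$; splitting the pair–sum defining $d$ over $A$ and its complement gives the exact identity $d(V',\bar T)=d(V',T^*)+\card{A}-2\card{A\cap D_{V'}}$. Applying this to $V'=\tilde T$ and to $V'=T^*$ (where $D_{T^*}=\emptyset$) and invoking Theorem~\ref{th app} — which holds with probability $1-1/\poly(n)$ since $\delta=1/\poly(n)$ and $\card{T}\ge c'\log n$ makes $\exp(-\card{T})\le 1/\poly(n)$ — yields $d(\tilde T,T^*)\le 2\card{A\cap D_{\tilde T}}+\frac{c_1\card{T}^{3/2}}{1-2p}$. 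The crux is to bound $\card{A\cap D_{\tilde T}}$ even though $\tilde T$, hence $D_{\tilde T}$, depends on the adversarially corrupted graph. Since $A\subseteq C$ and $C$ has independent coordinates, for any \emph{fixed} pair set $D$ a multiplicative Chernoff bound gives $\Pr[\card{C\cap D}\ge (p+\tfrac{1-2p}{4})\card{D}]\le \exp(-\Omega((1-2p)^2\card{D}))$; union–bounding over all clusterings of $T$, of which there are at most $\card{T}^{\card{T}}=\exp(\card{T}\ln\card{T})$, shows that with probability $1-1/\poly(n)$ every clustering $V'$ with $d(V',T^*)\ge \frac{C_0(\card{T}\ln\card{T}+\log n)}{(1-2p)^2}$ satisfies $\card{A\cap D_{V'}}\le (p+\tfrac{1-2p}{4})\,d(V',T^*)$. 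If $\tilde T$ lies below that threshold we are already done; otherwise, plugging the bound into the previous display gives $\tfrac{1-2p}{2}\,d(\tilde T,T^*)\le \frac{c_1\card{T}^{3/2}}{1-2p}$. Either way $d(\tilde T,T^*)\le D_{\max}$, because $\card{T}=ts_t\ge c'\log n$ makes $\card{T}^{3/2}$ dominate both $\card{T}\ln\card{T}$ and $\log n$.

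The three counting steps, all conditioned on $d(\tilde T,T^*)\le D_{\max}<\tfrac{(1-2p)s_t^2}{32}$, are then immediate. For the first bullet, if $\card{T^*_i}>s_t$ but every cluster of $\tilde T$ has size $\le s_t/2$, then $\tilde T$ cuts $T^*_i$ into parts of size $\le s_t/2$, so it separates at least $\binom{\card{T^*_i}}{2}-\tfrac{s_t\card{T^*_i}}{4}>\tfrac{s_t^2}{4}$ pairs of $T^*_i$ (using $\sum_j\binom{x_j}{2}\le \tfrac{\max_j x_j}{2}\sum_j x_j$ and $\card{T^*_i}>s_t$), all of which are disagreements with $T^*$ — contradiction, so $h>0$. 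For the second bullet, if $\hat T_i$ with $m:=\card{\hat T_i}>s_t/2$ is $\eta$-bad then $\max_j\card{\hat T_i\cap V^*_j}<(\tfrac12+\eta)m$, so the number of cross pairs of $\hat T_i$ is $\tfrac12(m^2-\sum_j\card{\hat T_i\cap V^*_j}^2)>\tfrac{m^2}{2}(\tfrac12-\eta)=\tfrac{m^2(1-2p)}{8}>\tfrac{s_t^2(1-2p)}{32}$, again all disagreements with $T^*$. For the third bullet, if distinct (hence disjoint) $\hat T_i,\hat T_j$ with $\card{\hat T_i},\card{\hat T_j}>s_t/2$ were both $(\eta,V^*_\ell)$-biased, then $\card{\hat T_i\cap V^*_\ell}\cdot\card{\hat T_j\cap V^*_\ell}>\tfrac14\card{\hat T_i}\card{\hat T_j}>\tfrac{s_t^2}{16}$ pairs have both endpoints in $V^*_\ell$ yet lie in different clusters of $\tilde T$ — once more a disagreement with $T^*$ exceeding $D_{\max}$.

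The step I expect to be the main obstacle is the uniform concentration used to control $\card{A\cap D_{\tilde T}}$: because the noise is adversarial and $\tilde T$ is computed from the corrupted graph, $D_{\tilde T}$ is correlated with the corruption pattern, so one cannot reason about $\card{A\cap D_{\tilde T}}$ pointwise. The remedy is to pay a union bound over the family of \emph{all} clusterings of $T$, whose size $\exp(O(\card{T}\ln\card{T}))$ is comfortably beaten by the Chernoff tail once $d(V',T^*)$ exceeds $\frac{C_0(\card{T}\ln\card{T}+\log n)}{(1-2p)^2}$, and to dispatch the easy sub-threshold case separately. The remaining ingredients — the counting identity, the application of Theorem~\ref{th app}, and the three bullet computations — are routine bookkeeping of constants.
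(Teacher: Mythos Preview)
Your proposal is correct and follows essentially the same route as the paper. The paper packages your uniform-concentration step as a standalone lemma (Theorem~\ref{th tool}): with high probability every clustering $V'$ with $d(V',T^*)\ge c(1-2p)s_t^2$ satisfies $d(\bar T,V')>d(\bar T,T^*)+\epsilon$, proved by the same Hoeffding/Chernoff tail plus union bound over the $\le\card{T}^{\card{T}}$ clusterings; combined with Theorem~\ref{th app} this forces $d(\tilde T,T^*)<c(1-2p)s_t^2$, which is your $D_{\max}$ bound written contrapositively. Your counting identity $d(V',\bar T)=d(V',T^*)+\card{A}-2\card{A\cap D_{V'}}$ is exactly the paper's Lemma~\ref{lm error}, and your three bullet computations match Claims~\ref{cl pf61}--\ref{cl pf63} (with slightly sharper constants: you get $s_t^2/4$, $(1-2p)s_t^2/32$, $s_t^2/16$ where the paper gets $s_t^2/8$, $(1-2p)s_t^2/64$, $s_t^2/16$).
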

	
	We sketch the proof of Theorem~\ref{th one round} here and leave the full proof to Appendix~\ref{apendix th6}. The key point of the proof of Theorem~\ref{th one round} is to show that, with high probability, any clustering function that is far from the underlying clustering function will have a large additive error. We summarize this result in Theorem~\ref{th tool} in Appendix~\ref{apendix th miss}. Based on this technical theorem, we will show that if any event in the statement of Theorem~\ref{th one round} does not happen, then $\tilde{T}$ must be significantly far from $T^*$. By Theorem~\ref{th app}, we know that, with high probability, $\tilde{T}$ has a small additive error and cannot be too far from $T^*$. Thus, the three events in the statement of Theorem~\ref{th one round} must happen together with high probability.

	\subsection{Sketch of the proof of Theorem~\ref{th final}}\label{sec proof}
	According to Theorem~\ref{th one round}, with probability at least $1-1/\poly(n)$, once the sampled set $T$ contains a subcluster of size $s_t$, $\textsc{ApproxCorrelationCluster}(T,1/\poly(n))$ will contain some large clusters $\hat{T_i}$. Each of them is an $(\eta,V^*_i)$-biased set and corresponds to a different underlying cluster. 
	Here, we choose $\eta=\frac{1}{4}+\frac{p}{2}$.
	By Hoeffding’s inequality, with probability at least $1-1/\poly(n)$, the corresponding subset $B_i$ of $\hat{T_i}$ is a $(\eta',V^*_i)$-biased set, where $\eta'=\frac{p+1}{3} \in (p,\eta)$. Lemma~\ref{lm test} then implies that we can use $B_i$ to recover $V^*_i$. 
	This shows that every element in the output is an underlying cluster. On the other hand, if there is an underlying cluster of size $\Omega\left(\frac{k^4\log n}{\left(1-2p\right)^{6}}\right)$ that has not been recovered, then at the end of the algorithm we have  $\card{V}=\Omega\left(\frac{k^4\log n}{\left(1-2p\right)^{6}}\right)$.
	However, after sampling $T$ from $V$ at most $O(\log k)$ times, $T$ contains a subcluster of size $s_t$ and the output will be updated. 
	This gives the correctness of Algorithm~\ref{alg clustering}.
	From the above argument, we can see that, throughout Algorithm~\ref{alg clustering}, we have $t= O(k)$, and thus $\card{T}=O(ks_k)$. 
	This also implies that we sample $T$ at most $O(k\log k)$ times. So the number of queries we spend on constructing sample sets is $O\left(k\log k \card{T}^2\right)$. From the correctness of the algorithm, we invoke Algorithm~\ref{alg test} a total of $O(nk)$ times to recover the underlying clusters. The total number of queries we perform to recover the underlying clusters is $O\left(\frac{nk\log n}{(1-2p)^2}\right)$. 
	Therefore, the query complexity of the algorithm can be bounded by $O\left(\frac{nk\log n}{(1-2p)^2}\right)+O\left(k\log k \card{T}^2\right)=O\left(\frac{nk\log n}{\left(1-2p\right)^2}+ \frac{k^9\log k \log^2 n}{\left(1-2p\right)^{12}} \right).$

	%\note{The algorithm in this section has been revised so that we drop the assumption $k \le \sqrt{n}$ in the statement.}
	
	\section{An efficient parameter-free algorithm under the fully-random model}\label{sec application}
	In this section, we explain how Algorithm~\ref{alg clustering} can be used to design a parameter-free algorithm for the fully-random model.
	Since we have an efficient algorithm for the semi-random model, in principle, we can design an algorithm by guessing the parameter $(1-2p)$ from $0$ to $1$ and applying Algorithm~\ref{alg clustering} for each guess, until we are very close to the true parameter. However, in this way we will need to pay an extra $\log(\frac{1}{1-2p})$ factor for the query complexity and it will require us to test when to stop the guess. 
	To overcome these problems, we sample a constant number of points $A \subseteq V$, and query $A \times V$ before doing clustering. 
	We will see that, by counting the disagreements of point pairs in $A$, we can estimate a good upper bound $\bar{p}$ for the true error parameter $p$. 
	Then, we can run $\textsc{Clustering}(V,\bar{p})$ to recover large underlying clusters efficiently. We present the following algorithm whose performance is stated in Theorem~\ref{th fully} presented in the introduction and we leave the proof of Theorem~\ref{th fully} to Appendix~\ref{apendix th full}.
	
	\begin{algorithm}[H]
		\caption{\textsc{FClustering}$(V)$ (Parameter-free algorithm under fully-random model)}\label{alg fclustering}
		\begin{algorithmic}
			\State Randomly select $A \subseteq V$ such that $\card{A}=9$.
			\For{any pair of vertices $u,v \in A$}
			\State Set $\text{count}_{uv}$ to be the number of vertices $w \in V$ such that $\oracle(u,w) \neq \oracle(v,w)$.
			\EndFor
			\State Let $\bar{p}: = \frac{1}{2} -\frac{1}{4}\sqrt{1-\frac{2M}{n}}$ for $M:= \min \{\text{count}_{uv} \mid u,v \in A, u \neq v\}$.
			\State \Return \textsc{Clustering}$(V,\bar{p})$
			
		\end{algorithmic}
	\end{algorithm}

	%\note{Add }
	
	\section{Acknowledgement}
	
	A. Del Pia is partially funded by ONR grant N00014-19-1-2322. Any opinions, findings, and conclusions or recommendations expressed in this material are those of the authors and do not necessarily reflect the views of the Office of Naval Research.

	\bibliographystyle{alpha}
	\bibliography{biblio}

	\appendix

\section{Proof of Lemma~\ref{lm test}} \label{apendix lm1}

%\begin{lpf}

For $u \in B,v \in V$ let $x_{uv}$ be the random variable such that 
\begin{align*}
	x_{uv} = \begin{cases}
		& 1 \ \text{if $(u,v)$ is not corrupted} \\
		& 0 \ \text{otherwise}.
	\end{cases}
\end{align*}

We first assume $v \in V^*_i$. It is sufficient to show $\sum_{u \in B\cap V^*_i} x_{uv}> \card{B}/2$ with high probability, because for every realization of $\oracle$, we have
\begin{align*}
	\sum_{u \in B}\oracle(u,v) = \sum_{u \in B\cap V^*_i} \oracle(u,v)+\sum_{u \in B\setminus V^*_i} \oracle(u,v)  \ge \sum_{u \in B\cap V^*_i} x_{uv}. 
\end{align*}
In expectation, we have 
\begin{align}\label{eq testin}
	\E \sum_{u \in B\cap V^*_i} x_{uv} = (1-p) \card{B \cap V^*_i} \ge (1-p)\left(\frac{1}{2}+\eta\right) \card{B} =\left( \frac{1}{2}-\frac{1}{2}p+\eta(1-p)\right ) \card{B}>\frac{1}{2} \card{B},
\end{align}
where the first inequality holds because $B$ is an $(\eta,V^*_i)$-biased set. The second inequality holds by the following calculation.
\begin{equation}\label{eq testin2}
	\begin{aligned}
		\left( \frac{1}{2}-\frac{1}{2}p+\eta(1-p)\right ) \card{B} - \frac{1}{2} \card{B} = \left(\eta-\eta p -\frac{1}{2}p\right)\card{B} & = \frac{1}{2}\left(p+\eta\right)\left(\frac{1}{2}-p\right)\card{B}+\left(\frac{3}{4}-\frac{1}{2}p\right)\left(\eta-p\right)\card{B} \\
		& \ge \frac{\eta}{2}\left(\frac{1}{2}-p\right)\card{B}>0,
	\end{aligned}
\end{equation}
where the first inequality holds because $0<p < \eta \le \frac{1}{2}$.

Since for every $u \in B \cap V^*_i$, $(u,v)$ is corrupted independently, by Hoeffding's inequality and \eqref{eq testin}, we have
\begin{align*}%\label{eq testin2}
	\Pr\left( \sum_{u \in B\cap V^*_i} x_{uv}<\frac{1}{2} \card{B}\right) & \le \exp \left(-2 \frac{\left(\E \sum_{u \in B\cap V^*_i} x_{uv} -\frac{1}{2} \card{B} \right)^2}{\card{B \cap V^*_i}}\right) \\
	& \le \exp \left(-\frac{1}{8}\eta^2(1-2p)^2\card{B}\right)
	\le \frac{1}{n^{10}}.
\end{align*}
Here, the second inequality holds by \eqref{eq testin2} and the last inequality follows by $\card{B}>\frac{80\log n}{\eta^2(1-2p)^2}.$ So every single point $v \in V^*_i$ has probability at most $1/n^{10}$ to be misclassified by \textsc{DegreeTest}$(v,B)$.

Next, we assume that $v \in V^*_j$ for some $j \neq i$. 
It is sufficient to show with high probability $\sum_{u \in B\setminus V^*_j}\left( 1-x_{uv}\right) \le  \eta \card{B}$, because 
\begin{align*}
	\sum_{u \in B}\oracle(u,v) = \sum_{u \in B\cap V^*_j}\oracle(u,v)+\sum_{u \in B\setminus V^*_j}\oracle
	(u,v) & \le \card{B \cap V^*_j} + \sum_{u \in B\setminus V^*_j} \left(1-x_{uv}\right) \\ 
	& \le \left(\frac{1}{2}-\eta\right)\card{B} + \sum_{u \in B\setminus V^*_j}\left( 1-x_{uv}\right).
\end{align*}
In expectation, we have 
\begin{align}\label{eq testoff}
	\E \sum_{u \in B\setminus V^*_j} \left(1-x_{uv}\right) = p \card{B\setminus V^*_j}<\eta \card{B}.
\end{align}
Since for every $u \in B \cap V^*_j$, $(u,v)$ is corrupted independently, by Hoeffding's inequality and \eqref{eq testoff}, we have 
\begin{align*}
	\Pr\left(\sum_{u \in B\setminus V^*_j} \left(1-x_{uv}\right)>\eta\card{B}\right) & \le \exp\left(-2 \frac{\left(\eta\card{B} - p\card{B \setminus V^*_j}\right)^2}{\card{B \setminus V^*_j}}\right) \\
	& \le \exp\left(-2\left(\eta-p\right)^2\card{B}\right) \le \frac{1}{n^{10}}.
\end{align*}
Thus, for every $v \in V$, with probability at most $1/n^{10}$, $v$ will be misclassified by \textsc{DegreeTest}$(v,B)$. By union bound, we know \textsc{DegreeTest}$(v,B)$ correctly classifies every $v \in V$ with probability at least $1-1/n^9$.
%
%\end{lpf}	
$\hfill \square$

\section{Technical discussion of efficient algorithm}\label{apendix sec3}
In this section, we give a discussion of previous techniques for designing efficient algorithms under the noise model. We will take a disagreement counting method as an example and show how it can be applied to design algorithms under the semi-random model and where its limitation is. We will consider a slightly weaker model here.

\begin{definition}(Non-adaptive semi-random model of clustering with the faulty oracle)
	Under definition~\ref{model 1}, the oracle $\oracle$ is defined in the following way. There is a set of unknown parameter $\{p_{uv} \ge 0 \mid u,v \in V\}$ and
	a known error parameter $p \in (0,1/2)$ such that, for every point pair $u,v$, with probability $1-p_{uv}$, $\oracle(u,v)$ outputs the true label of $(u,v)$ and with probability $p_{uv}$, $\oracle(u,v)$ outputs the wrong label of $(u,v)$, where $0 \le p_{uv} \le p$. 
\end{definition}

\subsection{A disagreement counting method to obtain $(\eta,V^*_i)$-biased sets}	\label{apendix bicluster}

From Lemma~\ref{lm test}, we know if we get an $(\eta,V^*_i)$-biased set of size $O(\log n)$, we can use it to recover $V^*_i$, by making $O(n\log n)$ queries. Thus, the key technique for designing an efficient algorithm is to obtain such $(\eta,V^*_i)$-biased sets by making a small number of queries. 
To address this problem, 
we start with a simple disagreement counting method, which has been heavily used under the fully-random model \cite{bansal2004correlation,mazumdar2017clustering,green2020clustering}. We consider the following simple procedure. Lemma~\ref{lm check} gives the theoretical guarantee for this simple procedure.

\begin{algorithm}[H]
	\caption{\textsc{DisagreementTest}$(u,v,T)$ (Check if $u,v$ are in the same cluster via a set $T \subseteq V$)}\label{alg check}
	\begin{algorithmic}
		\If{$\text{count}_v=\card{\{w \in T \mid \oracle(u,w) \neq \oracle
				(v,w)\}}>\frac{\card{T}}{2}$} \Return ``No" \textbf{else} \Return ``Yes''
		\EndIf
	\end{algorithmic}
\end{algorithm}

\begin{lemma}\label{lm check}
	Under the non-adaptive semi-random model, suppose $k=2$. If $\card{T} \ge \frac{100 \log n}{\left(1-2p\right)^4} $, then for every point pair $(u,v)$, with probability at least $1-1/\poly(n)$, the following event happens.
	\begin{itemize}
		\item If $u,v \in V^*_i$ for some $i \in [2]$, \textsc{DisagreementTest}$(u,v,T)$ returns ``Yes".
		\item If $u \in V^*_i, v \in V^*_j$ for $i \neq j$, \textsc{DisagreementTest}$(u,v,T)$ returns ``No".
	\end{itemize}
\end{lemma}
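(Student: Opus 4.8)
The plan is a one-shot concentration argument for a fixed pair $(u,v)$. For each $w \in T \setminus \{u,v\}$, let $X_w$ be the indicator of the disagreement event $\oracle(u,w) \neq \oracle(v,w)$; then $\text{count}_v = \sum_{w} X_w$ up to the at most two omitted terms, which will turn out to be harmless. The first point to record is that in the non-adaptive semi-random model every edge is flipped independently with its own fixed probability in $[0,p]$, so for a fixed $w$ the events ``$(u,w)$ flipped'' and ``$(v,w)$ flipped'' are independent, and for distinct $w$ the underlying edges are disjoint; hence the $X_w$ are mutually independent $\{0,1\}$ random variables and Hoeffding's inequality is available. This independence is exactly what fails for the general adaptive model of Definition~\ref{def semi}.

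Next I would bound $\E[X_w]$ in the two cases, which is where $k=2$ enters. If $u,v$ lie in the same cluster, the true labels of $(u,w)$ and $(v,w)$ agree, so a disagreement occurs precisely when exactly one of the two edges is flipped, giving $\E[X_w] = p_{uw}(1-p_{vw}) + (1-p_{uw})p_{vw}$; since $p < 1/2$ this expression is nondecreasing in each of $p_{uw},p_{vw}$ over $[0,p]^2$, so $\E[X_w] \le 2p(1-p) = \frac{1}{2} - \frac{1}{2}(1-2p)^2$. If $u,v$ lie in different clusters, then because $k=2$ every $w \in T$ lies in one of these two clusters, so the true labels of $(u,w)$ and $(v,w)$ disagree and a disagreement of the oracle occurs precisely when both or neither edge is flipped; thus $\E[X_w] = p_{uw}p_{vw} + (1-p_{uw})(1-p_{vw})$, which is nonincreasing in each argument over $[0,p]^2$ and hence at least $1 - 2p + 2p^2 = \frac{1}{2} + \frac{1}{2}(1-2p)^2$. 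Writing $\delta := \frac{1}{2}(1-2p)^2$, in both cases $\E[\text{count}_v]$ differs from $\card{T}/2$ by at least $\delta\card{T}$, on the correct side.

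Finally I would apply Hoeffding to this sum of $\card{T}$ independent Bernoullis. In the same-cluster case $\Pr[\text{count}_v > \card{T}/2] \le \exp(-2\delta^2 \card{T}) = \exp(-\frac{1}{2}(1-2p)^4\card{T})$, which is at most $n^{-50}$ by the hypothesis $\card{T} \ge 100\log n/(1-2p)^4$, so \textsc{DisagreementTest}$(u,v,T)$ outputs ``Yes'' as claimed; the different-cluster case is symmetric and bounds $\Pr[\text{count}_v \le \card{T}/2]$ the same way, giving ``No''. The two omitted terms ($w \in \{u,v\}$) only shift the threshold by a constant, which is absorbed since $\delta\card{T} = \Omega(\log n)$. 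Since the statement is asserted for a fixed pair, no union bound is needed, though one could union over all $\binom{n}{2}$ pairs at the cost of two extra factors of $n$ in the failure probability.

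The step I expect to require the most care is not the concentration but making explicit where the two hypotheses are genuinely used. Being in the non-adaptive model is what makes $\text{count}_v$ a sum of independent variables, so that Hoeffding applies at all; and $k=2$ is what guarantees that separated $u,v$ have opposite true labels to \emph{every} $w \in T$ --- for $k \ge 3$, a vertex $w$ in a third cluster would make $\E[X_w]$ small rather than large, destroying the second case, which is precisely why this lemma is restricted to two clusters. The optimization showing the worst case is $p_{uw} = p_{vw} = p$ and the Hoeffding bookkeeping are routine.
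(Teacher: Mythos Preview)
Your proposal is correct and follows essentially the same argument as the paper: compute $\Pr(\oracle(u,w)\neq\oracle(v,w))$ in each case, bound it by $\tfrac12 \mp \tfrac12(1-2p)^2$ using $p_{uw},p_{vw}\le p$, and apply Hoeffding to get the $\exp(-\tfrac12(1-2p)^4|T|)\le n^{-50}$ tail. Your added remarks on where the non-adaptive assumption and the $k=2$ assumption are actually used, and on the harmlessness of the $w\in\{u,v\}$ terms, are more explicit than what the paper writes but do not change the approach.
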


%\note{Move to the appendix}

\begin{proof}
	%For a point pair $e$, let $p_e$ be the probability that $\oracle(e)$ outputs a wrong answer.
	We first assume $u,v \in V^*_i$ for some $i \in [2]$. We have for every point pair $(u,v)$ and for every $w \in V$,
	\begin{align*}
		\Pr\left(\oracle(u,w) \neq \oracle
		(v,w) \right) = p_{uw}(1-p_{vw})+p_{vw}(1-p_{uw}) \le 2p(1-p) = \frac{1}{2} - \frac{1}{2}(1-2p)^2,
	\end{align*}
	since $\oracle(u,w) \neq \oracle
	(v,w)$ happens if and only if $\oracle$ gives a wrong answer to exactly one of $(u,w)$ and $(v,w)$.
	So when $u,v$ in the same cluster $V^*_i$, in expectation, we have 
	\begin{align*}
		\E \text{count}_v \le \frac{\card{T}}{2} - \frac{1}{2}(1-2p)^2 \card{T}.
	\end{align*}
	By Hoeffding's inequality, we have 
	\begin{align*}
		\Pr \left(\text{count}_v > \frac{\card{T}}{2} \right) \le \exp\left(- \frac{\left(1-2p\right)^4\card{T}}{2}\right) \le \frac{1}{n^{50}}.
	\end{align*}
	Next, we assume that $u,v$ belong to different clusters. For every such point pair $(u,v)$ and for every $w \in V$, we have 
	\begin{align*}
		\Pr\left(\oracle(u,w) \neq \oracle
		(v,w) \right) = p_{uw}p_{vw}+\left(1-p_{uw}\right)\left(1-p_{vw}\right) \ge 1-2p(1-p) = \frac{1}{2}+ \frac{1}{2}\left(1-2p\right)^2.
	\end{align*}
	By Hoeffding's inequality, we have 
	\begin{align*}
		\Pr \left(\text{count}_v \le \frac{\card{T}}{2} \right) \le \exp\left(- \frac{\left(1-2p\right)^4\card{T}}{2}\right) \le \frac{1}{n^{50}}.
	\end{align*}
\end{proof}

Based on Lemma~\ref{lm check}, we get the following simple algorithm for the special case where $k=2$ under the non-adaptive semi-random model.

\begin{algorithm}[H]
	\caption{\textsc{Biclustering}$(V,p)$ (Exactly recover 2 underlying clusters)}\label{alg k=2}
	\begin{algorithmic}
		
		\State Randomly select a point $u \in V$
		
		\State Let $T_1=\{u\}, T_2=\emptyset$
		
		\For{$i \in [\frac{640 \log n}{\left(1-2p\right)^2}]$}
		\State Randomly select a subset $T \subseteq V$ of size $\frac{100\log n}{\left(1-2p\right)^4}$ 
		\State Select $v \in V \setminus \left(T_1 \cup T_2\right)$
		\If{\textsc{DisagreementTest}$(u,v,T)$=``Yes''}
		\State $T_1 \gets T_1 \cup \{v\}$
		\Else
		\State $T_2 \gets T_2 \cup \{v\}$
		\EndIf
		
		\EndFor
		
		\State Let $B = \text{argmax}\{\card{T_1},\card{T_2}\}$
		
		\State Let $\tilde{V}_1=\{v \in V \mid \textsc{DegreeTest}(v,T_1)=\text{``Yes''}\}$ $\tilde{V}_2=\{v \in V \mid \textsc{DegreeTest}(v,T_1)=\text{``No''}\}$
		
		\State \Return $\tilde{V}_1,\tilde{V}_2$
	\end{algorithmic}
\end{algorithm}

\begin{theorem}\label{th k=2}
	There is an algorithm \textsc{Biclustering}$(V,p)$, such that under the non-adaptive semi-random model, suppose $k=2$, with probability at least $1-1/\poly(n)$, \textsc{Biclustering}$(V,p)$ exactly recovers $V^*$. Furthermore, the query complexity of \textsc{Biclustering}$(V,p)$ is $O\left(\frac{n\log n}{(1-2p)^2}+\frac{\log^2n}{(1-2p)^6}\right)$ and the running time of \textsc{Biclustering}$(V,p)$ is $O\left(\frac{n\log n}{(1-2p)^2}+\frac{\log^2n}{(1-2p)^6}\right)$.
\end{theorem}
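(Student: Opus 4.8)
The plan is to establish correctness in two stages mirroring the two phases of \textsc{Biclustering} — the for-loop that grows two \emph{pure} subclusters $T_1,T_2$, and the concluding sweep that classifies all of $V$ via \textsc{DegreeTest} — and then to bound the number of queries and the running time.

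First I would analyze the for-loop. Invoking Lemma~\ref{lm check} with $k=2$ and with the randomly chosen set $T$ of size $\card{T}=\frac{100\log n}{(1-2p)^4}$ (so the hypothesis of that lemma is met), each call $\textsc{DisagreementTest}(u,v,T)$ correctly decides whether $v$ lies in the same underlying cluster as $u$, except with probability $1/\poly(n)$. A union bound over the $\frac{640\log n}{(1-2p)^2}$ iterations shows that, with probability $1-1/\poly(n)$, every iteration is correct; on that event, if $u\in V^*_a$ then $T_1\subseteq V^*_a$ and $T_2\subseteq V^*_{3-a}$ hold throughout, and $\card{T_1}+\card{T_2}=1+\frac{640\log n}{(1-2p)^2}$. (If $n$ were so small that $V\setminus(T_1\cup T_2)$ could be exhausted during the loop, then $n=O\!\left(\frac{\log n}{(1-2p)^2}\right)$ and one may instead query all pairs and take majorities within the stated budget; I would dispatch this degenerate case separately.)

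Next I would handle the final sweep. Let $B$ denote the larger of $T_1,T_2$, i.e.\ the set used in the \textsc{DegreeTest} calls. Then $\card{B}\ge\frac{320\log n}{(1-2p)^2}$ and $B\subseteq V^*_c$ for some $c\in[2]$, so $B$ is in particular a $(1/2,V^*_c)$-biased set. Taking $\eta=1/2$, we have $\eta\in(p,1/2]$ and $\frac{320\log n}{(1-2p)^2}=\max\!\left\{\frac{80\log n}{\eta^2(1-2p)^2},\frac{5\log n}{(\eta-p)^2}\right\}$, so Lemma~\ref{lm test} applies and yields that, with probability $1-1/\poly(n)$, $\textsc{DegreeTest}(v,B)$ returns ``Yes'' for every $v\in V^*_c$ and ``No'' for every $v\notin V^*_c$. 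Consequently $\{\tilde V_1,\tilde V_2\}=\{V^*_c,\,V\setminus V^*_c\}=\{V^*_1,V^*_2\}$, i.e.\ $V^*$ is recovered exactly; a union bound over the two phases gives overall success probability $1-1/\poly(n)$.

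For the complexity, each $\textsc{DisagreementTest}(u,v,T)$ makes $2\card{T}=O\!\left(\frac{\log n}{(1-2p)^4}\right)$ queries, so the for-loop costs $\frac{640\log n}{(1-2p)^2}\cdot O\!\left(\frac{\log n}{(1-2p)^4}\right)=O\!\left(\frac{\log^2 n}{(1-2p)^6}\right)$ queries; the final sweep runs $\textsc{DegreeTest}(\cdot,B)$ once per point of $V$ (the sets $\tilde V_1,\tilde V_2$ being complementary), costing $n\cdot\card{B}=O\!\left(\frac{n\log n}{(1-2p)^2}\right)$ queries. Summing gives the stated query complexity, and since every operation is an elementary count the running time is of the same order. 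The main obstacle is not conceptual — the statement is largely a packaging of Lemmas~\ref{lm check} and~\ref{lm test} — but lies in the calibration of constants: the loop length $\frac{640\log n}{(1-2p)^2}$ must be large enough that $\max\{\card{T_1},\card{T_2}\}$ clears the threshold $\frac{320\log n}{(1-2p)^2}$ required by Lemma~\ref{lm test} with $\eta=1/2$, yet small enough that the total query count stays $O\!\left(\frac{\log^2 n}{(1-2p)^6}\right)$; one also has to verify that $T_1,T_2$ stay pure subclusters, which is precisely what the union bound over correct \textsc{DisagreementTest} calls secures.
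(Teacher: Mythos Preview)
Your proposal is correct and follows essentially the same route as the paper: invoke Lemma~\ref{lm check} together with a union bound over the loop iterations to certify that $T_1,T_2$ are pure subclusters, then apply Lemma~\ref{lm test} with $\eta=1/2$ to the larger set $B$ (of size at least $\frac{320\log n}{(1-2p)^2}$) to guarantee the final \textsc{DegreeTest} sweep recovers $V^*$, and finally add up the query costs of the two phases. Your treatment is in fact slightly more careful than the paper's own short proof, in that you explicitly verify the threshold in Lemma~\ref{lm test} is met and you flag the degenerate case $n=O(\log n/(1-2p)^2)$.
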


\subsection{Proof of Theorem~\ref{th k=2}}\label{apendix th1}

%\begin{proof}

According to Lemma~\ref{lm check} and union bound, we know that with probability at least $1-1/\poly(n)$, every point $v \in T_1$ belongs to the same underlying cluster as $u$ and every point $v \in T_2$ belongs to the different underlying cluster from $u$. Thus, we know that $B \subseteq V^*_i$ for some $i \in [2]$. Furthermore, since $\card{B} \ge \frac{320\log n}{(1-2p)^2} $, according to Lemma~\ref{lm test}, with $\eta = 1/2$, we know that with probability at least $1-1/\poly(n)$, all points in $\tilde{V_i}$, for $i \in [2]$, come from the same underlying cluster. Combine the above argument together, with probability at least $1-1/\poly(n)$, \textsc{Biclustering}$(V,p)$ exactly recover $V^*$. 

It is simple to check the running time and the query complexity of \textsc{Biclustering}$(V,p)$ are the same.
Since in \textsc{Biclustering}$(V,p)$, we invoke \textsc{DisagreementTest}$(u,v,T)$ at most $ O \left(\frac{640 \log n}{\left(1-2p\right)^2}\right) $ times, and each time \textsc{DisagreementTest}$(u,v,T)$ queries $ O\left(\frac{\log n}{\left(1-2p\right)^4}\right) $ times. The total number of queries is $O\left( \frac{\log^2n}{(1-2p)^6}\right)$ in this stage. Since $\card{B} \le \frac{640 \log n}{\left(1-2p\right)^2} $ and we invoke \textsc{DegreeTest} $n$ times, in this stage the total number of queries is $O\left( \frac{n\log n}{(1-2p)^2}\right)$. Thus, the query complexity of \textsc{Biclustering}$(V,p)$ is $O\left(\frac{n\log n}{(1-2p)^2}+\frac{\log^2n}{(1-2p)^6}\right)$. 
%\end{proof}
$\hfill \square$

\bigskip

We can see that our algorithm achieves the same query complexity as the algorithm in \cite{green2020clustering} does, but our algorithm can succeed in a stronger model. It seems that using similar ideas we can design an efficient algorithm for more general settings. Unfortunately, as it turns out, this is a wrong approach. Although the naive disagreement counting method works very well under the fully-random model, under the semi-random model it only works on some very restrictive cases. 

%\note{I am a little confused about how to present the two counter examples properly. I moved the proof to the appendix and leave a sketch here. But it seems that this part is still long, compared to other sections.}

\subsection{Disagreement counting method fails under general semi-random model}

In fact, we can construct examples to show the disagreement counting method can easily fail, even if we have a small number of clusters and a constant level of noise rate. In the first example, we will show in Lemma~\ref{lm check}, the assumption of the non-adaptive model is necessary. If we run Algorithm~\ref{alg check} under the adaptive semi-random model, we will fail even if we have only two clusters.

\begin{example}\label{ex general}
	Under the adaptive semi-random model, suppose $k=2$ and $p \ge 1- \frac{\sqrt{2}}{2}$. There is an instance such that for every subset $T \subseteq V$ and for every point pair $u,v$, with probability $1/2$, the following event happens
	\begin{itemize}
		\item If $u,v \in V^*_i$ for some $i \in [2]$, \textsc{DisagreementTest}$(u,v,T)$ returns ``No".
		\item If $u \in V^*_i, v \in V^*_j$ for $i \neq j$, \textsc{DisagreementTest}$(u,v,T)$ returns ``Yes".
	\end{itemize}
\end{example}

%\subsection{Proof of Example~\ref{ex general}}\label{apendix ex1}

\begin{proof}
	It is sufficient to show that for every $q \in [(1-p)^2,1-p+p^2]$ and $r \in [p(1-p),p(2-p)]$, there is an adversary such that for every $u,v$ in the same underlying cluster, $\Pr\left(\oracle(u,w)=\oracle(v,w)\right) = q, \forall w \in V$ and for every $u,v$ in different underlying clusters, $\Pr\left(\oracle(u,w) = \oracle(v,w)\right) = r, \forall w \in V.$ This is because when $p \ge 1- \frac{\sqrt{2}}{2}$, we have 
	\begin{align*}
		p(1-p) \le (1-p)^2 \le \frac{1}{2} \le p(2-p) \le 1-p+p^2.
	\end{align*}
	By setting $q=r=1/2$, we know for every tuple $u,v,w$, $\Pr\left(\oracle(u,w) = \oracle(v,w)\right) = 1/2$, which directly implies that 
	\begin{align*}
		\Pr\left(\text{count}_v > \frac{\card{T}}{2}\right)
		&= \Pr\left(\textsc{DisagreementTest}(u,v,T)  =\text{``No"}\right)\\
		&=  \Pr\left(\textsc{DisagreementTest}(u,v,T)=\text{``Yes"}\right)= \Pr\left(\text{count}_v \le  \frac{\card{T}}{2}\right)
		= \frac{1}{2},
	\end{align*}
	for every choice of $u,v,T$. This implies the adversary can make it impossible to distinguish if two point $u,v$ in the same underlying cluster or not by counting the disagreement. Now we show this.
	
	We consider an adversary that works in the following way. For every point pair $u,v$ in a same underlying cluster and any point $w \in V$, let $(u,w)$ be the first point pair we query. The adversary always chooses to output a wrong label of $(u,w)$, when $(u,w)$ is corrupted. If $\oracle(u,w)$ outputs the correct label, the adversary chooses to output a correct label of $(v,w)$ with probability $p_{cc} \in [0,1]$, when $(v,w)$ is corrupted. If $\oracle(u,w)$ outputs the wrong label, the adversary chooses to output a correct label of $(v,w)$ with probability $p_{nc} \in [0,1]$, when $(v,w)$ is corrupted. Thus, 
	\begin{align*}
		\Pr\left(\oracle(u,w) = \oracle(v,w) \right) = (1-p)(1-p+pp_{cc})+p^2(1-p_{nc}) \in [(1-p)^2,1-p+p^2],
	\end{align*}
	where both the upper bound and the lower bound are achievable. By continuity, for every $q \in [(1-p)^2,1-p+p^2]$, we can select proper parameters $p_{cc},p_{nc}$ to exactly match the probability. Similarly, for every $(u,v)$ in different clusters, let $(u,w)$ be the first point pair we query. The adversary always chooses to output a wrong label of $(u,w)$, when $(u,w)$ is corrupted. If $\oracle(u,w)$ outputs the correct label, the adversary chooses to output a correct label of $(v,w)$ with probability $p'_{cc} \in [0,1]$, when $(v,w)$ is corrupted. If $\oracle(u,w)$ outputs the wrong label, the adversary chooses to output a correct label of $(v,w)$ with probability $p'_{nc} \in [0,1]$, when $(v,w)$ is corrupted. Thus, 
	\begin{align*}
		\Pr\left(\oracle(u,w) = \oracle(v,w) \right) = (1-p)p(1-p'_{cc})+p(1-p+pp'_{nc}) \in [p(1-p),p(2-p)],
	\end{align*}
	where both the upper bound and the lower bound are achievable. By continuity, for every $r \in [p(1-p),p(2-p)]$, we can select proper parameter $p'_{cc},p'_{nc}$ to exactly match the probability.
\end{proof}

%\note{Move to the appendix}

We have seen Algorithm~\ref{alg check} can easily fail under the adaptive semi-random model, because of the power of the adversary. However, even if we work on the non-adaptive semi-random model, it is still hard to apply the disagreement counting technique to cases where we have more clusters.

\begin{example}\label{ex nonadaptive}
	Under the non-adaptive semi-random model, there is an instance with $k=3$, $\card{V^*_i} = n/3$, for $i \in [3]$ and $p=1/3$ such that for a subset $T$ uniformly selected from $V$ and for every $u,v$ that are in different underlying clusters, with probability $1/2$, \textsc{DisagreementTest}$(u,v,T)$ returns ``Yes".
\end{example}

%\subsection{Proof of Example~\ref{ex nonadaptive}} \label{apendix ex2}

\begin{proof}
	We design the oracle in the following way. $\oracle(u,v) = 0$ with probability $p_{in}$ for every $u,v$ in the same underlying cluster. $\oracle(u,v) = 1$ with probability $p_{out}$ for every $u,v$ in different underlying clusters. Let $(u,v)$ be a pair of points in different underlying clusters. Denote by $V^*_u$ the underlying cluster that $u$ belongs to and denote by $V^*_v$ the underlying cluster that $v$ belongs to. Let $w \in V^*_u \cup V^*_v$, Then 
	\begin{align*}
		\Pr\left(\oracle(u,w) = \oracle(v,w) \right) = \left(1-p_{in}\right)p_{out} + \left(1-p_{out}\right)p_{in}.
	\end{align*}
	Let $w \in V \setminus \left(V^*_u \cup V^*_v\right)$, then
	\begin{align*}
		\Pr\left(\oracle(u,w) = \oracle(v,w) \right) = p^2_{out} + \left(1-p_{out}\right)^2.
	\end{align*}
	We pick $p_{in} = \frac{1}{2}-\frac{1}{4\sqrt{2}}$ and $p_{out} = \frac{1}{2}-\frac{\sqrt{2}}{4}$. It can be checked that $p_{out}<p_{in}<1/3$. If we uniform pick some $w \in V$, we have 
	\begin{align*}
		\Pr\left(\oracle(u,w) = \oracle(v,w) \right) = \frac{2}{3} \left( \left(1-p_{in}\right)p_{out} + \left(1-p_{out}\right)p_{in} \right) + \frac{1}{3} \left(  p^2_{out} + \left(1-p_{out}\right)^2 \right) = \frac{1}{2}.
	\end{align*}
	This implies for every subset $T$ uniformly selected from $V$ and every $u,v$ in different underlying clusters,
	\begin{align*}
		\Pr\left(\textsc{DisagreementTest}(u,v,T)=\text{``Yes"}\right) = \Pr\left(\text{count}_v \le \frac{\card{T}}{2}\right) = \frac{1}{2}.
	\end{align*}
\end{proof}

\section{Missing proofs in Section~\ref{sec information}}

\subsection{Proof of Theorem~\ref{th estimation semi}}	\label{apendix information}
%\begin{proof}
We first show the correctness of Algorithm~\ref{alg information}. Let $\textsc{Estimation}(V,p)=\{\tilde{V_1},\dots,\tilde{V_\ell}\}$. We start by showing every $\tilde{V_i}$ is an underlying cluster. Every $\tilde{V_i}$ is constructed by adding points from $V$ to some $T' \in \textsc{FindBigClusters}(T)$, for some $T$ through the algorithm. According to Theorem~\ref{th Process}, $T' = T \cap V^*_i$ for some $i \in [k]$ and $\card{T'} \ge \frac{320\log n}{(1-2p)^2}$. Thus, according to Lemma~\ref{lm test}, a point $v$ is added to $T'$ if and only if $v \in V^*_i$, which implies $\tilde{V_i} \subseteq V^*_i$. On the other hand, every $v \in V^*_i$ must be added to $\tilde{V_i}$. This is because at the time the first $T'=T \cap V^*_i$ is found by \textsc{FindBigClusters}$(T)$, $v$ is in either $T$ or $V$. If $v \in T$, then $v \in T'$, otherwise, $v$ will be added to $T'$ according to Lemma~\ref{lm test}. So every element in the output of Algorithm~\ref{alg information} is an underlying cluster. Next, if there is an underlying cluster $V^*_i$ of size at least $\frac{321 \log n}{(1-2p)^2}$ that is not recovered, then $V^*_i \subseteq T$ at the end of the algorithm. However, in this case, according to Theorem~\ref{th Process}, with probability at least $1-1/\poly(n)$, $\textsc{FindBigClusters}(T) \neq \emptyset$ and the output will be updated. Thus, Algorithm~\ref{alg information} recovers all underlying clusters of size at least $\Omega\left(\frac{\log n}{(1-2p)^2}\right)$ with probability at least $1-1/\poly(n)$.

Finally, we show the query complexity of Algorithm~\ref{alg information}. Given a point $v$, before we put $v$ into $T$, we check if we can
assign $v$ to some cluster in $C$ and suppose $\card{C} = k'$. To check if $v$ can be assigned to a cluster in $C$, we need to query at most $O\left(\frac{k'\log n}{(1-2p)^2}\right)$ times. If $v$ can be assigned to a cluster in $C$, we do not need to query $v$ with other points anymore.
If $v$ cannot be assigned to a cluster in $C$, we add $v$ to $T$ and query $O(T)$ times. We notice that $\card{T} = O\left(\frac{(k-k')\log n}{(1-2p)^2}\right)$, because otherwise $T$ will contain a subcluster of size $\Omega\left(\frac{\log n}{(1-2p)^2}\right)$ and $T$ will be updated. This implies to assign a given point $v$, we need to query
\begin{align*}
	O\left(\frac{k'\log n}{(1-2p)^2}\right) + O\left(\frac{(k-k')\log n}{(1-2p)^2}\right) = O\left(\frac{k\log n}{(1-2p)^2}\right)
\end{align*}
times. Thus, the query complexity of Algorithm~\ref{alg information} is $O\left(\frac{nk\log n}{(1-2p)^2}\right)$.
$\hfill \square$
%\end{proof}

\subsection{Proof of Theorem~\ref{th Process}} \label{apendix th2}

In this part, we give the proof of Theorem~\ref{th Process}. We first state two technical lemmas and use these two lemmas to prove Theorem~\ref{th Process}. Then we give a full proof for the two technical lemmas.

The key part for the proof of Theorem~\ref{th Process} is to show the following two technical lemmas. Intuitively, we want to show if $T \subseteq V$ contains a large subcluster, then with high probability, this subcluster has no negative cut, while on the other hand, with high probability, any large subset of $T$ that is not a subcluster must have a negative cut. 

\begin{lemma}\label{lm clique}
	Under the semi-random model, let $S \subseteq V^*_i$ for some $i \in [k]$ such that $\card{S} \ge \frac{320 \log n}{(1-2p)^2}$, then with probability at least $1-1/\poly(n)$, $val_S = \min_{A \subseteq S} \sum_{u \in A}\sum_{v \in S\setminus A}w_{uv}>0$.
\end{lemma}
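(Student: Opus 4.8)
## Proof Plan for Lemma~\ref{lm clique}

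The plan is to exploit the fact that, since $S \subseteq V_i^*$, every pair inside $S$ has true label $1$, so the adversary can only work against us on the corrupted pairs, of which there are few. Let $x_{uv}$ be the indicator that $(u,v)$ is \emph{not} corrupted; under the semi-random model the $x_{uv}$ are independent with $\Pr(x_{uv}=0)=p$. The crucial observation is the \emph{pointwise} inequality
$$
w_{uv} \;=\; 2\oracle(u,v)-1 \;\ge\; 2x_{uv}-1 ,
$$
valid for every realization: if $(u,v)$ is uncorrupted the oracle returns the true label $1$ so $w_{uv}=1=2x_{uv}-1$, and if $(u,v)$ is corrupted then $w_{uv}\ge -1 = 2x_{uv}-1$. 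This bound holds no matter how adaptively the adversary chooses the corrupted labels, so it lets me discard the adversary entirely and reduce the claim to a statement about the independent variables $x_{uv}$. I regard this reduction as the main obstacle of the proof, and it is handled at the very beginning: once past it, the rest is a routine large-deviation plus union-bound argument.

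Write $m=\card{S}$. Fix a cut, i.e. a set $A$ with $\emptyset\subsetneq A\subsetneq S$ and $\card{A}=a$; it has $a(m-a)$ crossing pairs. By the pointwise bound,
$$
\sum_{u\in A}\sum_{v\in S\setminus A} w_{uv} \;\ge\; 2\sum_{u\in A}\sum_{v\in S\setminus A} x_{uv} \;-\; a(m-a),
$$
so it suffices to show $\sum_{\mathrm{cross}} x_{uv} > \tfrac12 a(m-a)$ for every cut. Since these crossing $x_{uv}$ are independent with $\E\sum_{\mathrm{cross}} x_{uv} = (1-p)\,a(m-a) > \tfrac12 a(m-a)$ (because $p<1/2$), Hoeffding's inequality applied with the deficit $(1-p)a(m-a)-\tfrac12 a(m-a)=\tfrac12(1-2p)a(m-a)$ gives
$$
\Pr\!\left(\sum_{\mathrm{cross}} x_{uv} \le \tfrac12 a(m-a)\right) \;\le\; \exp\!\left(-\tfrac12(1-2p)^2 a(m-a)\right).
$$

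Finally I would union-bound over all cuts. Using $\binom{m}{a}\le (em/a)^a$, and for $a\le m/2$ the estimate $a(m-a)\ge am/2$ together with the hypothesis $\card{S}\ge \frac{320\log n}{(1-2p)^2}$ (which makes $\tfrac12(1-2p)^2 a(m-a)\ge 80\,a\log n$), the $a$-th term of the union bound is at most $\exp\!\big(a\ln(em/a)-80\,a\log n\big)=n^{-\Omega(a)}$; summing this geometric series over $1\le a\le m/2$ and doubling by the symmetry $a\leftrightarrow m-a$ bounds the total failure probability by $1/\poly(n)$. On the complementary event every cut of $S$ has strictly positive weight, i.e. $val_S=\min_{\emptyset\subsetneq A\subsetneq S}\sum_{u\in A}\sum_{v\in S\setminus A}w_{uv}>0$, which is exactly the assertion of the lemma. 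The only delicate point in this last step is keeping the $\binom{m}{a}$ entropy factor negligible against the $a(m-a)$ term in the exponent, which is precisely what the constant $320$ buys us.
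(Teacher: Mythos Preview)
Your proof is correct and follows essentially the same approach as the paper: the paper defines $x_{uv}\in\{-1,1\}$ (equal to $1$ iff $(u,v)$ is uncorrupted) and uses the pointwise bound $w_{uv}\ge x_{uv}$, which is exactly your bound $w_{uv}\ge 2x_{uv}-1$ after the affine change of variables, and then applies Hoeffding plus a union bound over cut sizes $a\le |S|/2$ just as you do. The only cosmetic differences are the $\{0,1\}$ versus $\{-1,1\}$ coding and that the paper uses the cruder bound $\binom{|S|}{a}\le |S|^a$ in the union-bound step.
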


\begin{lemma}\label{lm cut}
	Let $T \subseteq V$ such that $\card{T} \ge \frac{320 \log n}{(1-2p)^2}$. For $i \in [k]$, let $T_i:= T \cap V^*_i$. Denote by $t: = \max_{i \in [k]} \card{T_i}$ and $s: = \frac{320 \log n}{(1-2p)^2}$. Under the semi-random model, with probability at least $1-1/\poly(n)$, any subset $S \subseteq T$ such that $\card{S} \ge \max \{t,s\}$ and $S$ is not a subcluster satisfies $val_S = \min_{A \subseteq S} \sum_{u \in A}\sum_{v \in S\setminus A}w_{uv} \le 0$.
\end{lemma}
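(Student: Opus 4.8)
\textbf{Proof plan for Lemma~\ref{lm cut}.}

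The plan is to show that a bad cut exists in any large non-subcluster set $S$ by exhibiting the \emph{obvious} candidate cut: partition $S$ according to which true cluster each point belongs to, grouping the points of the single largest represented cluster $V^*_j$ on one side and everyone else on the other. Since $S$ is not a subcluster, both sides are nonempty, so this is a genuine cut of $S$. I would then estimate the cut value $\sum_{u \in A}\sum_{v \in S \setminus A} w_{uv}$ over this partition. The edges crossing the cut connect points from different true clusters, so their \emph{true} label is $0$ and the true weight contribution is $-1$; in the semi-random model each such edge is corrupted with probability at most $p$, and a corrupted edge can flip the weight from $-1$ to $+1$, a swing of $2$. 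Hence in expectation the cut value is at most $-(1-2p)\card{A}\card{S\setminus A}$, which is very negative when $\card{A}$ is comparable to $\card{S}$.

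The key quantitative point is that $\card{A}$ is not too small relative to $\card{S}$: since $A = S \cap V^*_j$ where $V^*_j$ is the \emph{most} represented cluster in $S$, we have $\card{A} = \max_i \card{T_i \cap S} \ge \card{S}/k$ is NOT what we need directly — rather, I would use the hypothesis $\card{S}\ge \max\{t,s\}$ where $t = \max_i \card{T_i}$. This ensures $\card{S \setminus A} = \card{S} - \card{A} \ge \card{S} - t \ge 0$, and more importantly it forces $S$ to be spread across clusters in a controlled way. Actually the cleanest route: among all the cuts of $S$ induced by the true partition, pick the one maximizing $\card{A}\card{S\setminus A}$; a short combinatorial argument shows this product is at least $\Omega(\card{S})$ (it is at least $\card{S}-1$ when there are at least two nonempty pieces, taking $A$ to be all but one point's cluster, or one can take $A$ a single minority point to get product $\card{S}-1$). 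With product at least $\card{S}-1 \ge s - 1 = \Omega\!\left(\frac{\log n}{(1-2p)^2}\right)$ crossing edges, the expected cut value is at most $-(1-2p)(\card{S}-1)$.

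Then I would apply a concentration bound. Fixing the partition, the corruptions of the crossing edges are independent (or adversarially chosen but stochastically dominated, since each edge is corrupted independently with probability at most $p$ — the adversary only chooses the label conditioned on corruption, and here a corrupted cross-edge helps it most by outputting $1$), so Hoeffding's inequality gives that the cut value exceeds $0$ with probability at most $\exp\!\left(-\Omega((1-2p)^2 (\card{S}-1))\right) \le \exp(-\Omega(\log n)) = 1/\poly(n)$, using $\card{S}\ge s = \frac{320\log n}{(1-2p)^2}$. Finally I would take a union bound over all subsets $S \subseteq T$ with $\card{S}\ge \max\{t,s\}$: there are at most $2^{\card{T}}$ such sets, but the relevant observation is that $\card{T} = O\!\left(\frac{k\log n}{(1-2p)^2}\right)$ is itself poly-logarithmic, so the union bound costs only a factor $2^{\card{T}} = \poly(n)$, absorbed by making the constant $320$ large enough (or, more carefully, the number of \emph{distinct partitions into the ``largest cluster vs rest'' pattern} is at most $\poly(\card{T})$, which is even cleaner). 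The main obstacle is precisely this union-bound bookkeeping: I must ensure the failure probability $\exp(-\Omega((1-2p)^2\card{S}))$ beats $2^{\card{T}}$ for \emph{every} admissible $\card{S}$ simultaneously, which works because $\card{T}$ is small and because for larger $S$ the deviation bound is correspondingly stronger. A secondary subtlety is handling the adversary's adaptivity — I would argue that since we only need a \emph{lower} tail bound on crossing-edge weights and each edge is independently corrupted, the adversary's best strategy on a cross-edge is deterministic (output $1$), so the relevant sum is stochastically dominated by a sum of i.i.d. $\{-1,+1\}$ variables with mean $-(1-2p)$, to which Hoeffding applies cleanly.
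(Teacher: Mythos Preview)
Your plan has the right skeleton (exhibit a cluster-respecting cut of $S$, bound its value via Hoeffding, then union-bound), but the union-bound step does not close, and the gap is not just bookkeeping.

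First, the lemma places no upper bound on $\card{T}$: it is stated for arbitrary $T\subseteq V$ with $\card{T}\ge s$, so $\card{T}$ may be as large as $n$ and $2^{\card{T}}$ is not $\poly(n)$. Even if you import from the algorithm the bound $\card{T}=O\!\bigl(\tfrac{k\log n}{(1-2p)^2}\bigr)$, you only get $2^{\card{T}}=n^{\Theta(k/(1-2p)^2)}$, which is polynomial in $n$ only when $k$ and $\tfrac{1}{1-2p}$ are absolute constants. Your alternative count (``only $\poly(\card{T})$ distinct largest-cluster-vs-rest patterns'') conflates cuts of $T$ with cuts of $S$: two different sets $S,S'$ with the same dominant cluster $V^*_j$ still induce different random sums $\sum_{u\in S\cap V^*_j}\sum_{v\in S\setminus V^*_j}w_{uv}$ and $\sum_{u\in S'\cap V^*_j}\sum_{v\in S'\setminus V^*_j}w_{uv}$, so you cannot avoid enumerating the $S$'s. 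Second, and relatedly, your product bound $\card{A}\,\card{S\setminus A}\ge \card{S}-1$ is too weak. With product $P$, Hoeffding gives failure probability $\exp\bigl(-(1-2p)^2P/2\bigr)$; a product linear in $\card{S}$ yields an exponent $\Theta((1-2p)^2\card{S})$, whereas even the crude count $\binom{n}{\card{S}}=\exp(\card{S}\log n)$ already demands an exponent of order $\card{S}\log n$.

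The paper repairs both issues simultaneously by a case split on the size of the largest subcluster $S^*$ of $S$. If $\card{S^*}\le m/4$ (where $m:=\card{S}$), one greedily groups whole pieces $S\cap V^*_i$ to obtain a balanced cluster-respecting cut with product at least $m^2/8$; Hoeffding then gives $\exp\bigl(-\Omega((1-2p)^2 m^2)\bigr)$, which beats $\binom{n}{m}$ because $m\ge s$. If $\card{S^*}> m/4$, one uses the cut $(S^*,S\setminus S^*)$ with product $(m-r)r$ where $r:=\card{S\setminus S^*}$, so $m-r\ge 3m/4\ge 3s/4$ and the Hoeffding exponent is $\Omega(r\log n)$. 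The key move in this case is \emph{not} to count all $\binom{n}{m}$ sets but only those with a fixed dominant cluster and exactly $r$ outliers: at most $k\binom{t}{m-r}\binom{n}{r}$. The hypothesis $m\ge t$ is precisely what lets you bound $\binom{t}{m-r}\le\binom{m}{m-r}=\binom{m}{r}$, collapsing the count to $\le n\exp(2r\log n)$, which the Hoeffding exponent beats. This refined counting, parametrized by $r$ rather than $m$, is the missing idea in your plan.
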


Now we prove Theorem~\ref{th Process} using Lemma~\ref{lm clique} and Lemma~\ref{lm cut}.

\begin{proof} \textbf{of Theorem~\ref{th Process}} \ 
	For $i \in [k]$, let $T_i:= T \cap V^*_i$. Denote by $t: = \max_{i \in [k]} \card{T_i}$ and $s: = \frac{320 \log n}{(1-2p)^2}$. If $t < s$, according to Lemma~\ref{lm cut}, with probability at least $1-1/\poly(n)$, any subset $S$ of $T$, such that $\card{S} \ge s$ must have $val_S \le 0$. In this case, no element will be added to the output. If $t \ge s$, let $T'$ be the largest subcluster contained in $T$. According to Lemma~\ref{lm clique}, $val_{T'}>0$. Furthermore, according to Lemma~\ref{lm cut}, any subset of $T$ that has a larger size than $T'$ will contain a negative cut. Thus, the largest subcluster $T'$ is the largest subset of $T$ that contains no negative cut and will be added to the output. Since there are $k \le n$ underlying clusters, by union bound, we know with probability at least $1-1/\poly(n)$, the output of Algorithm~\ref{alg Process} is $\{T \cap V^*_i \mid \card{T \cap V^*_i}  \ge \frac{320 \log n}{(1-2p)^2}\}$.
\end{proof}

\subsubsection{Proof of Lemma~\ref{lm clique}}

%\begin{lpf}
Let $(A,S\setminus A)$ be a partition of $S$ such that $\card{A} \le \card{S}/2$.
For $u \in A,v \in S \setminus A$, let $x_{uv}$ be the random variable such that 
\begin{align*}
	x_{uv} = \begin{cases}
		& 1 \ \text{if $(u,v)$ is not corrupted} \\
		& -1 \ \text{otherwise}.
	\end{cases}
\end{align*}
Clearly, for every realization of the query result,
\begin{align*}
	\sum_{u \in A}\sum_{v \in S\setminus A}w_{uv} \ge \sum_{u \in A}\sum_{v \in S\setminus A}x_{uv}.
\end{align*}
Since 
\begin{align*}
	\E \sum_{u \in A}\sum_{v \in S\setminus A}x_{uv} =  \left(1-2p\right)\card{A}\card{S\setminus A},
\end{align*}
by Hoeffding's inequality, we have 
\begin{align*}
	\Pr\left(\sum_{u \in A}\sum_{v \in S\setminus A}w_{uv} \le 0\right) \le \Pr\left(\sum_{u \in A}\sum_{v \in S\setminus A}x_{uv} \le 0\right) \le  \exp \left( -\frac{\left(1-2p\right)^2\left(\card{A}\card{S\setminus A}\right)}{2} \right).
\end{align*}
Suppose $\card{A} = t$, where $t \in [\card{S}/2]$. By union bound, we have 
\begin{align*}
	\Pr \left(\exists A \subseteq S, \card{A} = t, \sum_{u \in A}\sum_{v \in S\setminus A}w_{uv} \le 0 \right) & \le \tbinom{\card{S}}{t}\exp \left( -\frac{\left(1-2p\right)^2\left(\card{A}\card{S\setminus A}\right)}{2} \right) \\
	& \le \exp \left( t\log \card{S}- \frac{\left(1-2p\right)^2t(\card{S}-t)}{2}\right) \\
	& \le \exp \left(-t\left(\frac{(1-2p)^2\card{S}}{4} - \log \card{S}\right)\right) \\ 
	& \le \exp \left(-24\log n\right) = \frac{1}{n^{24}},
\end{align*}
where the third inequality follows by $t \le \card{S}/2$ and the last inequality holds because $\frac{320 \log n}{(1-2p)^2} \le \card{S} \le n$ and $t \ge 1$. Using union bound again, we can get 
\begin{align*}
	\Pr \left(val_S \le 0\right) \le \sum_{t=1}^{\card{S}/2}\Pr \left(\exists A \subseteq S, \card{A} = t, \sum_{u \in A}\sum_{v \in S\setminus A}w_{uv} \le 0 \right) \le \frac{1}{\poly(n)}.
\end{align*}
So with probability at least $1-1/\poly(n)$, $val_S>0$.
$\hfill \square$	
%\end{lpf}	

\subsubsection{Proof of Lemma~\ref{lm cut}}

%\begin{lpf}

Let $S \subseteq T$. Denote by $m=\card{S}  \ge \max \{t,s\}$. Denote by $S^*$ the largest subcluster contained in $S$.
Let $(A,S\setminus A)$ be a partition of $S$ such that $\card{A} \le \card{S}/2$.
For $u \in A,v \in S \setminus A$, let $x_{uv}$ be the random variable such that 
\begin{align*}
	x_{uv} = \begin{cases}
		& 1 \ \text{if $(u,v)$ is not corrupted} \\
		& -1 \ \text{otherwise}.
	\end{cases}
\end{align*}

We first show with probability at least $1-1/\poly(n)$, any subset $S$ that satisfies the statement of Lemma~\ref{lm cut} and $\card{S^*} \le m/4$, must have $val_S \le 0$. We first fix such a set $S$.
We show with high probability we can construct a subset $A \subseteq S$ such that $\sum_{u \in A}\sum_{v \in S\setminus A}w_{uv} \le 0$. Without loss of generality, we assume that $0 \le \card{S \cap V^*_1} \le \dots \le \card{S \cap V^*_k} \le \card{S}/4$. 

We define $A: = \cup_{j=1}^{i^*} \left(S \cap V^*_j\right)$, where $i^*$ is the largest index such that $\card{\cup_{j=1}^{i^*} \left(S \cap V^*_j\right)} \le \card{S}/2$. We can see $\card{A}>\card{S}/4$, otherwise, we can put $S \cap V^*_{i^*+1}$ into $A$ and keep $\card{A} \le \card{S}/2$. This implies $\card{A}\card{S\setminus A} \ge \card{S}^2/8.$ 
In expectation, we have 
\begin{align*}
	\E  \sum_{u \in A}\sum_{v \in S\setminus A}-x_{uv} =  -\left(1-2p\right)\card{A}\card{S\setminus A}.
\end{align*}
By Hoeffding's inequality, we have 
\begin{align*}
	\Pr\left(val_S \ge 0\right)  &\le \Pr \left( \sum_{u \in A}\sum_{v \in S\setminus A}w_{uv} \ge 0 \right)  
	\le \Pr \left( \sum_{u \in A}\sum_{v \in S\setminus A}-x_{uv} \ge 0 \right) \\
	&	\le \exp\left(-\frac{(1-2p)^2\card{A}\card{S \setminus A}}{2}\right) 
	\le \exp \left(-\frac{(1-2p)^2m^2}{16}\right).
\end{align*}

By union bound, we have 
\begin{align*}
	\Pr \left(\exists S, \card{S}=m, \card{S^*} \le m/4, val_S>0\right) & \le \tbinom{n}{m}  \exp \left(-\frac{(1-2p)^2m^2}{16}\right) \\
	& \le \exp \left(m \log n - \frac{(1-2p)^2m^2}{16}\right) \\
	& =  \exp \left(-m \left( \frac{(1-2p)^2m}{16} - \log n\right)\right) \\
	& \le \exp (-4m\log n) \le \frac{1}{n^{80}},
\end{align*}
where in the third inequality, we use the fact $\frac{(1-2p)^2m}{16} \ge \frac{(1-2p)^2s}{16}  \ge 5\log n$.

Again, using union bound over $m$, we get 
\begin{align*}
	\Pr \left(\exists S, \card{S} \ge \max\{t,s\}, \card{S^*} \le \card{S}/4, val_S>0\right) \le n \Pr \left(\exists S, \card{S}=m, \card{S^*} \le m/4, val_S>0\right) \le \frac{1}{\poly(n)}.
\end{align*}

Thus, with probability at least $1-1/\poly(n)$,  any subset $S$ that satisfies the statement of Lemma~\ref{lm cut} and $\card{S^*} \le m/4$ must have $val_S \le 0$.

Next, we show with probability at least $1-1/\poly(n)$, any subset $S$ that satisfies the statement of Lemma~\ref{lm cut} and $m/4 \le \card{S^*} \le  m-1$ must have $val_S \le 0$. To simplify the notation, we denote by $r:=\card{S \setminus S^*}$. We fix such a set $S$. We show with high probability $\sum_{u \in S^*}\sum_{v \in S\setminus S^*}w_{uv} \le 0$.  By Hoeffding's inequality, we have 
\begin{align*}
	\Pr\left(val_S \ge 0\right)  &\le \Pr \left( \sum_{u \in S^*}\sum_{v \in S\setminus S^*}w_{uv} \ge 0 \right)  
	\le \Pr \left( \sum_{u \in S^*}\sum_{v \in S\setminus S^*}-x_{uv} \ge 0 \right) \\
	&	\le \exp\left(-\frac{(1-2p)^2\card{S^*}\card{S \setminus S^*}}{2}\right) 
	=\exp \left(-\frac{(1-2p)^2(m-r)r}{16}\right).
\end{align*}
Then we upper bound the number of such $S$. We first choose an index $i \in [k]$ such that $S \cap V^*_i = S^*$. The number of choices of the index is at most $k$. Then we choose $m-r$ points from $T \cap V^*_i$. The number of choices of the points is at most $\tbinom{t}{m-r}$. Finally, we choose $r$ points from the rest points in $T$. The number of such choices is at most $\tbinom{\card{T}}{r}$. So the number of such $S$ is at most
\begin{align*}
	k\tbinom{t}{m-r}\tbinom{\card{T}}{r} \le k\tbinom{m}{m-r}\tbinom{n}{r} = k\tbinom{m}{r}\tbinom{n}{r} \le k\exp \left(r (\log m+\log n)\right) \le n \exp(2r\log n),
\end{align*}
where the first inequality follows by $t \le m$ and $\card{T} \le n$ and the last inequality follows by $k \le n$ and $m \le n$. By union bound, we have 
\begin{align*}
	\Pr \left(\exists S, \card{S}=m, \card{S^*} = m-r, val_S>0\right) & \le n\exp \left(2r\log n-\frac{(1-2p)^2(m-r)r}{16}\right) \\
	& = n \exp \left(-2r\left(\frac{(1-2p)^2(m-r)}{32} -\log n \right)\right) \\
	%	& \le n \exp \left(-2r\left(\frac{(1-2p)^2(m-r)}{32} -\log n \right)\right) \\
	& \le n \exp \left(-2r\left(\frac{3(1-2p)^2m}{128} -\log n \right)\right) \\
	& \le n \exp \left(-10r\log n\right) \le \frac{1}{n^{9}}.
\end{align*}
Here, the second inequality follows by $1 \le r \le m/4$. The third inequality holds because $m \ge s$ and the last inequality holds since $r \ge 1$. 
Since $1 \le r \le m \le n$, by applying union bound over $r$ and $m$, we have 
\begin{align*}
	\Pr \left(\exists S, \card{S} \ge \max\{t,s\}, \card{S}/4 < \card{S^*} < \card{S}, val_S>0\right) \le n^2 \Pr \left(\exists S, \card{S}=m, \card{S^*} = m-r, val_S>0\right) \le \frac{1}{\poly(n)}.
\end{align*}

Thus, with probability at least $1-1/\poly(n)$, any subset $S \subseteq T$ such that $\card{S} \ge \max \{t,s\}$ and $S$ is not a subcluster satisfies $val_S = \min_{A \subseteq S} \sum_{u \in A}\sum_{v \in S\setminus A}w_{uv} \le 0$.
$\hfill \square$	

%\end{lpf}

\subsection{Example where Algorithm~1 in \cite{mazumdar2017clustering} fails} \label{apendix ex3}

In this part, we present an example where Algorithm 1 in \cite{mazumdar2017clustering} fails. We remark that the main difference of Algorithm~\ref{alg information} in this paper and Algorithm 1 in \cite{mazumdar2017clustering} is that given a sampled set $T$, we find the largest subset of $T$ that contains no negative cut, while they compute the heaviest subgraph of $T$.

\begin{example}\label{ex semi}
	Under the semi-random model, there is an instance with $k=2$ and $p \ge \frac{2}{5}$ such that with probability at least $1-o_n(1)$, Algorithm 1 in \cite{mazumdar2017clustering} fails to recover any cluster.
\end{example}

\begin{proof}
	Consider $V = V^*_1 \cup V^*_2$, where $\card{V^*_1}=\card{V^*_2}=\frac{n}{2}$. Let $p=2/5$ be the error parameter. We run Algorithm 1 in \cite{mazumdar2017clustering} on this example. The first step of the algorithm is to sample a set $T$ of size $s=\frac{16\log n}{(1-2p)^2}$. By Chernoff bound, with probability at least $1-o_n(1)$, $\card{T \cap V^*_1} \ge \frac{3s}{7}$ and $\card{T \cap V^*_1} \ge \frac{3s}{7}$. After getting such a sampled set $T$, the adversary works in the following way. For every corrupted point pair $(u,v)$, the adversary outputs the true label if $u,v$ are in the same underlying clusters and otherwise outputs a wrong label. 
	
	We will next show with probability at least $1-o_n(1)$, the largest subgraph of $T$ is $T$. To simplify the notation, let $A = T \cap V^*_1$ and $B= T \cap V^*_2$. We know for every $(u,v)$ such that $u \in A$ and $v \in B$, $w_{uv}=-1$ with probability $p$. By Hoeffding's inequality and union bound, with probability at least $1-o_n(1)$, for every $u \in A$, we have $\sum_{v \in B}w_{uv} \ge -\frac{\card{B}}{3}$ and for every $v \in B$, we have $\sum_{u \in A}w_{uv} \ge -\frac{\card{A}}{3}$. Now, let $S$ be an arbitrary subset of $T$ and assume of $V^*_i$ is the underlying cluster that the majority of points of $S$ come from. It is not hard to see, adding another point $v \in T \cap V^*_i$ will not decrease the total weight of $S$. So we can without loss of generality assume $A \subseteq S$ or $B \subseteq S$. We deal with the case when $A \subseteq S$ and the proof is the same when $B \subseteq S$. Suppose $\card{S \cap B} = x\card{B}$, where $x \in [0,1]$. Now we add the rest $(1-x)\card{B}$ points from $B$ to $S$ and we get $T$. Since for every $v \in B$, we have $\sum_{u \in A}w_{uv} \ge -\frac{\card{A}}{3}$, we know the increment of weight is at least
	\begin{align*}
		x(1-x)\card{B}^2+\frac{(1-x)^2\card{B}^2}{2}-\frac{(1-x)\card{A}\card{B}}{3} \ge x(1-x)\card{B}^2+\frac{(1-x)^2\card{B}^2}{2}-\frac{4(1-x)\card{B}^2}{9}>0,
	\end{align*}
	where the first inequality follows by the fact $\frac{\card{A}}{\card{B}} \le \frac{4}{3}$.
	This implies, with probability at least $1-o_n(1)$, $T$ itself is the largest subset of $T$. Furthermore, since $\card{T} \ge s$, we will extract $T$. In this case, we have already fail to recover any cluster.
\end{proof}

\section{Missing proof and discussion in Section~\ref{sec efficient}}

\subsection{Proof of Theorem~\ref{th app}} \label{apendix th5}

In this part, we discuss Theorem~\ref{th app} in detail. We will see where Theorem~\ref{th app} and the results in \cite{mathieu2010correlation} are different and why Theorem~\ref{th app} is true. 
To start with, we summarize the rounding step in Algorithm~\ref{alg one round} in the following algorithm. Let $T \subseteq V$ be a set of points and $F$ be a binary function over $T \times T$. We say a symmetric matrix $\hat{X}$ is good if 
\begin{itemize}
	\item $0 \le \hat{X}_{uv} \le 1$ for every $u,v \in T$,
	\item The distance between $\hat{X}$ and the feasible region of \eqref{pr SDP} is at most $1/\poly(\card{T})$,
	\item $d(\hat{X},F) \le d(X^*,F) + 1/\poly(\card{T})$, where $X^*$ is an optimal solution to \eqref{pr SDP}.
\end{itemize}

\begin{algorithm}[H]
	\caption{\textsc{SDPcluster}$(T,F)$ (Algorithm 2 in \cite{mathieu2010correlation})}\label{alg SDPcluster}
	\begin{algorithmic}
		\State Let $C=\emptyset$
		
		\State Let $\hat{X}$ be a good solution to \eqref{pr SDP}. %\Comment{$X^*$ may not be the optimal solution}
		
		%\State let $i=1$
		
		\While{$T \neq \emptyset$} \Comment{Use $\hat{X}$ to do rounding}
		
		\State Randomly select a point $T$ from $V$   
		
		\State Let $U = \{v\}$
		
		\For{$u \in T \setminus \{v\}$}
		
		\State Add $u$ to $U$ with probability $\hat{X}_{uv}$
		
		\EndFor
		
		\State $C \gets C \cup \{U\}$,  $T \gets T \setminus \{U\}$
		
		%\State $i \gets i+1$
		\EndWhile
		
		\State \Return $C$.
	\end{algorithmic}
\end{algorithm}

We remark that the only difference between Algorithm~\ref{alg SDPcluster} and Algorithm 2 in \cite{mathieu2010correlation} is that we use a good solution to do rounding, while they use an optimal solution to do rounding. Currently, we do not know a polynomial time algorithm that can solve general semi-definite programmings exactly. This is to say we do not know how to obtain an optimal solution to \eqref{pr SDP} in polynomial time.
Current theoretical guarantee for solving an SDP \cite{grotschel1981ellipsoid,alizadeh1995interior} is that for every $\epsilon \in (0,1)$, we can find a solution $\epsilon$-close to the feasible region with additive error at most $\epsilon$ in polynomial time. In our case, by choosing $\epsilon=1/\poly(n)$, this implies we can obtain a good solution to \eqref{pr SDP} in polynomial time via a naive rounding step to make sure the first condition in the definition of a good solution holds. This tiny change can ensure Algorithm~\ref{alg SDPcluster} definitely runs in polynomial time. In the following discussion, we will show Theorem~\ref{th app} is still true even if we do not use an optimal solution to do rounding.

\begin{theorem}\label{th approx}(Theorem 5 in \cite{mathieu2010correlation})  
	For every input $(T,F)$, let $\mathcal{A} = \textsc{SDPcluster}(T,F)$. 
	For every clustering function $C'$ over $T$, we have 
	\begin{align*}
		\E d(\mathcal{A},\hat{X}) \le 3 d(C',\hat{X}),
	\end{align*}
	where $\hat{X}$ is the good solution used in \textsc{SDPcluster}$(T,F)$.
\end{theorem}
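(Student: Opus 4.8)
The plan is to prove Theorem~\ref{th approx} by running the pivot‑rounding analysis of Ailon, Charikar and Newman~\cite{ailon2008aggregating}, in the form of Mathieu and Schudy~\cite{mathieu2010correlation}, directly on $\hat X$, being careful that it uses $\hat X$ only through the two relevant properties of a \emph{good} solution — $\hat X\in[0,1]^{T\times T}$ and $d(\hat X,\text{feasible region of }\eqref{pr SDP})\le 1/\poly(\card T)$ — and, among the constraints of \eqref{pr SDP}, only through the triangle inequalities $\hat X_{uv}+\hat X_{vw}-\hat X_{uw}\le1$ (not the PSD constraint, and not any optimality of $\hat X$). It is convenient to pass to the semimetric $x_{uv}:=1-\hat X_{uv}$, for which these constraints read $x_{uw}\le x_{uv}+x_{vw}$ and for which the rounding in \textsc{SDPcluster} places $u$ in the pivot's cluster with probability $1-x_{uv}$. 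The $\ell_1$‑closeness of $\hat X$ to the feasible region forces each individual coordinate, hence each triangle inequality, to hold up to an additive $1/\poly(\card T)$; this slack will enter the $O(\card T^{3})$ places a triangle inequality is invoked below and contribute only $O(1/\poly(\card T))$ in total, which I absorb into the statement (it is negligible wherever the theorem is applied).

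By linearity, $\E\,d(\mathcal A,\hat X)=\sum_{\{u,w\}}\E\,\abs{\mathcal A_{uw}-\hat X_{uw}}$, so it suffices to control, for each unordered pair, the expected cost the rounding pays on it. The device is to condition on the first iteration that \emph{resolves} $\{u,w\}$, i.e.\ at which one of $u,w$ is pulled into a cluster. To keep this history‑dependent conditioning clean, I would re‑describe \textsc{SDPcluster} through a uniformly random permutation of $T$: scan the vertices in that order, and whenever the next vertex is still present it becomes the pivot of the next iteration; this is the same algorithm in distribution, and now the resolving pivot of $\{u,w\}$ is simply the first vertex $v$ in the order for which either $v\in\{u,w\}$, or $v\notin\{u,w\}$ is a pivot whose coins pull $u$ or $w$ in. Conditioned on the resolving pivot being a member of the pair, the rounding's expected cost on $\{u,w\}$ is $2\hat X_{uw}(1-\hat X_{uw})$, which is at most $2\abs{C'_{uw}-\hat X_{uw}}$; conditioned on it being a third vertex $v$, the expected cost is an explicit function $g(\hat X_{vu},\hat X_{vw},\hat X_{uw})$, and I would charge the event ``$\{u,w\}$ resolved by $v$'' to the triple $\{u,v,w\}$.

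The crux is bounding the triple‑charged part — equivalently, showing that the pivot rounding is a $3$‑approximation for the (fractional) correlation‑clustering instance defined by $\hat X$, so that summing all the contributions gives $\le 3\min_{C'}d(C',\hat X)\le 3\,d(C',\hat X)$. This is precisely the Ailon--Charikar--Newman accounting: a fixed triple $\{a,b,c\}$ is the resolving triple of each of its three pairs, with the pivot playing the role of the third vertex, and the positions of $a,b,c$ in the random permutation make the probability that the triple resolves a particular one of its pairs a symmetric third‑share of the probability the triple is ``activated''; the per‑triple quantities $g(a,b,x)$ are then controlled by the triangle inequalities $a+b-x\le1$, $a+x-b\le1$, $b+x-a\le1$ together with $a,b,x\in[0,1]$, and the factor $3$ emerges from combining this symmetry with (approximate) triangle‑inequality feasibility, which must be used globally rather than triple by triple. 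Adding the within‑factor‑$2$ member‑pivot terms from the previous paragraph yields $\E\,d(\mathcal A,\hat X)\le 3\,d(C',\hat X)+O(1/\poly(\card T))$, and dropping the harmless additive error gives the statement.

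I expect the main obstacle to be exactly this conditioning‑and‑charging step: the distribution of the resolving pivot is genuinely history‑dependent, so one has to execute the random‑permutation coupling carefully enough that ``each of a triple's three roles contributes its fair share'' is literally true and the triangle inequalities apply cleanly — this is where the Ailon--Charikar--Newman / Mathieu--Schudy bookkeeping lives. The only real alternative is to invoke Theorem~5 of \cite{mathieu2010correlation} as a black box, having first verified (as in the first paragraph) that its proof never uses $\hat X$ beyond membership in $[0,1]^{T\times T}$ and feasibility; in that case the remaining work is purely the routine propagation of the $1/\poly(\card T)$ infeasibility slack through the triangle‑inequality applications so that it costs only $O(1/\poly(\card T))$ overall.
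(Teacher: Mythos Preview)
The paper does not give a self-contained proof at all: it simply refers the reader to \cite{mathieu2010correlation} and adds a one-line remark that ``as Mathieu and Schudy claimed in their proof, as long as $\hat X$ is a symmetric matrix in $[0,1]^{\card V\times\card V}$, Theorem~\ref{th approx} holds.'' In other words, the paper's entire argument is your final ``alternative'' (invoke \cite{mathieu2010correlation} as a black box), with one crucial sharpening of the verification step: the Mathieu--Schudy proof uses \emph{only} that $\hat X$ is symmetric with entries in $[0,1]$ --- not the triangle inequalities, not PSD, and not optimality.

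By contrast, your main line of attack builds the triangle inequalities (and hence the $1/\poly(\card T)$ near-feasibility of a good solution) into the analysis, so you are forced to propagate a slack through $O(\card T^{3})$ uses of it and end with $\E\,d(\mathcal A,\hat X)\le 3\,d(C',\hat X)+O(1/\poly(\card T))$, which you then ``drop.'' That last step is not legitimate as a proof of the theorem as stated; and the whole slack-tracking exercise is unnecessary once you accept the paper's observation that the pivot-rounding analysis for this statement is a factor-$3$ bound for \emph{probability-constrained} weighted correlation clustering (i.e., $w^{+}_{uv}+w^{-}_{uv}=1$), which the triple-charging argument in \cite{ailon2008aggregating,mathieu2010correlation} establishes for \emph{any} symmetric $\hat X\in[0,1]^{T\times T}$, without ever invoking $\hat X_{uv}+\hat X_{vw}-\hat X_{uw}\le 1$. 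So your sketch is over-engineered in exactly the place where the paper is sharpest: strip the triangle inequalities from your accounting and you recover the clean inequality with no additive error, matching the paper.
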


The proof of Theorem~\ref{th approx} can be found in \cite{mathieu2010correlation}. Readers may notice that the statement of Theorem~\ref{th approx} is slightly different from the original statement in \cite{mathieu2010correlation}. In the original statement, Mathieu and Schudy, restricted $\hat{X}$ to be an optimal solution to \eqref{pr SDP}, while we relax this restriction to good solutions. We remark that, as Mathieu and Schudy claimed in their proof, as long as $\hat{X}$ is a symmetric matrix in $[0,1]^{\card{V} \times \card{V}}$, Theorem~\ref{th approx} holds. An immediately corollary of Theorem~\ref{th approx} is if we do rounding $\Omega(\log \frac{1}{\delta})$ times and pick $\mathcal{A}^*$ to be the clustering that is closest to $\hat{X}$, then with probability $1-\delta$, $d(\mathcal{A}^*,\hat{X}) \le 4 d(C',\hat{X})$. Next, we will see why $\mathcal{A}^*$ can achieve an additive error $O\left(\frac{\card{T}^{3/2}}{1-2p}\right)$ with high probability.
%	\note{I add a short discussion about the SDP issue here.}

Let $M, N \in \R^{\card{T}\times\card{T}}$. We define $M\cdot N:= \sum_{u,v}M_{uv}N_{uv}$. For $F \in \{0,1\}^{\card{T}\times\card{T}}$, We define $\hat{F} \in \{-1,1\}^{\card{T}\times\card{T}}$ as follows:
\begin{align*}
	\hat{F}_{uv} = \begin{cases}
		& -1 \quad \text{if } F_{uv}=1 \\
		& 1 \quad \text{if } F_{uv}=0.
	\end{cases}
\end{align*}

%According to \cite{mathieu2010correlation}, we have the following claim.

\begin{claim}\label{cl p51}
	(Claim 16 in \cite{mathieu2010correlation})
	For every $M \in \{0,1\}^{\card{T}\times\card{T}}$ and $N \in [0,1]^{\card{T}\times\card{T}}$, we have
	\begin{align*}
		d(M,N) = \frac{1}{2}\left(\hat{M}\cdot N -\hat{M} \cdot M \right).
	\end{align*}
\end{claim}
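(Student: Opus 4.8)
The plan is to verify Claim~\ref{cl p51} by a direct entrywise computation, summing the identity over all pairs $u,v$. The key observation is that for a single pair $u,v$ with $M_{uv} \in \{0,1\}$ and $N_{uv} \in [0,1]$, the quantity $|M_{uv} - N_{uv}|$ can be written without absolute value once we know $M_{uv}$: if $M_{uv}=1$ then $|M_{uv}-N_{uv}| = 1 - N_{uv}$, and if $M_{uv}=0$ then $|M_{uv}-N_{uv}| = N_{uv}$. In both cases this equals $\frac{1}{2}(1 - \hat{M}_{uv} N_{uv} \cdot (\text{sign adjustment}))$, so I would check that $\frac{1}{2}(\hat{M}_{uv} N_{uv} - \hat{M}_{uv} M_{uv})$ reproduces exactly these two cases.

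Concretely, first I would handle the case $M_{uv} = 1$: here $\hat{M}_{uv} = -1$, so $\frac{1}{2}(\hat{M}_{uv}N_{uv} - \hat{M}_{uv}M_{uv}) = \frac{1}{2}(-N_{uv} - (-1)) = \frac{1}{2}(1 - N_{uv})$. Hmm — that gives $\frac{1}{2}(1-N_{uv})$, not $1 - N_{uv}$, so I should double-check whether the sum in $d(M,N)$ and the sum in $M \cdot N$ range over ordered pairs or unordered pairs; presumably $d$ is over unordered pairs $1 \le u \le v$ while $\cdot$ is over all ordered pairs, and the factor of $2$ (together with the diagonal contributing zero since $M_{uu}=N_{uu}=1$) reconciles the two. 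I would state this indexing convention explicitly at the start. Then for $M_{uv}=0$: $\hat{M}_{uv}=1$, so $\frac{1}{2}(\hat{M}_{uv}N_{uv} - \hat{M}_{uv}M_{uv}) = \frac{1}{2}(N_{uv} - 0) = \frac{1}{2}N_{uv}$, which again matches $\frac{1}{2}|M_{uv}-N_{uv}|$.

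Having established the per-entry identity $\frac{1}{2}(\hat{M}_{uv}N_{uv} - \hat{M}_{uv}M_{uv}) = \frac{1}{2}|M_{uv}-N_{uv}|$, I would sum over all ordered pairs $(u,v)$. The left side becomes $\frac{1}{2}(\hat{M}\cdot N - \hat{M}\cdot M)$ by definition of the dot product. The right side becomes $\frac{1}{2}\sum_{u,v}|M_{uv}-N_{uv}|$; using symmetry of $M$ and $N$ and the vanishing of diagonal terms, this equals $\sum_{1 \le u \le v \le \card{T}}|M_{uv}-N_{uv}| = d(M,N)$. This completes the proof.

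The main obstacle is really just bookkeeping: making sure the factor of $\frac{1}{2}$, the ordered-versus-unordered summation ranges, the symmetry assumptions on $M$ and $N$, and the treatment of the diagonal all line up consistently. There is no genuine mathematical difficulty — it is a one-line algebraic identity per entry — but the statement as written (with a $\frac{1}{2}$ outside and $d$ defined over $u \le v$) only makes sense under a specific convention, so the proof should pin that convention down before doing the computation. Since this claim is quoted directly from \cite{mathieu2010correlation}, I would also simply cite that source for the detailed verification if space is tight.
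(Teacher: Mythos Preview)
Your proof is correct. The paper does not actually supply its own proof of this claim; it simply records it as Claim~16 of \cite{mathieu2010correlation} and uses it as a black box, so there is nothing in the paper to compare against beyond the citation. Your entrywise verification---checking the two cases $M_{uv}\in\{0,1\}$ to get $\hat{M}_{uv}N_{uv}-\hat{M}_{uv}M_{uv}=|M_{uv}-N_{uv}|$, then summing over ordered pairs and invoking symmetry plus the vanishing diagonal to pass to $d(M,N)=\sum_{u\le v}|M_{uv}-N_{uv}|$---is exactly the standard argument and is what the cited reference does. Your caveat about the indexing convention is well placed: as literally stated (``for every $M,N$'') the identity needs $M,N$ symmetric with $M_{uu}=N_{uu}$, which holds in every application the paper makes of it.
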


Under the semi-random model, we define a symmetric random matrix $E \in \{0,1\}^{\card{T}\times\card{T}}$ in the following way. For every $(u,v)$ such that $u,v$ are in the same cluster in $V^*$, $E_{uv} = 0$ if and only if $(u,v)$ is corrupted. For every $(u,v)$ such $u,v$ are in the different clusters in $V^*$, $E_{uv} = 1$ if and only if $(u,v)$ is corrupted. Intuitively, $E_{uv} = \oracle(u,v)$ for every $u,v$, where the adversary in the oracle always gives the wrong answer.

\begin{claim}\label{cl p52}(Lemma 23 in \cite{mathieu2010correlation})
	Under the semi-random model, there is a constant $c>0$, such that with probability at least $1-4\exp(-\card{T})$, 
	\begin{align*}
		\abs{ \hat{E}\cdot X - \E \hat{E} \cdot X} \le c\card{T}^{\frac{3}{2}}
	\end{align*}
	for every symmetric matrix $X$ with trace at most $2\card{T}$ and smallest eigenvalue at least $-1/\poly(\card{T})$.
\end{claim}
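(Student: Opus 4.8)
The plan is to reduce the estimate, which must hold uniformly over all admissible matrices $X$, to a single high-probability bound on the operator norm of the centered random matrix $R := \hat{E} - \E\hat{E}$, and then to dispose of the quantifier over $X$ by convex duality between the operator and nuclear norms.

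The first step is to pin down the distribution of $R$. Write $\sigma_{uv} = -1$ if $u,v$ lie in the same cluster of $V^*$ and $\sigma_{uv} = +1$ otherwise, and let $c_{uv} \in \{0,1\}$ be the indicator that the pair $(u,v)$ is corrupted; by definition of the semi-random model the variables $\{c_{uv} : u < v\}$ are independent Bernoulli$(p)$. Checking the two cases in the definition of $E$ shows that $\hat{E}_{uv} = \sigma_{uv}(1 - 2c_{uv})$, regardless of whether $u,v$ share a cluster (the ``always-wrong'' adversary used to define $E$ adds no further randomness). Hence $\E \hat{E}_{uv} = (1-2p)\sigma_{uv}$ and $R_{uv} = 2\sigma_{uv}(p - c_{uv})$, so $R$ is a symmetric matrix whose entries on and above the diagonal are independent, mean zero, bounded by $2$ in absolute value, with variance $4p(1-p) \le 1$. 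In particular $R$ has exactly the distribution of the centered error matrix appearing in the fully-random analysis of \cite{mathieu2010correlation}, so one may invoke Lemma~23 there directly; alternatively the required estimate follows from a standard $\varepsilon$-net argument over the unit sphere combined with a Bernstein (Hoeffding) bound for each fixed quadratic form $z^{\top} R z$. Either way one obtains an absolute constant $c'$ with $\Pr\!\left( \|R\|_{\mathrm{op}} > c' \sqrt{\card{T}} \right) \le 4\exp(-\card{T})$.

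The second step handles the uniformity over $X$. Any admissible $X$ is symmetric with $\mathrm{tr}(X) \le 2\card{T}$ and $\lambda_{\min}(X) \ge -1/\poly(\card{T})$. Splitting $X = X^{+} - X^{-}$ into its positive and negative spectral parts, both positive semidefinite, gives $\mathrm{tr}(X^{-}) = \sum_{i : \lambda_i < 0} \abs{\lambda_i} \le \card{T}/\poly(\card{T}) \le 1/\poly(\card{T})$ and $\mathrm{tr}(X^{+}) = \mathrm{tr}(X) + \mathrm{tr}(X^{-}) \le 3\card{T}$. Since $\hat{E}\cdot X - \E\hat{E}\cdot X = \langle R, X^{+}\rangle - \langle R, X^{-}\rangle$ and $\abs{\langle R, Y\rangle} \le \|R\|_{\mathrm{op}}\,\|Y\|_{*} = \|R\|_{\mathrm{op}}\,\mathrm{tr}(Y)$ for every positive semidefinite $Y$, on the event $\{\|R\|_{\mathrm{op}} \le c'\sqrt{\card{T}}\}$ we get $\abs{\hat{E}\cdot X - \E\hat{E}\cdot X} \le c'\sqrt{\card{T}}\,\bigl(3\card{T} + 1/\poly(\card{T})\bigr) \le c\,\card{T}^{3/2}$ simultaneously for every admissible $X$, which is the claim with probability at least $1 - 4\exp(-\card{T})$.

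The part that needs the most care is the operator-norm tail bound with the stated form $1 - 4\exp(-\card{T})$ and a clean absolute constant; the distributional identification of $R$ and the duality step are routine. Since $R$ agrees in distribution with the matrix treated by Lemma~23 of \cite{mathieu2010correlation}, the cleanest route is simply to cite that lemma. The diagonal entries $R_{uu}$ contribute at most $\card{T}$ to $\abs{\langle R, X\rangle}$, which is lower order than $\card{T}^{3/2}$ and can be absorbed into the constant (or eliminated by declaring diagonal pairs never corrupted, so that $R_{uu}=0$).
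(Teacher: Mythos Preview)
Your proof is correct and follows essentially the same route as the paper's: reduce to a high-probability bound on the operator norm of $R=\hat E-\E\hat E$, then use the trace and near-PSD constraints on $X$ to convert this into the uniform bound $|R\cdot X|\le O(\card{T})\cdot\|R\|_{\mathrm{op}}$. The paper does the second step by writing out the spectral decomposition of $X$ and bounding $\sum_i\lambda_i v_i^\top R v_i$ directly, which is exactly your nuclear/operator-norm duality argument in different clothing; for the first step it cites Corollary~4.4.8 in Vershynin's book rather than Lemma~23 of \cite{mathieu2010correlation} or an $\varepsilon$-net argument, which is the cleaner citation here since the original Lemma~23 does not state the explicit tail $1-4\exp(-\card{T})$ that you need.
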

We remark that the statement of Claim~\ref{cl p52} is slightly different from the statement of Lemma 23 in \cite{mathieu2010correlation}. In the original statement, Mathieu and Schudy didn't give a concrete bound of the probability of success and $X$ is forced to be positive semi-definite with trace to be $\card{T}$. We remark that every good solution satisfies the statement of the claim.
Here we give a short proof of the claim by slightly modifying the proof of Lemma 23 in \cite{mathieu2010correlation}.

\begin{cpf}
	Write $M=\hat{E} - \E \hat{E}$. Write $X = \sum_{i=1}^{\card{T}}\lambda_iv_iv_i^T$ by doing spectral decomposition of $X$. Without loss of generality, we assume that $\lambda_1 \ge \dots \lambda_r \ge 0 \ge \lambda_{r+1} \ge \lambda_{\card{T}} \ge -1/\poly(\card{T})$. Notice that 
	\begin{align*}
		\abs{M \cdot X} = \abs{\sum_{i=1}^{\card{T}}\lambda_iv_i^TMv_i} & \le \sum_{i=1}^{r}\lambda_i\abs{v_i^TMv_i}-\sum_{i=r+1}^{\card{T}}\lambda_i\abs{v_i^TMv_i} \\
		& \le \sum_{i=1}^{r}\lambda_i\rho(M)-\sum_{i=r+1}^{\card{T}}\lambda_i\rho(M) \\
		& = \sum_{i=1}^{\card{T}}\lambda_i\rho(M)-2\sum_{i=r+1}^{\card{T}}\lambda_i\rho(M) \\
		& \le (2\card{T}+\frac{1}{\poly(\card{T})})\rho(M) \le 3\card{T}\rho(M).
	\end{align*}
	It is sufficient to show $\rho(M) = O(\sqrt{T})$ with probability at least $1-4\exp(\card{T})$. We notice that $M$ is symmetric matrix whose entries on and above the diagonal are independent mean-zero sub-gaussian random variables. By Corollary 4.4.8 in \cite{vershynin2018high}, there is a constant $c$ such that $\rho(M) \le c\sqrt{T}$ with probability at least $1-4\exp(\card{T})$.

\end{cpf}

We know with probability $1-\delta$, $d(\mathcal{A}^*,\hat{X}) \le 4 d(C',\hat{X})$ for every clustering $C'$ over $T$. So we have
\begin{align*}
	d(\mathcal{A}^*,\bar{T})  \le d(\mathcal{A}^*,\hat{X}) + d(\hat{X},\bar{T}) 
	\le d(\mathcal{A}^*,\hat{X}) + d(T^*,\bar{T}) + \frac{1}{\poly(\card{T})}
	\le d(T^*,\bar{T}) + 4d(T^*,\hat{X})+\frac{1}{\poly(\card{T})},
\end{align*}
where the first inequality holds by triangle inequality, the second inequality holds since $\hat{X}$ is a good solution to SDP$(\bar{T})$, and $T^*$ is a feasible solution to SDP$(\bar{T})$. It remains to upper bound $d(T^*,\hat{X})$. It can be checked easily that 
\begin{align*}
	\E 
	\hat E = (1-2p) T^*.
\end{align*}
By Claim~\ref{cl p51}, we know 
\begin{align*}
	d(T^*,\hat{X}) & = \frac{1}{2} \left(\hat{T}^* \cdot \hat{X} - \hat{T}^* \cdot T^*\right) = \frac{1}{2(1-2p)} \left(\E \hat E \cdot \hat{X} - \E \hat E \cdot T^*\right) \\ 
	& = \frac{1}{2(1-2p)} \left( \E \hat E \cdot \hat{X} - \hat E \cdot \hat{X} + \hat E \cdot \hat{X} - \hat E \cdot T^* + \hat E \cdot T^*  -\E \hat E \cdot T^*    \right) \\
	& \le \frac{1}{2(1-2p)} \left( \hat E \cdot \hat{X} - \hat E \cdot T^* +2c\card{T}^{\frac{3}{2}}    \right) \\
	& = \frac{1}{2(1-2p)} \left( \hat E \cdot \hat{X} - \hat E \cdot E + \hat E \cdot E  - \hat E \cdot T^* +2c\card{T}^{\frac{3}{2}}    \right) \\
	& = \frac{1}{2(1-2p)} \left(2 \left(d(\hat{X},E)-d(T^*,E)\right)+2c\card{T}^{\frac{3}{2}}\right) \\
	& \le \frac{1}{2(1-2p)} \left(2 \left(d(\hat{X},\bar{T})-d(V^*,\bar{T})\right)+2c\card{T}^{\frac{3}{2}}\right) \\
	& \le \frac{c'\card{T}^{\frac{3}{2}}}{(1-2p)}.
\end{align*}
Here, the first inequality follows by Claim~\ref{cl p52}, the last equality follows by Claim~\ref{cl p51} and the last inequality holds because $\hat{X}$ is a good solution. To see why the second last inequality holds, we suppose that the adversary gets the chance to give a wrong label of $e=(u,v)$ but chooses to give the correct label. Then we have 
\begin{align*}
	\abs{T^*_e - E_e} = \abs{T^*_e - \bar{T}_e} + 1, 
\end{align*}
while
\begin{align*}
	\abs{\hat{X}_e - E_e} \le \abs{\hat{X}_e - \bar{V}_e} + 1,
\end{align*}
because $\hat{X}_e \in [0,1]$. So we know for every point pair $e$,
\begin{align*}
	\abs{\hat{X}_e - E_e} - \abs{T^*_e - E_e} \le \abs{\hat{X}_e - \bar{T}_e} - \abs{T^*_e - \bar{T}_e}.
\end{align*}
By sum all these inequalities over point pair $e$, we get the second last inequality. So far, we have shown with probability at least $1-\delta-4\exp(-\card{T})$, $d(\mathcal{A}^*,\bar{T}) \le d(V^*,\bar{T})+O\left(\frac{\card{T}^{3/2}}{1-2p}\right)$. 
In particular, since we do not need to solve \eqref{pr SDP} exactly, $\tilde{T}=\textsc{ApproxCorrelationCluster}(T)$ can be obtained in polynomial time.

\subsection{Missing technical theorem} \label{apendix th miss}
In this section, we prove the following technical theorem, which will be used to prove Theorem~\ref{th one round}.

\begin{theorem}\label{th tool}
	Let $V$ be a set of points such that $\card{V}=ts,$ where $t,s>0$. Let $c>0,\epsilon \ge 0$ be two numbers such that $c\left(1-2p\right)^2s^2/2>\epsilon$. Under the semi-random model, with probability at least $1-\exp\left(ts\log ts - c\left(1-2p\right)^3s^2/8\right)$, for every clustering function $V'$ over $V$ such that $d(V',V^*) \ge c\left(1-2p\right)s^2$, we have 
	\begin{align*}
		d(\bar{V},V') > d(\bar{V},V^*)+\epsilon,
	\end{align*}
	where $\bar{V}$ is the binary function over $V$ corresponding to the results that we query every point pair of $V$ and $V^*$ is the underlying clustering function of $V$.
\end{theorem}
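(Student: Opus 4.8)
The plan is to fix a single clustering function $V'$ with $d(V',V^*) \ge c(1-2p)s^2$, bound the probability that it "beats" $V^*$ against the query results $\bar V$, and then take a union bound over all clustering functions on $V$, of which there are at most $(ts)^{ts}=\exp(ts\log ts)$ (each one is a partition of a set of size $ts$, and the number of partitions of $[n]$ is at most $n^n$ since a partition can be encoded by the function sending each element to the smallest element of its block).

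First I would rewrite $d(\bar V,V')-d(\bar V,V^*)$ in a convenient combinatorial form. Let $\Delta := \{\,e : V'(e)\neq V^*(e)\,\}$, so that $\card{\Delta}=d(V',V^*)\ge c(1-2p)s^2$. For $e\notin\Delta$ the two clustering functions agree, contributing $0$ to the difference; for $e\in\Delta$, since all values lie in $\{0,1\}$, exactly one of $V'(e),V^*(e)$ equals $\bar V(e)$, contributing $+1$ if $\bar V(e)=V^*(e)$ and $-1$ if $\bar V(e)\neq V^*(e)$. Hence, setting $Y:=\card{\{e\in\Delta : \bar V(e)\neq V^*(e)\}}$,
\[
d(\bar V,V')-d(\bar V,V^*) \;=\; \card{\Delta}-2Y .
\]
The key observation is that $\bar V(e)\neq V^*(e)$ can only occur when the pair $e$ is corrupted (uncorrupted pairs always receive the true label), so $Y\le X:=\card{\{e\in\Delta : e\text{ is corrupted}\}}$ \emph{deterministically}, regardless of what the adaptive adversary does. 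Since corruption happens independently with probability $p$ on each pair, $X$ is a sum of $\card{\Delta}$ i.i.d.\ Bernoulli$(p)$ variables with $\E X=p\card{\Delta}$, and this distribution does not depend on the adversary at all.

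Next I would show $X<(\card{\Delta}-\epsilon)/2$ with high probability, which by the display forces $d(\bar V,V')>d(\bar V,V^*)+\epsilon$. The deviation from the mean we need is $(\card{\Delta}-\epsilon)/2-p\card{\Delta} = (1-2p)\card{\Delta}/2-\epsilon/2$; using $\card{\Delta}\ge c(1-2p)s^2$ together with the hypothesis $\epsilon< c(1-2p)^2 s^2/2 \le (1-2p)\card{\Delta}/2$, this deviation exceeds $(1-2p)\card{\Delta}/4$. Hoeffding's inequality then gives
\[
\Pr\!\left[X\ge \tfrac{\card{\Delta}-\epsilon}{2}\right] \;\le\; \exp\!\left(-\tfrac{2\bigl((1-2p)\card{\Delta}/4\bigr)^2}{\card{\Delta}}\right) \;=\; \exp\!\left(-\tfrac{(1-2p)^2\card{\Delta}}{8}\right) \;\le\; \exp\!\left(-\tfrac{c(1-2p)^3 s^2}{8}\right),
\]
the last step again by $\card{\Delta}\ge c(1-2p)s^2$. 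A union bound over the at most $\exp(ts\log ts)$ clustering functions then yields the claimed failure probability $\exp\!\left(ts\log ts-c(1-2p)^3 s^2/8\right)$.

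I do not expect a serious obstacle; the argument is short once set up correctly. The one point requiring care — and the source of the robustness to the adaptive, parameter-unaware adversary — is the deterministic domination $Y\le X$: the adversary is confined to corrupted pairs, so it cannot manufacture disagreements with $V^*$ beyond the (adversary-independent) corrupted set, which lets us reduce everything to a clean Binomial tail. The remaining care is purely bookkeeping of constants so that the exponent matches $c(1-2p)^3 s^2/8$, and checking that the counting bound on clustering functions gives the $ts\log ts$ term.
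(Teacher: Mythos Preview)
Your proposal is correct and follows essentially the same argument as the paper. The paper phrases the decomposition via the sets $D^{\mathrm n}_{V'V^*},D^{\mathrm c}_{V'V^*}$ and the $\pm 1$ corruption indicators $x_e$, whereas you work with the counts $Y$ and $X$, but the identity $d(\bar V,V')-d(\bar V,V^*)=|\Delta|-2Y$, the deterministic domination $Y\le X$ (equivalently the paper's $|D^{\mathrm c}|-|D^{\mathrm n}|\ge\sum_e x_e$), the Hoeffding bound yielding the exponent $c(1-2p)^3s^2/8$, and the union bound over at most $(ts)^{ts}$ partitions are all identical in substance.
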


We first introduce the following notations to simplify the proof. Let $V',\tilde{V}$ be two clustering functions over $V$. We let 
$D_{V'\tilde{V}}:=\{(u,v) \mid V'(u,v) \neq \tilde{V}(u,v)\}$
be set of point pairs that are labeled differently by $V'$ and $\tilde{V}$. In particular, for every clustering function $V'$, we define $\Dn_{V'V^*}=\{(u,v) \in D_{V'V^*} \mid V^*(u,v) \neq \bar{V}(u,v) \}$ and $\Dc_{V'V^*}=\{(u,v) \in D_{V'V^*} \mid V^*(u,v) = \bar{V}(u,v) \}$.

To prove Theorem~\ref{th tool}, we first prove the following lemma. 
\begin{lemma}\label{lm error}
	Let $V=[n]$ be a set of points. Let $V^*$ be the underlying clustering function of $V$. Let $\bar{V}$ be the binary function corresponding to the results that we query all point pairs of $V$. Let $V'$ be a clustering function over $V$. Then 
	\begin{align*}
		d(\bar{V},V^*) - d(\bar{V},V') = \card{\Dn_{V'V^*}}-\card{\Dc_{V'V^*}}
	\end{align*}
\end{lemma}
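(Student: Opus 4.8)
The plan is to prove this as a termwise bookkeeping identity. Expanding both distances over unordered point pairs, I would start from
$d(\bar V,V^*)-d(\bar V,V')=\sum_{1\le u\le v\le n}\left(\abs{\bar V(u,v)-V^*(u,v)}-\abs{\bar V(u,v)-V'(u,v)}\right)$.
The first step is to discard every pair on which $V^*$ and $V'$ agree: for such a pair the two absolute values are identical, so that term vanishes, and hence only pairs in $D_{V'V^*}$ can contribute.

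Next I would run a two-case analysis on an arbitrary pair $(u,v)\in D_{V'V^*}$, using crucially that all three functions take values in $\{0,1\}$. Since $V'(u,v)\neq V^*(u,v)$, we must have $V'(u,v)=1-V^*(u,v)$; consequently $V'(u,v)=\bar V(u,v)$ precisely when $V^*(u,v)\neq\bar V(u,v)$, and $V'(u,v)\neq\bar V(u,v)$ precisely when $V^*(u,v)=\bar V(u,v)$. Therefore, if $(u,v)\in\Dn_{V'V^*}$ then $\abs{\bar V(u,v)-V^*(u,v)}=1$ and $\abs{\bar V(u,v)-V'(u,v)}=0$, so the term equals $+1$; and if $(u,v)\in\Dc_{V'V^*}$ then $\abs{\bar V(u,v)-V^*(u,v)}=0$ and $\abs{\bar V(u,v)-V'(u,v)}=1$, so the term equals $-1$.

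Finally, since $D_{V'V^*}$ is the disjoint union of $\Dn_{V'V^*}$ and $\Dc_{V'V^*}$, summing these per-pair contributions yields $\card{\Dn_{V'V^*}}-\card{\Dc_{V'V^*}}$, which is exactly the claimed identity. This lemma is purely combinatorial, so I do not anticipate a genuine obstacle; the only step that needs any thought is the binary-valuedness observation that pins down, for each disagreeing pair, which of $V^*$ and $V'$ matches the (possibly corrupted) query answer $\bar V$, and it is this observation that produces the clean cancellation pattern of $+1$'s and $-1$'s.
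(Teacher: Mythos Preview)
Your proposal is correct and follows essentially the same approach as the paper's proof: both restrict the sum to pairs in $D_{V'V^*}$, perform the identical two-case analysis yielding contributions $+1$ for pairs in $\Dn_{V'V^*}$ and $-1$ for pairs in $\Dc_{V'V^*}$, and then sum. Your argument is slightly more explicit in spelling out why the binary-valuedness forces $V'(u,v)=1-V^*(u,v)$ on $D_{V'V^*}$, but this is the same reasoning the paper is implicitly using.
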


\begin{proof}
	Since $V^*(u,v) \neq V'(u,v)$ if and only if $(u,v) \in D_{V'V^*}$, we know 
	\begin{align*}
		d(\bar{V},V^*) - d(\bar{V},V') =  \sum_{(u,v) \in D_{V'V^*}} \left(\abs{\bar{V}(u,v)-V^*(u,v)} - \abs{\bar{V}(u,v)-V'(u,v)}\right).
	\end{align*}
	For every $(u,v) \in \Dn_{V'V^*}$, we have $\abs{\bar{V}(u,v)-V^*(u,v)} =1 $ and $\abs{\bar{V}(u,v)-V'(u,v)}=0$. On the other hand, for every $(u,v) \in \Dc_{V'V^*}$, we have $\abs{\bar{V}(u,v)-V^*(u,v)} =0 $ and $\abs{\bar{V}(u,v)-V'(u,v)}=1$.  Thus, we have
	\begin{align*}
		d(\bar{V},V^*) - d(\bar{V},V') = \card{\Dn_{V'V^*}}-\card{\Dc_{V'V^*}}.
	\end{align*}
\end{proof}

Now we use Lemma~\ref{lm error} to prove Theorem~\ref{th tool}.

\begin{proof} \textbf{of Theorem~\ref{th tool}} \ 
	We first fix a clustering function $V'$ over $V$ such that $d(V',V^*) \ge c\left(1-2p\right)s^2.$ We first show that with high probability, $d(\bar{V},V') > d(\bar{V},V^*)+\epsilon$. For every point pair $(u,v)$, we define random variable
	\begin{align*}
		x_{uv} = \begin{cases}
			& 1 \ \text{if $(u,v)$ is not corrupted} \\
			& -1 \ \text{otherwise}.
		\end{cases}
	\end{align*}
	We observe that for every realization of $\bar V$, we always have
	\begin{align}\label{eq corrupt}
		\card{\Dc_{V'V^*}} -\card{\Dn_{V'V^*}} \ge \sum_{e \in D_{V'V^*}}x_e.
	\end{align}
	This is because if an adversary gets a chance to output a wrong label of $e$, but does not do that, $\card{\Dn_{V'V^*}}$ will increase by $1$, while $\card{\Dc_{V'V^*}}$ will decrease by $1$.
	
	In expectation, we have 
	\begin{align}\label{eq expectation}
		\E \sum_{e \in D_{V'V^*}}x_e = \left(1-2p\right)\card{D_{V'V^*}} = \left(1-2p\right)d(V',V^*) \ge c\left(1-2p\right)^2s^2>2\epsilon.
	\end{align}
	Thus, we have 
	\begin{align*}
		\Pr\left( \card{\Dc_{V'V^*}} -\card{\Dn_{V'V^*}} \le \epsilon\right) & \le \Pr\left(\sum_{e \in D_{V'V^*}}x_e \le \epsilon\right)  \\
		& \le \Pr\left(\sum_{e \in D_{V'V^*}}x_e \le 
		\frac{\E \sum_{e \in D_{V'V^*}}x_e}{2}\right) \\
		& \le \exp\left(- \frac{\left(1-2p\right)^2 d(V',V^*)^2}{8 d(V',V^*)} \right) \\
		& \le \exp \left(-\frac{c\left(1-2p\right)^3s^2}{8}\right).
	\end{align*}
	Here, the first inequality follows by \eqref{eq corrupt}, the second inequality follows by \eqref{eq expectation}, the third inequality follows by the Hoeffding's inequality and in the last inequality, we use the assumption that $d(V',V^*) \ge c\left(1-2p\right)s^2.$ By Lemma~\ref{lm error}, we know that with probability most $\exp\left(c\left(1-2p\right)^3s^2/8\right)$,
	\begin{align*}
		d(\bar{V},V') = d(\bar{V},V^*) + \card{\Dc_{V'V^*}} -\card{\Dn_{V'V^*}} \le d(\bar{V},V^*) + \epsilon.
	\end{align*}
	
	Since the number of clustering function over $V$ is at most $(ts)^ {ts}$, we know that 
	\begin{align*}
		\Pr \left( \exists V', d(V',V^*) \ge c\left(1-2p\right)s^2, d(\bar{V},V') \le d(\bar{V},V^*) + \epsilon \right) & \le (ts)^ {ts} \exp \left(-\frac{c\left(1-2p\right)^3s^2}{8}\right) \\ 
		& = \exp \left(ts \log ts -\frac{c\left(1-2p\right)^3s^2}{8}\right).
	\end{align*}
	
	Thus, with probability at least $1-\exp\left(ts\log ts - c\left(1-2p\right)^3s^2/8\right)$, for every clustering function $V'$ over $V$ such that $d(V',V^*) \ge c\left(1-2p\right)s^2$, we have 
	\begin{align*}
		d(\bar{V},V') > d(\bar{V},V^*)+\epsilon.
	\end{align*}

\end{proof}

\subsection{Proof of Theorem~\ref{th one round}} \label{apendix th6}
The key part of the proof of Theorem~\ref{th one round}, is to show the following three claims.

\begin{claim}\label{cl pf61}
	In Algorithm~\ref{alg one round}, if there is some $i \in [k]$ such that $\card{T_i^*}>s_t$, but $h=0$, then $d(\tilde{T},T^*) > s^2_t/8$.
\end{claim}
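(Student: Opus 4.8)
The plan is to argue by contradiction about the structure of $\tilde T$. Suppose $\card{T_i^*} > s_t$ for some $i$, but $h = 0$, meaning every cluster $\tilde T_j$ in the partition $\tilde T$ has size at most $s_t/2$. I want to lower bound $d(\tilde T, T^*)$, i.e. the number of pairs $(u,v)$ labeled differently by $\tilde T$ and $T^*$. The natural quantity to look at is the set $T_i^*$ itself: it is a clique of size $> s_t$ in $T^*$ (all $\binom{\card{T_i^*}}{2}$ internal pairs are labeled $1$), but since every cluster of $\tilde T$ has size $\le s_t/2$, the points of $T_i^*$ must be split across at least three clusters of $\tilde T$. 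So a large fraction of the internal pairs of $T_i^*$ get labeled $0$ by $\tilde T$, and each such pair contributes $1$ to $d(\tilde T, T^*)$.

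The key step is a clean counting lemma: if a set $W$ with $\card{W} = m$ is partitioned into parts each of size at most $m/2$, then the number of pairs of $W$ that fall into different parts is at least $m^2/8$ (or some explicit constant times $m^2$). The standard way to see this: let the part sizes be $a_1 \ge a_2 \ge \dots$; the number of "cross" pairs is $\binom{m}{2} - \sum_j \binom{a_j}{2} \ge \tfrac12\big(m^2 - \sum_j a_j^2\big) - \tfrac{m}{2}$, and since $a_1 \le m/2$ we have $\sum_j a_j^2 \le a_1 \sum_j a_j \le (m/2)\cdot m = m^2/2$, giving at least $m^2/4 - m/2 \ge m^2/8$ cross pairs once $m$ is large enough (which holds since $m > s_t = \Omega(\log n /(1-2p)^6)$). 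Applying this with $W = T_i^*$ and $m = \card{T_i^*} > s_t$ yields at least $s_t^2/8$ pairs inside $T_i^*$ that $\tilde T$ labels $0$ while $T^*$ labels $1$; hence $d(\tilde T, T^*) \ge \card{T_i^*}^2/8 > s_t^2/8$.

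I expect the only mildly delicate point to be getting the constant exactly right — specifically handling the lower-order $-m/2$ term and confirming the threshold on $m$ needed so that $m^2/4 - m/2 \ge m^2/8$, which is immediate for $m \ge 4$ and certainly holds here. One should also double-check the direction of the inequality: we only need a \emph{lower} bound on $d(\tilde T, T^*)$, and restricting attention to pairs entirely within $T_i^*$ can only undercount, so the bound is valid regardless of how $\tilde T$ treats pairs touching $V \setminus T_i^*$. No probabilistic argument is needed for this claim — it is a purely combinatorial consequence of the size constraint $h = 0$ — so the real work (invoking Theorem~\ref{th tool} and Theorem~\ref{th app} to derive the contradiction) happens in the downstream proof of Theorem~\ref{th one round}, not here.
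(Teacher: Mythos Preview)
Your proof is correct and actually cleaner than the paper's. Both arguments restrict attention to pairs inside the large true cluster $T_i^*$ and lower-bound the number of ``cross'' pairs when $T_i^*$ is partitioned into pieces of size at most $s_t/2$. The paper formalizes this as a quadratic program $\min \sum_{i<j} x_i x_j$ subject to $\sum_i x_i \ge s_t$ and $1 \le x_i \le s_t/2$, and proves the optimum is at least $s_t^2/8$ by induction on the number of parts (merging two small parts in one case, and using the existence of a part of size $> s_t/4$ in the other). Your route is a one-line sum-of-squares bound: with $m = \card{T_i^*}$ and part sizes $a_j \le s_t/2 < m/2$, you use $\sum_j a_j^2 \le (\max_j a_j)\sum_j a_j \le m^2/2$ to conclude directly. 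In fact your computation can be sharpened: since $\sum_j a_j = m$ exactly, the $-m/2$ term cancels and the number of cross pairs equals $\tfrac12(m^2 - \sum_j a_j^2) \ge m^2/4 > s_t^2/4$, so you even beat the stated constant without needing the $m \ge 4$ threshold. The paper's inductive QP argument buys nothing extra here; your elementary inequality is both shorter and gives a tighter bound.
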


\begin{cpf}
	Without loss of generality, we assume that $\card{T_1^*} > s_t$. We denote by $A_i:= \tilde{T_i} \cap T_1^*.$ Without loss of generality, we can assume $A_i \neq \emptyset$ if and only if $i \in [\ell]$, where $\ell$ is a positive integer. We say $\tilde{T}$ makes a negative mistake on $(u,v)$ if $T^*(u,v) = 1$ and $\tilde{T}(u,v)=0$. It is easy to see that the number of negative mistakes made by $\tilde{T_i}$ over $T_1^* \times T_1^*$ is 
	\begin{align*}
		\sum_{i=1}^{\ell}\sum_{j=i+1}^{\ell} \card{A_i}\card{A_j}.
	\end{align*}
	Since $h=0$, for every $i \in [\ell]$, $\card{A_i} \le s_t/2$.
	To lower bound the number of negative mistakes, we consider the following family of quadratic programming problems, parameterized by $\ell$.
	\begin{align}
		\label{pr QP}
		\tag{QP($\ell$)}
		\begin{split}
			\min \ & \sum_{i=1}^{\ell}\sum_{j=i+1}^{\ell} x_ix_j \\
			\stt \ & \sum_{i=1}^{\ell}x_i \ge s_t \\
			\ & 1 \le x_i \le \frac{s_t}{2} \quad \forall i \in [\ell] .\\
		\end{split}
	\end{align}
	Clearly every choice of $\{A_i\}_{i \in [\ell]}$ is corresponding to a feasible solution to \ref{pr QP}. Thus, we will show that for every $\ell \ge 2$, the optimal value of QP$(\ell)$ is at least $s^2_t/8$. 
	
	We prove this by induction. For the base case, it is easy to check the optimal value of QP$(2)$ is $s^2_t/4$. Now suppose that the optimal value of QP$(\ell)$ is at least $s^2_t/8$, we show this also correct for $\ell+1$. Let $x = \left(x_1,\dots,x_{\ell+1}\right)$ be a feasible solution to QP$(\ell+1)$. We consider two cases. 
	
	In the first case, there exist $i,j \in [\ell+1]$, such that $y=x_i+x_j \le s_t/2$. Without loss of generality, we can assume that $i=\ell,j=\ell+1$. Then we know that $x'=\left(x_1,\dots,x_{\ell-1},y\right)$ is a feasible solution to QP$(\ell)$. It can be checked that the objective value of $x$ is at least that of $x'$ and thus at least $s^2_t/8$.
	
	In the second case, for every $i,j \in [\ell+1]$, $x_i+x_j>s_t/2$. So we know there is some $i \in [\ell+1]$ such that $s_t/4<x_i \le s_t/2$. This implies the objective value of $x$ is at least $s^2_t/8$.
	
	Thus, by induction the number of negative mistakes is at least $s^2_t/8$. So we know 
	\begin{align*}
		d(\tilde{T},T^*) \ge \frac{s^2_t}{8},
	\end{align*}
	as long as $h=0$.
\end{cpf}

\begin{claim}\label{cl pf62}
	In Algorithm~\ref{alg one round}, if there is some $i \in [k]$ such that $\card{\tilde{T_i}}>s_t/2$ and $\tilde{T_i}$ is an $\eta$-bad set, where $\eta=1/4+p/2$, then $d(\tilde{T},T^*)>(1-2p)s_t^2/64$.
\end{claim}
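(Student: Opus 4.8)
The plan is to lower-bound $d(\tilde{T},T^*)$ by counting only the ``positive mistakes'' that $\tilde{T}$ makes \emph{inside} the single cluster $\tilde{T_i}$, i.e., pairs $(u,v)$ with $u,v\in\tilde{T_i}$ that lie in different underlying clusters. For such a pair, $\tilde{T}(u,v)=1$ while $T^*(u,v)=0$, so each of them contributes $1$ to $d(\tilde{T},T^*)$. Writing $m=\card{\tilde{T_i}}$ and $B_j=\tilde{T_i}\cap V^*_j$ for $j\in[k]$, the number of such pairs is exactly $\frac12\big(m^2-\sum_{j\in[k]}\card{B_j}^2\big)$, so it suffices to upper bound $\sum_{j}\card{B_j}^2$.

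First I would use the hypothesis that $\tilde{T_i}$ is $\eta$-bad with $\eta=\frac14+\frac p2$: by definition of an $\eta$-bad set, $\card{B_j}<(\frac12+\eta)m=(\frac34+\frac p2)m$ for every $j$. Next I would argue that, subject to $\sum_j\card{B_j}=m$ and $\card{B_j}\le\alpha m$ with $\alpha:=\frac34+\frac p2\in(\frac34,1)$, the quantity $\sum_j\card{B_j}^2$ is maximized when the mass is as concentrated as possible; since $\alpha>\frac12$ this forces at most one part of size $\alpha m$ and one remaining part of size $(1-\alpha)m$, giving $\sum_j\card{B_j}^2\le(\alpha^2+(1-\alpha)^2)m^2$. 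This is the same flavor of extremal combinatorial estimate as the quadratic program QP$(\ell)$ used in Claim~\ref{cl pf61}, and can be justified either by the same induction on the number of nonempty parts or by noting that a convex function on a polytope attains its maximum at a vertex.

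Plugging in, the number of positive mistakes is at least $\frac12\big(1-\alpha^2-(1-\alpha)^2\big)m^2=\alpha(1-\alpha)m^2$, and with $\alpha=\frac34+\frac p2$ we have $1-\alpha=\frac{1-2p}{4}$ and $\alpha\ge\frac34$, hence $\alpha(1-\alpha)m^2\ge\frac{3(1-2p)}{16}m^2$. Since $m>s_t/2$, this is strictly larger than $\frac{3(1-2p)}{16}\cdot\frac{s_t^2}{4}=\frac{3(1-2p)s_t^2}{64}>\frac{(1-2p)s_t^2}{64}$, which yields $d(\tilde{T},T^*)>(1-2p)s_t^2/64$ as claimed.

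The only slightly delicate point is the extremal estimate for $\sum_j\card{B_j}^2$; everything else is bookkeeping with the distance function and the definition of an $\eta$-bad set. I expect the cleanest write-up to reuse the induction already carried out for QP$(\ell)$ rather than invoking a vertex-maximization principle, so that the argument stays self-contained.
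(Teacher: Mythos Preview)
Your argument is correct. The extremal step is sound: maximizing the convex function $\sum_j x_j^2$ over the polytope $\{x:\sum_j x_j=m,\ 0\le x_j\le\alpha m\}$ with $\alpha>\tfrac12$ is attained at a vertex, and any vertex has exactly one coordinate at $\alpha m$ and one at $(1-\alpha)m$, the rest zero; so $\sum_j\card{B_j}^2\le(\alpha^2+(1-\alpha)^2)m^2$ and the rest of your arithmetic goes through, in fact giving the sharper constant $3(1-2p)s_t^2/64$.

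The paper proceeds differently. Instead of summing all cross-cluster pairs inside $\tilde{T_i}$, it picks a single cut $(S,\tilde{T_i}\setminus S)$ and counts only those pairs, splitting into two cases. If every $\card{\tilde{T_i}\cap T^*_j}\le m/4$, it greedily groups whole pieces $\tilde{T_i}\cap T^*_j$ into $S$ until $m/4\le\card S\le m/2$, giving at least $m^2/8>s_t^2/32$ mistakes. Otherwise some piece exceeds $m/4$; taking $S$ to be that piece and using $\eta$-badness to bound $\card{\tilde{T_i}\setminus S}\ge(1-2p)m/4$ gives at least $(1-2p)m^2/16>(1-2p)s_t^2/64$. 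Your approach is cleaner and avoids the case split (and even improves the constant), while the paper's is more elementary in that it needs no convex-optimization principle and stays in the cut-counting language used in the neighbouring claims.
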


\begin{cpf}
	We consider separately two cases.
	In the first case, we assume that for every $j \in [k]$, $\card{\tilde{T_i} \cap T^*_j} \le \card{\tilde{T_i}}/4$. Let $S: = \cup_{j=1}^{i^*} \tilde{T_i} \cap T^*_j$, where $i^*$ is the largest index such that $\card{S} \le \card{\tilde{T_i}}/2$. Thus we know $\card{\tilde{T_1}\setminus S} \ge \card{\tilde{T_i}}/2$. By the choice of $i^*$, we know that $\card{S} \ge \card{\tilde{T_i}}/4$.  So every point pair $(u,v)$ such that $u \in S$ and $v \in \tilde{T_i} \setminus S$ is labeled $1$ by $\tilde{T}$ but labeled $0$ by $T^*$. The total number of such point pairs is at least $\card{\tilde{T_i}}^2/8>s_t^2/32.$
	
	In the second case, we assume that there is some $j \in [k]$ such that $\card{\tilde{T_i} \cap T^*_j} > \card{\tilde{T_i}}/4$. We know $\card{\tilde{T_i} \setminus T^*_j } \ge (1-2p)\card{\tilde{T_i}}/4$, since $\tilde{T_i}$ is an $\eta$-bad set. We notice that every point pair $(u,v)$ such that $u \in \tilde{T_i} \cap T^*_j$ and $v \in \tilde{T_i} \setminus T^*_j $ is labeled $1$ by $\tilde{T}$ but labeled $0$ by $T^*$. The total number of such point pairs is at least $(1-2p)\card{\tilde{T_i}^2}/16>(1-2p)s_t^2/64.$
\end{cpf}

\begin{claim}\label{cl pf63}
	In Algorithm~\ref{alg one round}, if there is some $i,j,\ell \in [k]$, and $i \neq j$ such that $\card{\tilde{T_i}},\card{\tilde{T_j}}>s_t/2$ and $\tilde{T_i},\tilde{T_j}$ are both $(\eta,V^*_\ell)$-biased sets, then $d(\tilde{T},T^*)>(1-2p)s_t^2/16$.
\end{claim}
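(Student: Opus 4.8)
The plan is to lower bound $d(\tilde{T},T^*)$ by counting pairs on which $\tilde{T}$ and $T^*$ disagree, using only the two clusters $\tilde{T_i}$ and $\tilde{T_j}$ and the hypothesis that both are biased toward the \emph{same} true cluster $V^*_\ell$. The starting observation is that $\{\tilde{T_1},\dots\}$ is a partition of $T$, so $\tilde{T_i}$ and $\tilde{T_j}$ are disjoint and $\tilde{T}(u,v)=0$ for every $u\in\tilde{T_i}$, $v\in\tilde{T_j}$. If moreover $u\in\tilde{T_i}\cap V^*_\ell$ and $v\in\tilde{T_j}\cap V^*_\ell$, then $T^*(u,v)=1$, so each such pair contributes $1$ to $d(\tilde{T},T^*)$, and — since the two sets are disjoint — no pair is counted twice.

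Next I would estimate the sizes of the biased intersections. By Definition~\ref{def eta} with $\eta=\tfrac14+\tfrac p2$, we have $\card{\tilde{T_i}\cap V^*_\ell}\ge(\tfrac12+\eta)\card{\tilde{T_i}}=(\tfrac34+\tfrac p2)\card{\tilde{T_i}}\ge\tfrac34\card{\tilde{T_i}}$, and since $\card{\tilde{T_i}}>s_t/2$ this yields $\card{\tilde{T_i}\cap V^*_\ell}>3s_t/8$; the identical bound holds for $\tilde{T_j}$. Counting the pairs described above, which have exactly one endpoint in each set, gives $d(\tilde{T},T^*)\ge\card{\tilde{T_i}\cap V^*_\ell}\cdot\card{\tilde{T_j}\cap V^*_\ell}>(3s_t/8)^2=9s_t^2/64$. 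Finally, since $p\in(0,1/2)$ we have $1-2p<1$, hence $(1-2p)s_t^2/16=4(1-2p)s_t^2/64<9s_t^2/64$, which establishes $d(\tilde{T},T^*)>(1-2p)s_t^2/16$.

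There is essentially no obstacle in this claim: it is a direct counting estimate, and the crude constant $9/64$ comfortably beats $(1-2p)/16$, so the $p/2$ slack in the bias is not even needed. The only points requiring a moment of care are the disjointness of $\tilde{T_i}$ and $\tilde{T_j}$ (so that the counted pairs are distinct) and the bookkeeping of which side gets the label $0$ versus $1$. This claim, together with Claims~\ref{cl pf61} and~\ref{cl pf62}, will be combined in the proof of Theorem~\ref{th one round}: whenever one of the three desired structural properties of $\tilde{T}$ fails, the corresponding claim forces $d(\tilde{T},T^*)$ to be large, contradicting the additive-error guarantee $d(\tilde{T},\bar{T})\le d(T^*,\bar{T})+O(\card{T}^{3/2}/(1-2p))$ of Theorem~\ref{th app} once $\card{T}=ts_t$ with $s_t=\Theta(t^3\log n/(1-2p)^6)$ is plugged in.
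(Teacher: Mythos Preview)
Your proof is correct and follows essentially the same approach as the paper: both count the disagreement pairs $(u,v)$ with $u\in\tilde T_i\cap V^*_\ell$ and $v\in\tilde T_j\cap V^*_\ell$, which are labeled $0$ by $\tilde T$ but $1$ by $T^*$. The only cosmetic difference is that the paper uses the cruder bound $\card{\tilde T_i\cap V^*_\ell}>\card{\tilde T_i}/2>s_t/4$ to get $s_t^2/16$ disagreements, whereas you plug in the explicit value of $\eta$ to obtain the sharper $9s_t^2/64$; either constant suffices.
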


\begin{cpf}
	We notice that for every point pair $(u,v)$ such that $u \in \tilde{T_i} \cap V^*_\ell$ and $v \in \tilde{T_j} \cap V^*_\ell$, $(u,v)$ is labeled $0$ by $\tilde{T}$ but is labeled $1$ by $T^*$. The total number of such point pairs is at least $s_t^2/16$, since $\tilde{T_i},\tilde{T_j}$ are both $(\eta,V^*_\ell)$-biased sets and $\card{\tilde{T_i}},\card{\tilde{T_j}}>s_t/2$. Thus, we have $d(\tilde{T},T^*)>(1-2p)s_t^2/16$.
\end{cpf}

Now we are able to use the above claims to prove Theorem~\ref{th one round}.

\begin{proof}
	We first apply Theorem~\ref{th tool} on the sample set $T$ with $s=s_t,\epsilon = c_1 \left(ts_t\right)^{\frac{3}{2}}/\left(1-2p\right)$ and $c = 1/64$, where $c_1$ is a constant that satisfies Theorem~\ref{th app}.   We first show that the choice of parameter satisfies the statement of Theorem~\ref{th tool}. On the one hand, we have 
	\begin{align*}
		\epsilon = \frac{c_1 \left(ts_t\right)^{\frac{3}{2}}}{\left(1-2p\right)} = \frac{c_1 (c')^\frac{3}{2}t^6 \log^{\frac{3}{2}} n}{\left(1-2p\right)^{10}}.
	\end{align*}
	On the other hand, we have
	\begin{align*}
		\frac{c(1-2p)^2s_t^2}{2}=	\frac{\left(1-2p\right)^2s_t^2}{128} = \frac{(c')^2t^6 \log^2n}{128\left(1-2p\right)^{10}} >  \epsilon,
	\end{align*}
	because $c'$ is a large enough constant. So with probability at least
	\begin{align}\label{eq prob}
		1 - \exp \left(ts_t\log ts_t - \frac{\left(1-2p\right)^3s_t^2}{1024}\right) \ge 1- \exp \left(  -\left((c')^2-\frac{c'}{1024}\right)\frac{t^6 \log^{\frac{3}{2}}n}{\left(1-2p\right)^{9}} \right) \ge 1- \frac{1}{\poly(n)},
	\end{align}
	any clustering $T'$ such that $d(T',T^*)>(1-2p)s_t^2/64$ will satisfy
	\begin{align*}
		d(\bar{T},T') > d(\bar{T},T^*) + \frac{c_1 \left(ts_t\right)^{\frac{3}{2}}}{\left(1-2p\right)}.
	\end{align*}
	Here in \eqref{eq prob}, the first inequality follows by $\log ts_t \le \sqrt{ts_t}$ and the second inequality holds because $c'$ is a large enough constant.
	
	By Claim~\ref{cl pf61}, Claim~\ref{cl pf62} and Claim~\ref{cl pf63}, we know that if any one of the events in the statement of Theorem~\ref{th one round} does not happen, we will have $d(T',T^*)>(1-2p)s_t^2/64$. However,
	by Theorem~\ref{th app}, we know that with probability at least $1 -1/\poly(n)$, we have 
	\begin{align*}
		d(\bar{T},\tilde{T}) \le d(\bar{T},T^*) + \frac{c_1 \left(ts_t\right)^{\frac{3}{2}}}{\left(1-2p\right)}.
	\end{align*}
	This implies $d(\tilde{T},T^*) \le \left(1-2p\right)s^2_t/64$, with probability at least $1-1/\poly(n)$. And thus, the three events must happen together.
\end{proof}

%$\hfill\blacksquare$

\subsection{Proof of Theorem~\ref{th final}}\label{apendix th4}
%\begin{proof}

We first prove the correctness of Algorithm~\ref{alg clustering}. Let $C=\{\tilde{V_1},\dots,\tilde{V_\ell}\}$ be the output of Algorithm~\ref{alg clustering}. We first show each element in $C$ is an underlying cluster. We know each $\tilde{V_i}= \{v \in V \mid \text{Test}(v,B_i) = \text{``Yes"}\}$. Also, we know $B_i \subseteq \hat{T_i},$ where $\hat{T_i} \in \textsc{ApproxCorrelationCluster}(T,1/\poly(n))$ in a certain stage of the algorithm and $\card{\hat{T_i}} \ge s_t/2.$ According to Theorem~\ref{alg one round}, we know with probability at least $1-1/\poly(n)$, $\hat{T_i}$ is an $(\eta,V^*_i)$-biased set, where $\eta=\frac{1}{4}+\frac{p}{2}$. By Hoeffding's inequality, by setting $\eta'=\frac{p+1}{3}$, we know 
\begin{align*}
	\Pr \left(B_i \text{ is not an }(\eta',V^*_i)\text{-biased set}\right) \le \exp \left(-2\card{B_i}\left(\frac{1-2p}{12}\right)^2\right) \le 1/\poly(n).
\end{align*}
So with probability at least $1-1/\poly(n)$, $B_i$ is an $(\eta',V^*_i)$-biased set. According to Lemma~\ref{lm test}, with probability at least $1-1/\poly(n)$, we have 
\begin{align*}
	\tilde{V_i}= \{v \in V \mid \text{Test}(v,B_i) = \text{``Yes"}\}=V^*_i \cap V = V^*_i.
\end{align*}
Here the last equality follows by the fact that $V^*_i \subseteq V$ at the time when $B_i$ is created. This is because no point in $V^*_i$ is put into other underlying clusters before $B_i$ is created. So each element in $C$ is an underlying cluster.

It remains to show every underlying cluster of size $\Omega\left(\frac{k^4\log n}{(1-2p)^6}\right)$ must be recovered with high probability. Suppose there is some underlying cluster $V^*_i$ such that $\card{V^*_i} \ge 2ks_{2k}=\Omega\left(\frac{k^4\log n}{(1-2p)^6}\right)$ not recovered by Algorithm~\ref{alg clustering}. 
Then at the end of the algorithm, $V^*_i \subseteq V$ and $\card{V}\ge 2ks_{2k}$.
Assume $\card{C} = h < k$. Then as long as $k-h \le t < 2k$, the sampled set $T$ of size $ts_{t}$ must contain some underlying cluster of size $\frac{t}{h-t}s_t \ge s_t$. By Theorem~\ref{th one round}, with probability at least $1-1/\poly(n)$, we will update $C$ again. However, at this time the algorithm has terminated. This gives a contradiction.
So every underlying cluster of size at least $O\left(\frac{k^4\log n}{(1-2p)^6}\right)$ must be recovered by Algorithm~\ref{th final}.

We next prove the sample complexity of Algorithm~\ref{alg clustering}. 
To show this, we first show that every time we invoke Algorithm~\ref{alg one round}, we must have parameter $t < 2k$. Suppose $t \ge 2k$, we know that $t/2 \ge k$. Since $T$ is partitioned into at most $k$ underlying clusters, we know that there must be at least one $i \in [k]$ such that $\card{T^*_i} \ge ts_{t/2}/2k \ge s_{t/2}$. According to Theorem~\ref{th one round}, with probability at least $1-1/\poly(n)$, $h>0$. In this case, we will not invoke Algorithm~\ref{alg one round} after updating $t/2$ by $t$. This implies every time we invoke Algorithm~\ref{alg one round}, we query $O\left(\card{T}^2\right)=O\left(\frac{k^8 \log^2n}{\left(1-2p\right)^{12}}\right)$ times and in each round we will call Algorithm~\ref{alg one round} $O(\log k)$ times. Since there are at most $k$ rounds, the number of queries we spend on Algorithm~\ref{alg one round} is $O\left( \frac{k^9\log k\log^2n}{\left(1-2p\right)^{12}}   \right)$. 

Next, we see each time we update $C$, we invoke Algorithm~\ref{alg test} at most $n$ times and each time we query $O\left(\frac{\log n}{\left(1-2p\right)^2}\right)$ times. Since we update $C$ at most $k$ times, the number of queries we spend on updating $C$ is $O\left(\frac{nk\log n}{(1-2p)^2}\right)$. So the query complexity of Algorithm~\ref{alg clustering} is 
$O\left( \frac{nk\log n}{\left(1-2p\right)^2} + \frac{k^9\log k \log^2n}{\left(1-2p\right)^{12}} \right)$.

$\hfill \square$

%\end{proof}

\section{Missing proof in Section~\ref{sec application}}

\subsection{Proof of Theorem~\ref{th fully}} \label{apendix th full}

%\begin{proof}

It is sufficient to show with high probability $\bar{p} \ge p$ and $(1-2\bar{p}) = O\left((1-2p)\right)$, since in this case $\bar p$ is an appropriate upper bound of $p$ and we can use Algorithm~\ref{alg clustering} to solve the problem. We can assume $(1-2p)^4 \ge \frac{10 \log n}{n}$, because if $(1-2p)^4 < \frac{10 \log n}{n}$, there is no underlying cluster of size $\Omega\left(
\frac{k^4\log n }{\left(1-2p\right)^6}  \right)$ and Theorem~\ref{th fully} holds naturally.

We first analyze the set $A$.
For a given point $v$, denote by $V^*_v$ the underlying cluster that $v$ belongs to. We first show for every sampled set $A$, either there are two points $u,v$ in the same underlying cluster or there are two points $u,v$ such that $\card{V^*_u}+\card{V^*_v} \le \frac{n}{4}$. For simplicity, we say a such a point pair is good.
Let $S$ be the set of points $w$ such that $\card{V^*_w} \le \frac{n}{8}$. We know there are at most 8 underlying clusters that have size more than $\frac{n}{8}$. We can without loss of generality assume they are $V^*_1, \dots, V^*_i$, $i \le 8$. Since $\card{A} = 9$, we know there must be two points in $S$ or in the same underlying cluster. In the first case, we have $\card{V^*_u}+\card{V^*_v} \le \frac{n}{4}$, according to the definition of $S$. So $A$ must contain a good pair.

Next, we show with probability at least $1-1/\poly(n)$, $\bar{p} \ge p$. Denote by $\delta = 1-2p$ and $\bar{\delta} = 1 - 2\bar{p}$. For every $u,v \in A$ such that $u,v$ in same underlying cluster, we have 
\begin{align*}
	\E \text{count}_{uv} = 2p(1-p)\card{V} = 2p(1-p)n = \frac{1-\delta^2}{2}n \le \left(\frac{1}{2}-\frac{\delta^2}{4}\right)n.
\end{align*}
On the other hand, for every $u,v \in A$ such that $u,v$ in different underlying clusters, but $\card{V^*_u}+\card{V^*_v} \le \frac{n}{4}$, we have
\begin{align*}
	\E \text{count}_{uv} = 2p(1-p)\card{V} + (1-2p)^2\left(\card{V^*_u}+\card{V^*_v}\right) \le \left(\frac{1}{2}-\frac{\delta^2}{4}\right)n.
\end{align*} 
This implies if point pair $(u,v)$ is good, then 
\begin{align*}
	\left(\frac{1}{2}-\frac{\delta^2}{2}\right)n \le 	\E \text{count}_{uv} \le \left(\frac{1}{2}-\frac{\delta^2}{4}\right)n.
\end{align*}
In particular, the lower bound holds for every $u,v \in A$.

Let $(u,v)$ be a point pair in $A$, by Hoeffding's inequality, we know that 
\begin{align*}
	\Pr \left( \text{count}_{uv} \le \frac{1-4\delta^2}{2} n \right) \le \exp\left(-2 \left(\frac{3\delta^2}{2}n\right)^2\frac{1}{n} \right) = \exp\left(-\frac{9}{2}\delta^4n\right).
\end{align*}
By union bound, we know that 
\begin{align*}
	\Pr\left(M \le \frac{1-4\delta^2}{2} n \right) \le \Pr\left( \exists u \neq v \in A, \text{count}_{uv} \le \frac{1-4\delta^2}{2} n \right) \le n\exp\left(-\frac{9}{2}\delta^4n\right) \le \frac{1}{n^{44}}, 
\end{align*}
where the last inequality follows by $\delta^4 \ge \frac{10 \log n}{n}$.

So with probability at least $1-1/n^{44}$, we have 
\begin{align*}
	\bar{p}: = \frac{1}{2} -\frac{1}{4}\sqrt{1-\frac{2M}{n}} >  \frac{1}{2} -\frac{1}{4}\sqrt{1-\frac{2}{n}\frac{1-4\delta^2}{2}n} = \frac{1}{2}\left(1-\delta\right) = p,
\end{align*} 
where the inequality follows by $M>\frac{1-4\delta^2}{2} \sqrt{n}$. Since $\bar{p}>p$, and we know that the fully-random model with parameter $p$ is a special case of the semi-random model with parameter $\bar{p}$, we know that with probability at least $1-1/\poly(n)$, Algorithm~\ref{alg clustering} will recover all clusters of size $\Omega\left(
\frac{k^4\log n }{\left(1-2\bar{p}\right)^6}  \right)$ and the query complexity is  $O\left(\frac{nk\log n}{\left(1-2\bar{p}\right)^2} + \frac{k^9\log k\log^2n}{\left(1-2\bar{p}\right)^{12}} \right)$.

And it remains to show $\bar{p}$
is not too larger than $p$, so that we get the correct query complexity.
We will show that with probability at least $1-1/\poly(n)$, we have $\bar{\delta}>\delta/4$, which implies that $1/\left(1-2\bar{p}\right) \le 4/\left(1-2p\right)$. Let $u,v$ be a good point pair in $A$. We have
\begin{align*}
	\Pr\left(M  \ge \frac{1-\frac{1}{4}\delta^2}{2}n\right) \le \Pr\left(\text{count}_{uv}\ge \frac{1-\frac{1}{4}\delta^2}{2}n \right) \le \exp \left(-2\left(\frac{1}{8}\delta^2n\right)^2\frac{1}{n}\right) = \exp\left(-\frac{1}{32}\delta^4n\right). 
\end{align*}
Thus, with high probability we have
\begin{align*}
	\bar{\delta} = \frac{1}{2} \sqrt{1 - \frac{2M}{n}} > \frac{1}{2} \sqrt{1 - \frac{2}{n} \frac{1-\frac{1}{4}\delta^2}{2}n } = \frac{\delta}{4}.
\end{align*}

$\hfill \square$

%\end{proof}

\end{document}